
\documentclass{article}

\usepackage{microtype}
\usepackage{graphicx}
\usepackage{subfigure}
\usepackage{booktabs} 
\usepackage{hyperref}
\usepackage{enumitem}



\usepackage[accepted]{icml2024}

\usepackage{amsmath}
\usepackage{amssymb}
\usepackage{mathtools}
\usepackage{amsthm}
\usepackage[capitalize,noabbrev]{cleveref}

\theoremstyle{plain}
\newtheorem{theorem}{Theorem}[section]
\newtheorem{proposition}[theorem]{Proposition}
\newtheorem{lemma}[theorem]{Lemma}
\newtheorem{corollary}[theorem]{Corollary}
\theoremstyle{definition}
\newtheorem{definition}[theorem]{Definition}
\newtheorem{assumption}[theorem]{Assumption}
\theoremstyle{remark}
\newtheorem{remark}[theorem]{Remark}

\usepackage[textsize=tiny]{todonotes}

\newcommand{\rb}[1]{\left(#1\right)} 
\newcommand{\bb}[1]{\left[#1\right]} 
\newcommand{\cb}[1]{\left\{#1\right\}} 
\newcommand{\norm}[1]{\left\Vert#1\right\Vert}
\newcommand{\abs}[1]{{\left\lvert{#1}\right\rvert}}
\newcommand{\vect}[1]{\mathbf{#1}} 
\newcommand{\R}{\mathbb{R}} 
\newcommand{\calD}{\mathcal{D}}
\newcommand{\calP}{\mathcal{P}}
\newcommand{\calS}{\mathcal{S}}

\newcommand{\calH}{\mathcal{H}}
\newcommand{\calL}{\mathcal{L}}

\newcommand{\bbP}{\mathbb{P}}

\newcommand{\bbN}{\mathbb{N}}
\newcommand{\GaCtext}{G\&C}
\newcommand{\exError}[1]{\mathcal{L}_\calD(#1)}
\newcommand{\emError}[1]{\mathcal{L}_\mathcal{S}(#1)}

\DeclareMathOperator*{\argmin}{\arg\!\min}

\DeclareMathOperator{\sign}{sign}
\newcommand{\btheta}{\boldsymbol{\theta}}
\newcommand{\bgamma}{\boldsymbol{\gamma}}
\newcommand{\teacher}{h_{\btheta^{\star}}}
\newcommand{\student}{h_{\btheta}}

\newcommand{\fcnhid}[1]{f_{\mathrm{FC}}^{\left(#1\right)}}

\newcommand{\sfchid}[1]{f_{\mathrm{SFC}}^{\left(#1\right)}}
\newcommand{\scn}{\mathbf{h}^{\mathrm{SCN}}_{\btheta}}
\newcommand{\scnlc}[1]{\mathbf{f}_{\mathrm{SCN}}^{\left(#1\right)}}

\newcommand{\cnn}{\mathbf{h}^{\mathrm{CNN}}_{\btheta}}
\newcommand{\cnnlc}[1]{\mathbf{f}_{\mathrm{CNN}}^{\left(#1\right)}}
\newcommand{\cnnl}[2]{\mathbf{f}_{\mathrm{CNN}}^{\left(#1\right), #2}}
\newcommand{\vechid}[1]{\mathbf{f}_{D}^{\left(#1\right)}}
\newcommand{\veccnl}[2]{\mathbf{f}_{D}^{\left(#1\right), #2}}
\newcommand{\vhblock}[2]{\mathbf{f}_{D,#2}^{\left(#1\right)}}
\newcommand{\Hcnn}[1]{\mathcal{H}^{\mathrm{CNN}}_{#1}}
\newcommand{\Hscn}[1]{\mathcal{H}^{\mathrm{SCN}}_{#1}}
\newcommand{\Hfc}[1]{\mathcal{H}^{\mathrm{FC}}_{#1}}
\newcommand{\Hsfc}[1]{\mathcal{H}^{\mathrm{SFC}}_{#1}}

\newcommand{\pteacher}{\tilde{p}}
\newcommand{\pteacherSquared}{{\pteacher}^2}

\def\secref#1{Section~\ref{#1}}

\def\subsecref#1{Section~\ref{#1}}

\def\lemref#1{Lemma~\ref{#1}}

\def\thmref#1{Theorem~\ref{#1}}

\def\corref#1{Corollary~\ref{#1}}

\def\defref#1{Def.~\ref{#1}}
\def\propref#1{Prop.~\ref{#1}}

\def\asmref#1{Assumption~\ref{#1}}
\def\appref#1{Appendix~\ref{#1}}

\def\figref#1{Figure~\ref{#1}}
\def\algref#1{Algorithm~\ref{#1}}
\def\remref#1{Remark~\ref{#1}}


\newcommand{\lconv}{l}

\newcommand{\PC}{\mathcal{M}} 
\newcommand{\SC}{\tilde{C}} 
\newcommand{\C}{C} 
\newcommand{\BSC}{\hat{C}} 
\newcommand{\BSCM}[1]{\BSC^{\mathrm{#1}}} 
\newcommand{\ccfct}{M \rb{D^\star}}
\newcommand{\wcfct}{W \rb{D^\star}}
\newcommand{\bcfct}{B \rb{D^\star}}


\icmltitlerunning{Typical Neural Networks Generalize with Narrow Teachers}

\begin{document}

\twocolumn[
\icmltitle{How Uniform Random Weights Induce Non-uniform Bias:\\ Typical Interpolating Neural Networks Generalize with Narrow Teachers}


\icmlsetsymbol{equal}{*}

\begin{icmlauthorlist}
\icmlauthor{Gon Buzaglo}{equal,technion}
\icmlauthor{Itamar Harel}{equal,technion}
\icmlauthor{Mor Shpigel Nacson}{equal,technion}
\icmlauthor{Alon Brutzkus}{technion}
\icmlauthor{Nathan Srebro}{ttic}
\icmlauthor{Daniel Soudry}{technion}
\end{icmlauthorlist}

\icmlaffiliation{technion}{Technion Institute of Technology, Haifa, Israel}
\icmlaffiliation{ttic}{Toyota Technological Institute at Chicago, Chicago IL, USA}

\icmlcorrespondingauthor{Gon Buzaglo}{gon.buzaglo@gmail.com}
\icmlcorrespondingauthor{Itamar Harel}{itamarharel01@gmail.com}

\icmlkeywords{Machine Learning, ICML}

\vskip 0.3in
]



\printAffiliationsAndNotice{\icmlEqualContribution} 

\begin{abstract}
\textbf{Background.} A main theoretical puzzle is why over-parameterized Neural Networks (NNs) generalize well when trained to zero loss (i.e., so they interpolate the data). Usually, the NN is trained with Stochastic Gradient Descent (SGD) or one of its variants. However, recent empirical work examined the generalization of a random NN that interpolates the data: the NN was sampled from a seemingly uniform prior over the parameters, conditioned on that the NN perfectly classifies the training set. Interestingly, such a NN sample typically generalized as well as SGD-trained NNs. \textbf{Contributions.} We prove that such a random NN interpolator typically generalizes well if there exists an underlying narrow ``teacher NN'' that agrees with the labels. Specifically, we show that such a `flat' prior over the NN parameterization induces a rich prior over the NN functions, due to the redundancy in the NN structure. In particular, this creates a bias towards simpler functions, which require less relevant parameters to represent --- enabling learning with a sample complexity approximately proportional to the complexity of the teacher (roughly, the number of non-redundant parameters), rather than the student's. 
\end{abstract}

\section{Introduction}
A central theoretical question in deep learning is why Neural Networks (NNs) generalize, despite being over-parameterized, and even when perfectly fitted to the data \citep{Zhang16}. One of the leading explanations for this phenomenon is that NNs have an ``implicit bias" toward generalizing solutions (e.g.,  \citet{Gunasekar2017,soudry2017implicit,arora2019implicit,lyu2019gradient,pmlr-v125-chizat20a,vardi2023implicit}). This bias stems from underlying interactions between the model and the training method --- including the type of optimization step, the initialization, the parameterization, and the loss function.

Previous works \citep{vallepérez2019deep,mingard2021sgd,chiang2023loss} suggested, based on empirical evidence, that a significant part of this implicit bias in NNs is the mapping from the model parameters to the model function. Specifically, suppose we randomly sample the NN parameters from a `uniform' prior\footnote{A truly uniform prior does not exist for infinite sets, so the prior is chosen similarly to standard `uniform-like' initializations: in each layer, the prior is Gaussian (uniform on the $\ell_2$ sphere) or uniform in the $\ell_{\infty}$ ball.}, and accept only parameter samples in which the NN perfectly classifies all the training data --- i.e., samples from the posterior composed of the same prior and the likelihood of a 0-1 loss function. Then, \citet{chiang2023loss} found that the sampled NNs generalize as well as SGD in small-scale experiments. 

These results may suggest that such a uniform sampling of the NN parameters induces simple and generalizing NN functions. 
In this paper, we prove this is indeed the case, and aim to uncover the mechanism behind this phenomenon. In short, we prove typical NN interpolators sampled this way (``students") generalize well, given there exists a “narrow” NN teacher that generates the labels. Next, we explain these results in more detail.

    \textbf{Contributions.} In  \secref{sec:gen_bound} we prove that a typical NN sampled from the posterior over interpolators generalizes well (i.e.,~has a small test error with high probability) with     
    \begin{equation} \label{eq: sample complexity log p}
        \#\textrm{samples} = O\left(-\log \pteacher\right) \,,
    \end{equation} 
    where $\pteacher$ is the probability that a random NN (sampled from the `uniform' prior) is equivalent to the teacher function.\footnote{This is a generic argument and essentially a special case of a PAC-Bayes guarantee, but we give a simple proof based on the following idea: the number of hypotheses sampled until a successful interpolation is $\left|\mathcal{H}\right| \lessapprox 1/\pteacher$. Plugging this into the standard sample complexity of a finite hypotheses class  $O\left(\log\left|\mathcal{H}\right|\right)$, we obtain the result. The actual proof is slightly more complicated since $\left|\mathcal{H}\right|$ here weakly depends on the training set.} Thus, to obtain generalization guarantees for NNs, we proceed to upper bound $\rb{-\log \pteacher}$. 
    
     Next, in \secref{sec:quant_nets},  we examine the case where both the student and teacher parameters are quantized to $Q$ levels, including zero (as in standard numerical formats), and we assume the prior is uniform over all possible quantized values. We examine several architectures:
    \begin{itemize}
        \item  For a fully connected multi-layer network with a scalar output, hidden neuron layer widths $\{d_l\}_{l=1}^{L}$ and $\{d^{\star}_l\}_{l=1}^{L}$ respectively for the student and teacher, input width $d_0=d_0^{\star}$, and any activation function $\sigma$ that satisfies $\sigma\rb{0}=0$, we prove 
        \begin{equation} \label{eq: FC bound}
            -{ \log \pteacher}  \leq \sum_{l=1}^{L}\rb{d_{l}^{\star}d_{l-1}^{\star}+2d_{l}}{\log Q}.
        \end{equation}
        \item  For convolutional NNs, we obtain analogous results, where channel numbers replace layer widths, with an additional multiplicative factor of the kernel size.        
        \item The proofs in both cases are 
        simple\footnote{Proof idea for two-layer FC nets without biases: The NN function is identical to the teacher NN function, if we set $d_{1}^{\star}$ hidden neurons with the same ingoing and outgoing weights as in the teacher; and for the other $d_{1}-d_{1}^{\star}$ hidden neurons we set the outgoing weights (for the zeroed neurons, the input weights do not matter). This event probability is $\pteacher=Q^{-d_{0}d_{1}^{\star}-d_{1}}$, satisfying \eqref{eq: FC bound}.}, and can be extended for more general architectures. 
    \end{itemize}
    
    Lastly, in \secref{sec: continuous} we examine a two-layer neural network with continuous weights and derive similar results, except a margin assumption replaces the quantization assumption, and a margin factor replaces $Q$ in the bound.
    
    \textbf{Implications.} Combining these relatively easy-to-prove results (\eqref{eq: sample complexity log p} and \eqref{eq: FC bound}), we get a surprisingly novel result: typical NN interpolators have sample complexity approximately proportional to the number of teacher parameters times the number of quantization bits, with only a weak dependence on the student width. Thus, the student generalizes well \textit{if there exists} a teacher that is sufficiently narrow and under-parameterized in comparison to the sample number. As a corollary, we show that with high probability over the training set, the volume of interpolators with high generalization error is exponentially small in the size of the training set. 

In \secref{sec: discussion} we discuss our assumptions (teacher narrowness and weight quantization), how our results can be straightforwardly extended beyond interplators (to functions with a non-zero training error), whether posterior samping biases us towards sparse represetnations, the effect of parameterization via the minimum description length framework, and the relation of our results to SGD.

\section{Preliminaries}
\textbf{Notation.} We use boldface letters for vectors and matrices. 
A vector $\vect{x}\in\R^d$ is assumed to be a column vector, and we use $x_i$ to denote its $i$-th coordinate. We denote by $\mathrm{Vec}\rb{\cdot}$ the vectorization operation, which converts a tensor into a column vector by stacking its columns. The indicator function $\mathbb{I}\bb{A}$ is $1$ if statement $A$ is true and $0$ if statement $A$ is false. Additionally, we use the standard notation $[N]=\{1,\dots,N\}$ and take  $\norm{\cdot}$ to be the Euclidean norm.
We use the symbols $\odot$ to denote the Hadamard product, i.e. elementwise multiplication, $\otimes$ to denote the Kronecker product, and $*$ to denote the convolution operator. 
For a pair of vectors $D^\prime=\rb{d_1^{\prime},\dots,d_L^{\prime}},D^{\prime\prime}=\rb{d_1^{\prime\prime},\dots,d_L^{\prime\prime}} \in \mathbb{N}^L$ we denote $D^\prime \le D^{\prime\prime}$ if for all $l\in\bb{L}$, $d_l^\prime \le d_l^{\prime\prime}$.

\textbf{Data.}
Let $\mathcal{D}$ be some data distribution. We consider the problem of binary classification over a finite training set $\mathcal{S}$ that contains $N$ datapoints sampled i.i.d. from $\mathcal{D}$: 
\begin{equation*}
    \mathcal{S}\triangleq\{\vect{x}_n \}_{n=1}^N\sim\mathcal{D}^N\,,
\end{equation*} where $\vect{x}_n\in\R^{d_0}$. Since we are interested in realizable models we assume there exists a teacher model, $h^{\star}$, generating binary labels, i.e. $h^{\star}(\vect{x})\in \{ \pm 1\}$ for any $ \vect{x} \sim \mathcal{D}$.

\textbf{Evaluation metrics.}
For a predictor $h:\R^{d_0}\mapsto \cb{\pm1}$, we define the risk, i.e. the population error
$\exError{h}\triangleq\bbP_{\vect{x}\sim\calD}\left(h(\vect{x})\neq h^{\star}\rb{\vect{x}}\right),
$ and the empirical risk, i.e. the training error
$
    \emError{h}\triangleq\frac{1}{N}\sum_{n=1}^N \mathbb{I}\bb{h(\vect{x}_n)\neq h^{\star}\rb{\vect{x}_n}}\,.
$

\textbf{Hypothesis parameterization.} We discuss parameterized predictors $\btheta \mapsto \student$, where $\btheta \in \mathbb{R}^{M}$.
Distributions over $\btheta$ therefore induce distributions over hypotheses via 
\begin{align*}
    \mathcal{P} \rb{h} 
    \triangleq 
    \mathbb{P}_{\btheta} \rb{ 
    h_{\btheta} = h
    }~.
\end{align*}
That is, $\calP(h)$ is the probability mass function of sampling parameters $\btheta$ mapping to $h$ when the distribution is discrete, or (with a slight abuse of notation) their density when the distribution is continuous.

\section{Generalization Bounds for Random Interpolating Hypotheses}\label{sec:gen_bound}

In this paper, we study the generalization of interpolating predictors sampled from the posterior of NNs: 
\begin{equation} \label{eq: posterior}
    \mathcal{P}_\calS \triangleq \mathcal{P}(h\mid \mathcal{L}_\calS(h)=0)\propto \mathcal{P}(h)\mathbb{I}\left[\mathcal{L}_{\mathcal{S}}(h)=0\right] \, ,
\end{equation} 
where $\mathcal{P}(h)$ is some prior over the hypotheses class. That is, our ``learning rule'' amounts to sampling a single predictor from the posterior,
\begin{equation}
    \mathcal{A}_\calP(\calS) \sim \mathcal{P}_\calS,
\end{equation}
and we would like to analyze the population error $\exError{\mathcal{A}_\calP(\calS)}$ of this sampled predictor.  

Samples from the posterior $\mathcal{P}_\calS$ can be obtained by the Guess and Check procedure (\GaCtext{}; \citet{chiang2023loss}), defined in Algorithm \ref{alg:Guess and Check}, which can be viewed as a rejection sampling procedure for \eqref{eq: posterior}.  
That is, we can think of drawing a sequence $\rb{h_t}_{t=1}^{\infty}$ of hypotheses i.i.d.~from the prior $\calP$ and independent of $\calS$ (i.e.~before seeing the training set). Then, given the training set $\calS$, we pick the first hypothesis in the sequence that interpolates the data. We will employ this equivalence in our analysis, and view samples from the posterior $\mathcal{P}_\calS$ as if they were generated by this procedure.

\begin{algorithm}[ht]
   \caption{Guess and Check (\GaCtext{})}
   \label{alg:Guess and Check}
\begin{algorithmic}
   \STATE {\bfseries Input:} (1) $\calP$, Prior over hypotheses  (2) $\calS$, Training set. 
   \STATE  {\bfseries Output:} $\mathcal{A}_\calP(S)$
   
   \STATE \textbf{Algorithm:}
   \STATE \hskip1.5em Draw $h_1, h_2,\dots\overset{\textrm{i.i.d.}}{\sim} \; \calP$.
   \STATE \hskip1.5em Choose $T \triangleq \min \left\{t \,\vert\, \emError{h_t}=0\right\}$
   \STATE \hskip1.5em {\bfseries Return:} $\mathcal{A}_\calP(S)\triangleq h_T$
\end{algorithmic}
\end{algorithm}

We will be particularly interested in the case in which $\mathcal{P}(h)$ is defined through a `uniform' (or otherwise fairly `flat' or benign) prior on the parameters $\btheta$ in some parameterization $h_{\btheta}$.  But in this section, we analyze posterior sampling, or equivalently Guess and Check, directly through the induced distribution $\mathcal{P}$ over predictors.  In particular, we analyze generalization performance in terms of the probability that a random hypothesis $h\sim\mathcal{P}$ is equivalent to the teacher model.
This is formalized in the following definition.
\begin{definition}\label{def: TE}
We say that a predictor $h$ is  \textit{teacher-equivalent} (TE) w.r.t.~a data distribution $\mathcal{D}$, and denote $h \equiv h^{\star}$, if 
$
\mathbb{P}_{\mathbf{x}\sim \mathcal{D}} \rb{h\left(\mathbf{x}\right)=h^{\star}\left(\mathbf{x}\right)}=1\,,
$
and denote the probability of a random hypothesis to be TE by 
\begin{align*}
    \pteacher\triangleq\bbP_{h \sim \mathcal{P}} \rb{h \equiv h^{\star}}\,.
\end{align*}
\end{definition}
As we show in the next result, $\pteacher$ plays an important role in \GaCtext{} generalization. Specifically, in \appref{app: direct gen bound} we prove the following generalization bound

\begin{lemma}[{\GaCtext{} (i.e.~Posterior Sampling) Generalization}]\label{thm: gen of gac with teacher simple}
Let $\varepsilon\in\left(0,1\right)$ and $\delta\in\rb{0,\frac{1}{5}}$, and assume that $\pteacher<\frac{1}{2}$.
For any $N$ larger than 
\begin{align*}
 \frac{-\log\left({\pteacher}\right) + 3\log\left(\frac{2}{\delta}\right)}{\varepsilon} \,,    
\end{align*}

the sample complexity, we have that
    \begin{align*}
    \mathbb{P}_{\calS \sim\mathcal{D}^{N}, h\sim\mathcal{P}_\mathcal{S}}\left(\exError{h}<\varepsilon\right) \ge 1-\delta\,.
    \end{align*}
\end{lemma}
We observe that the sample complexity required to ensure $(\varepsilon,\delta)$-PAC generalization depends on $\rb{-\log\rb{\pteacher}}$. Thus, we define the effective sample complexity as
\[
    \SC \triangleq -\log\rb{\pteacher}\,.
\]

Moreover, using Markov's inequality, the above lemma implies (see \appref{app:proof_volume}) the following corollary.

\begin{corollary}[Volume of Generalizing Interpolators]\label{cor:volume}
For $\varepsilon,\delta$ as above, and any $N$ larger than
\begin{align*}
         \frac{-\log\left({\pteacher}\right) + 6\log\left(\frac{2}{\delta}\right)}{\varepsilon} \,,    
\end{align*}
the sample complexity, we have that
\begin{align*}
\mathbb{P}_{\mathcal{S}}\left(\mathbb{P}_{h\sim\mathcal{P_{S}}}\left(\mathcal{L_{D}}\left(h\right)\geq \varepsilon\right)<\delta\right)\ge1-\delta\,.
\end{align*}
\end{corollary}

\textbf{\corref{cor:volume} implications.} 
\corref{cor:volume} examines, for a single sample of the data $\calS$, the relative volume of `bad' interpolators out of all interpolators --- i.e.~the probability to sample an interpolator for which $\mathcal{L_{D}}\left(h\right)\geq \varepsilon$, given the data $\calS$. It states that this relative volume is small ($\delta$) with high probability ($1-\delta$) over the sampling of the data. And $\delta$ can be quite small, since for any $\varepsilon$, we have that $\delta$ decays exponentially fast in $N$
\begin{align*}
    \delta = 2 \exp\rb{-\frac{\varepsilon N + \log \rb{\pteacher}}{6}}.
\end{align*}

\textbf{Proof idea of \lemref{thm: gen of gac with teacher simple}.}
In \appref{app: direct gen bound}, we provide a self-contained proof of \lemref{thm: gen of gac with teacher simple} by noting that the expected number of hypotheses we will consider is $1/\pteacher$, and so we are essentially selecting an interpolating hypothesis from the effective hypothesis class $\mathcal{H}=\{ h_1,\ldots,h_\tau \}$ with $\tau \approx 1/\pteacher$, where the hypotheses in this class are chosen before seeing the training set. 
The sample complexity is thus $\log\left\vert\mathcal{H}\right\vert=\log \tau \approx -\log \pteacher$. 
The only complication is that the stopping time $\tau$ of \GaCtext{} is random and depends on $\calS$. 
But it is enough to bound $\tau$ very crudely (which we do with high probability), as a multiplicative factor to $\tau$ results only in an additive logarithmic factor. 

\begin{remark} 
A similar result can also be proved using PAC-Bayes, as we discuss later in this section. 
Lastly, \label{rem: non interpolating teacher} the same method can be straightforwardly used to extend these results to the case where the teacher NN is not a perfect interpolator, and the \GaCtext{} algorithm is modified to stop when the training error is below some threshold instead of 0 (see \appref{app: non interpolators}).
\end{remark}

\textbf{Relationship to PAC-Bayes} The analysis here may also be seen as a special case of PAC-Bayes analysis \citep{McAllester1999}, which studies the behavior of a ``posterior'' over hypotheses in terms of its KL-divergence to the prior, where here we specialize to a specific posterior $\mathcal{P}_\mathcal{S}$, conditioning on interpolation.

Noting that $\mathrm{KL}(\mathcal{P}_\calS\Vert \mathcal{P})=-\log \mathbb{P}_{h\sim\mathcal{P}}(\emError{h}\!=\!0)\geq -\log \pteacher$, a standard PAC-Bayes bound \citep{langford2001bounds,SimplifiedPAC} will yield  that with the same sample complexity as in \lemref{thm: gen of gac with teacher simple}, 
\begin{equation} \label{eq: PAC}
\bbP_{\calS \sim \mathcal{D}^N} \left(\;\mathbb{E}_{h\sim\mathcal{P}_{\calS}} \left[\exError{h} \right] < \varepsilon \; \right) \geq 1-\delta.
\end{equation}

The subtlety is that \eqref{eq: PAC} bounds the {\em expected} population error $\mathbb{E}_{h\sim\mathcal{P}_\calS}\left[\exError{h} \right]$ for a sample $h\sim\mathcal{P}_\mathcal{S}$ from the posterior, while \lemref{thm: gen of gac with teacher simple} holds {\em with high probability for a single posterior sample}.
That is, \lemref{thm: gen of gac with teacher simple} establishes that not only are random interpolators good on average, but only a small fraction of them are bad\footnote{Using Markov's inequality one can derive from \eqref{eq: PAC} a high probability bound for a single draw from the posterior, but with a sample complexity that depends polynomially rather than logarithmically on the failure probability $\delta$ (see \appref{app: pac bayes rel app}).}.

High-probability PAC-Bayes guarantees on a single draw from the posterior, as in  \lemref{thm: gen of gac with teacher simple}, have been derived by \citet{cantoni2007pac} and simplified by \cite{alquier2021user}. 
 Applying \citet[Theorem 2.7]{alquier2021user}, yields a single-posterior-sample guarantee as in \lemref{thm: gen of gac with teacher simple}, but with a sample complexity of $\Theta\left(\tfrac{-\log(\tilde{p})+\log(1/\delta)}{\epsilon^2}\right)$ that depends quadratically $1/\epsilon$.  
 The reason is that \citeauthor{alquier2021user}'s analysis is more generic and applies also to posteriors with high empirical error. \lemref{thm: gen of gac with teacher simple} can be viewed as a tighter specialization to interpolators, thus allowing a rate of $1/\epsilon$, as in \citeauthor{SimplifiedPAC}'s bounds for the average-over-the-posterior.  
It should be possible to write down a generic PAC-Bayes guarantee that both applies to single draws from the posterior {\em and} yields a $1/\epsilon$ rate for interpolators and then derive \lemref{thm: gen of gac with teacher simple} from it. However, for the sake of completeness, and to give a simplified intuition, we choose instead to present a self-contained specialized proof of the Lemma in \appref{app: direct gen bound}.

\textbf{Occam and Redundancy} A sample complexity of $(-\log \tilde{p})$ should not be surprising, and can also be obtained by an Occam Razor / Minimum Description Length learning rule $\mathrm{MDL}_{\calP} (\calS) = \arg\max_{\calL_{\calS} (h) = 0} \calP (h)$ (\citet{blumer1987occam}, and see also Section 7.3 in \citet{shalev2014understanding}).
Here, we discussed how the same sample complexity is obtained by a single draw from the posterior, as in \GaCtext. 
More interesting is how, starting from a uniform prior $\calP_{\btheta} (\btheta)$ over parameters, we end up with an informative induced prior $\calP (h)$ over hypotheses, which has high $\tilde{p}$ and thus low sample complexity. 
The key here is redundancy in the parameterization.
In \appref{app: MDL} we review the general principle of how non-uniform redundancy in the parameterization can induce non-uniform informative priors $\calP (h)$ and thus low sample complexity. 
In the next sections we see how this plays specifically for NNs, analyzing $\tilde{p}$ under the prior induced by a uniform choice of NN parameters.

\section{Quantized Nets Sample Complexity}\label{sec:quant_nets}

The generalization bound in \lemref{thm: gen of gac with teacher simple} depends on the effective sample complexity $\SC \triangleq -\log\rb{\pteacher}$. 
In this section, we derive an upper bound on $\SC$ for quantized multi-layer Fully Connected (FC) Neural Networks (NNs) with a single binary output, with and without additional per-node scaling.
\begin{definition}[Vanilla FC]  \label{def: vanilla fcn}
For a depth $L$, widths $D=\rb{d_1,\dots,d_L}$, and activation function $\sigma:\mathbb{R}\rightarrow\mathbb{R}$, a fully connected NN is a mapping $\btheta \mapsto h^{FC}_{\btheta}$ from parameters
\begin{align*}
\left\{ \btheta = \left\{ \mathbf{W}^{\left(l\right)} ,\mathbf{b}^{\left(l\right)} \right\}_{l=1}^{L} 
\,\middle|\,
\mathbf{W}^{\left(l\right)} \in \mathbb{R}^{d_{l}\times d_{l-1}}, \mathbf{b}^{\left(l\right)} \in \mathbb{R}^{d_{l}} \right\}
\end{align*}
defined recursively, starting with $f^{(0)} \left(\mathbf{x}\right) = \mathbf{x}\,, $ as
\begin{align*} 
    \forall l\in\bb{L\!-\!1}: \; f^{(l)} \left(\mathbf{x}\right) 
    &= \sigma\!\left(\mathbf{W}^{\left(l\right)} f^{(l-1)} \left(\mathbf{x}\right) + \mathbf{b}^{\left(l\right)}\right)\\
    h_{\btheta}^{\mathrm{FC}} \rb{\mathbf{x}} 
    &= \sign\!\rb{\mathbf{W}^{\left(L\right)} f^{(L-1)} \left(\mathbf{x}\right) + \mathbf{b}^{\left(L\right)}}\!.
\end{align*}
The total parameter count is $M(D)=\sum_{l=1}^L d_l(d_{l-1}+1)$. We denote the class of all fully connected NNs as $\Hfc{D}$.
\end{definition}

As we will show, considering NNs in which each neuron is multiplied by a scaling parameter can significantly improve the bound. 
This architecture modification is common in empirical practices, e.g. batch-normalization \citep{ioffe2015batch}, weight-normalization \citep{salimans2016weight}, and certain initializations \citep{zhang2019fixup}.
We formally define this model in the following definition.

\begin{definition}[Scaled-neuron FC]  \label{def: scaled neuron fcn}
For a depth $L$, widths $D=\rb{d_1,\dots,d_L}$, and activation function $\sigma:\mathbb{R}\rightarrow\mathbb{R}$, a scaled neuron fully connected neural network is a mapping $\btheta \mapsto h^{\mathrm{SFC}}_{\btheta}$ from parameters
\begin{align*}
 \btheta = \left\{ \mathbf{W}^{\left(l\right)},\mathbf{b}^{\left(l\right)},\bgamma^{\left(l\right)}  \right\}_{l=1}^{L} \,,
\end{align*}
where
$
    \mathbf{W}^{\left(l\right)} \in \mathbb{R}^{d_{l}\times d_{l-1}}, \mathbf{b}^{\left(l\right)} \in \mathbb{R}^{d_{l}}, \bgamma^{\left(l\right)} \in \mathbb{R}^{d_{l}}\,,
$
defined recursively, starting with $f^{(0)} \left(\mathbf{x}\right) = \mathbf{x}\,, $ as 
\begin{align*} 
    \forall l\in\bb{L-1} \; f^{(l)} \left(\mathbf{x}\right) &= \sigma\left(\bgamma^{\rb{l}}\odot{\mathbf{W}^{\left(l\right)} f^{(l-1)} \left(\mathbf{x}\right)} + \mathbf{b}^{\left(l\right)}\right)\\
    h_{\btheta}^{\mathrm{SFC}} \rb{\mathbf{x}} &= \sign\rb{\mathbf{W}^{\left(L\right)} f^{(L-1)} \left(\mathbf{x}\right) + \mathbf{b}^{\left(L\right)}} .
\end{align*}
The total parameter count is $M(D)=\sum_{l=1}^L d_l(d_{l-1}+2)$. We denote the class of all scaled neuron fully connected NNs as $\Hsfc{D}$.
\end{definition}

We consider {\bf $Q$-quantized} networks where each of the parameters is chosen from a fixed set $\mathcal{Q}\subset\mathbb{R}$ such that $0\in\mathcal{Q}$ and $\abs{\mathcal{Q}}\leq Q$.  This can be the set of integers $\{ -\frac{Q}{2},-\frac{Q}{2}+1,\ldots,(\frac{Q}{2}-1)\}$ for even $Q$, or the set of numbers representable as $\log_2 Q$-bit floats (for, e.g.~$\log_2 Q = 32$).  Fully connected quantized NNs thus have parameters $\btheta \in \mathcal{Q}^M$ corresponding to a complexity $C = M \log Q$ (from the classic log cardinality bound, see Appendix \ref{app: finite hypo PAC}).  

We consider a teacher $h^{\star}=h_{\btheta^{\star}}$ that is a $Q$-quantized network of some depth $L$
and small widths $D^{\star}=(d^{\star}_1,\ldots,d^{\star}_L)$, and a wider student of the same depth $L$ but widths $D > D^{\star}$. 
For the student, we consider a {\bf uniform prior over $Q$-quantized parameterizations}, i.e.~$\btheta \sim \textrm{Uniform}\left(Q^{M(D)}\right)$. 
In other words, to generate $h_{\btheta}\sim\mathcal{P}$, each weight (and bias) in the NN is chosen independently and uniformly from $\mathcal{Q}$.   

\begin{figure*}[t]
\centering     
\subfigure[Teacher network]{\label{fig: scaled teacher} \includegraphics[width=0.31\textwidth, trim={0cm 5cm 0.8cm 3cm},clip]{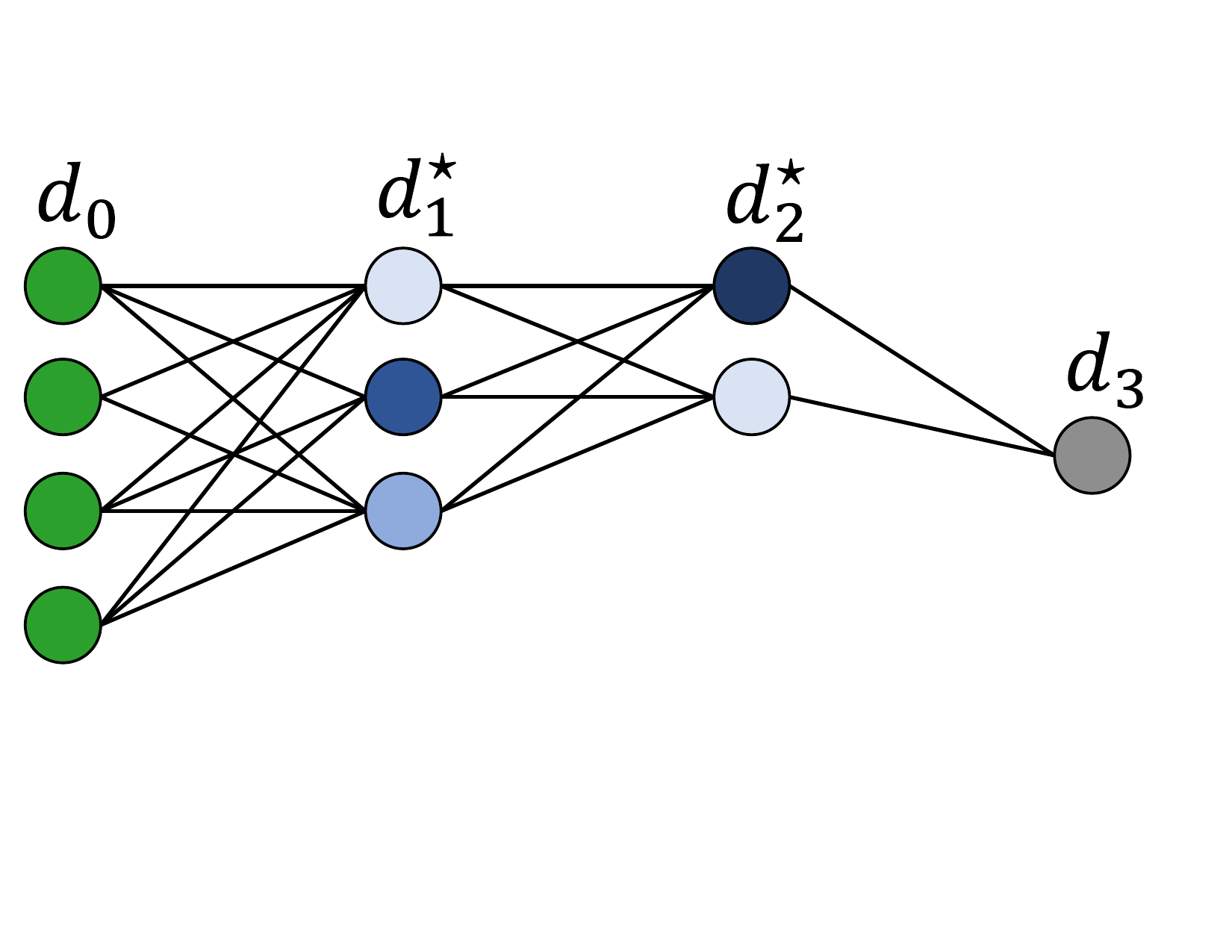}}
\hfill
\subfigure[Vanilla student network]{\label{fig: sparse student} \includegraphics[width=0.31\textwidth, trim={0cm 5cm 0.8cm 3cm},clip]{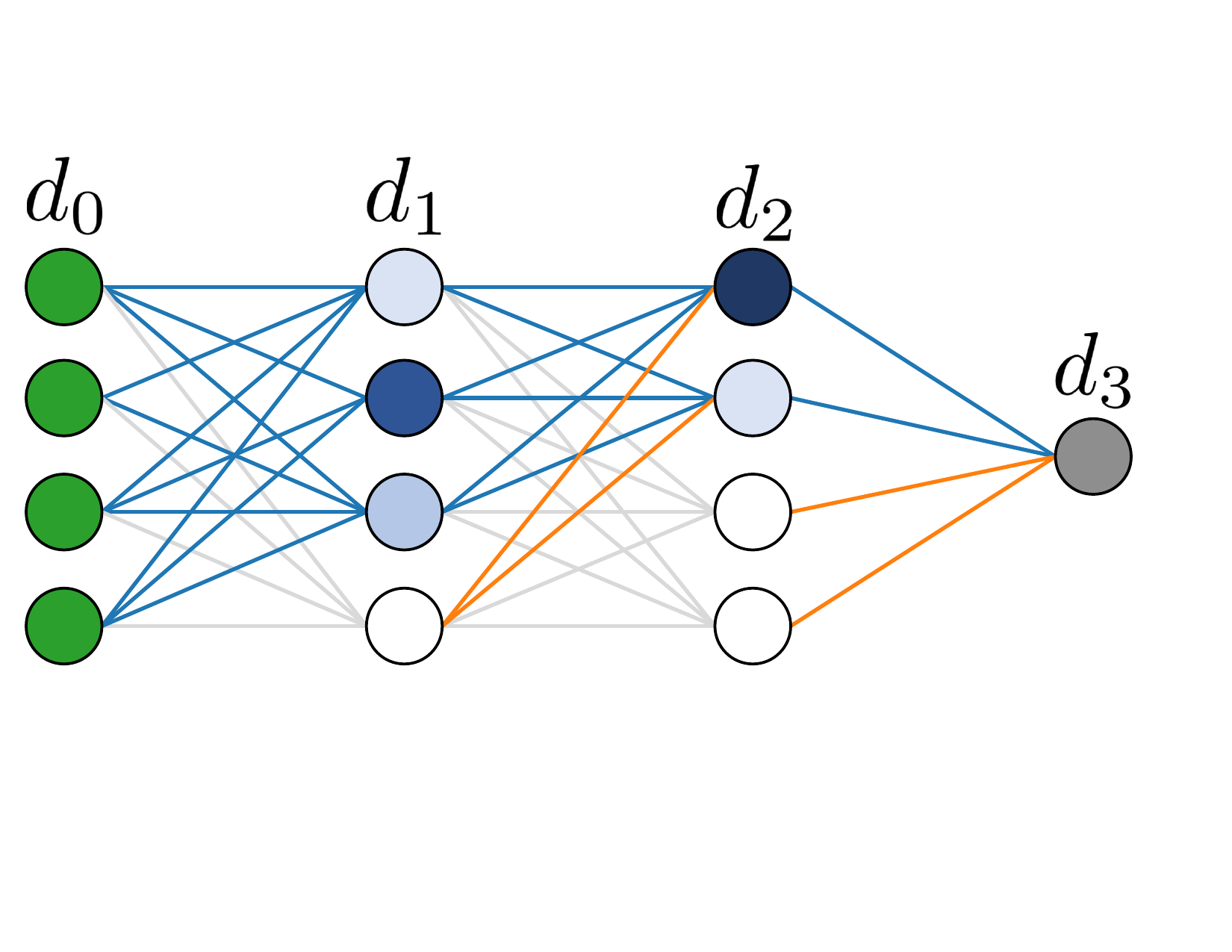}}
\hfill
\subfigure[Scaled neuron student network]{\label{fig: scaled student} \includegraphics[width=0.31\textwidth, trim={0cm 5cm 0.8cm 3cm},clip]{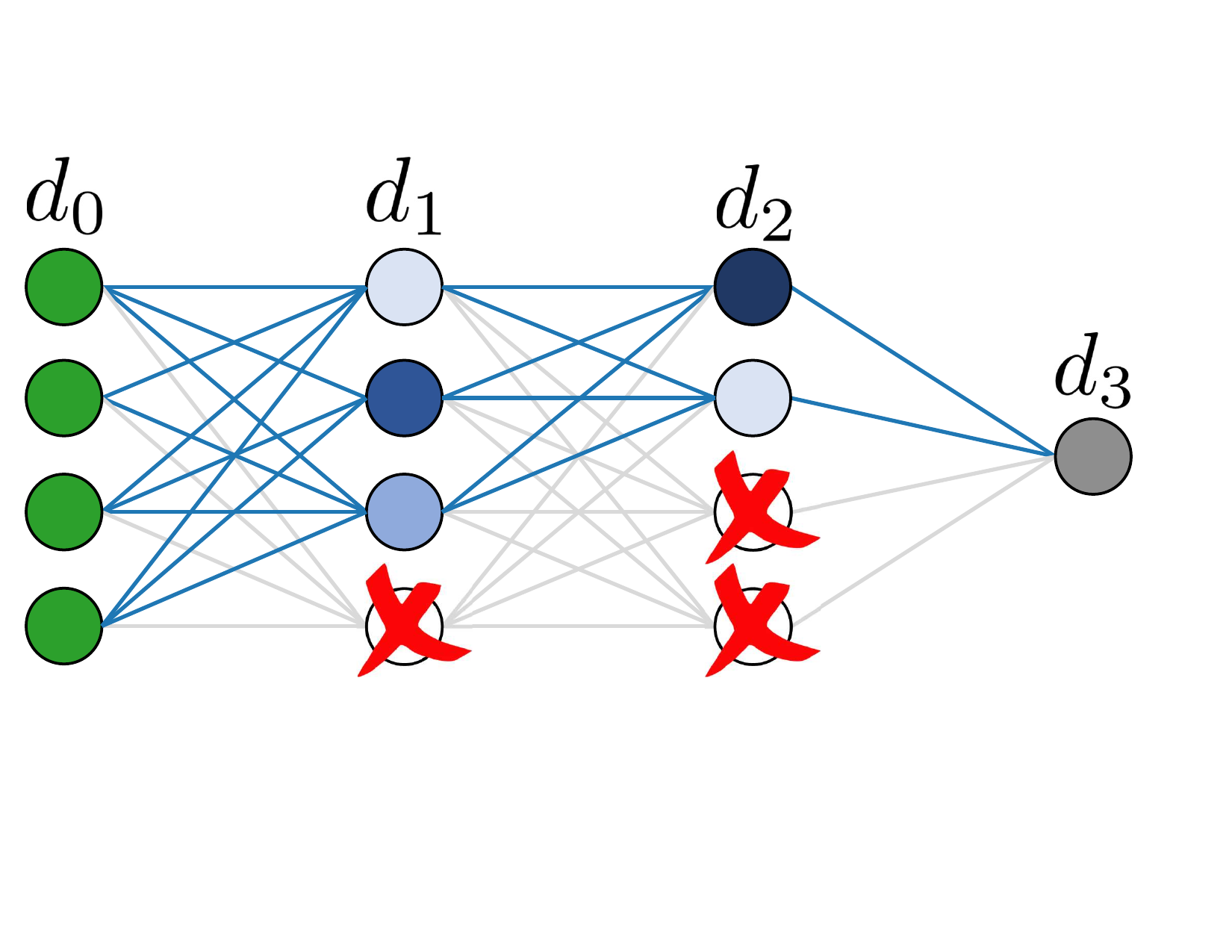}}
\caption{\textbf{Illustration of vanilla and scaled neuron three-layer quantized teacher and student neural networks.} 
Note that the visualization does not show the bias units. 
The proof of \thmref{lem:int_quantized} relies on counting student networks which are functionally equivalent to the teacher network. 
\figref{fig: scaled teacher} depicts a narrow teacher. 
In \figref{fig: sparse student}, we visualize a FC student network that replicates the teacher by zeroing out all outgoing weights of any neuron that does not exist in the teacher. 
Specifically, the blue edges are weights identical to the teacher, and the orange edges are set to zero. 
Therefore, the white neurons do not affect the network output.
In \figref{fig: scaled student}, we visualize an SFC student network that replicates the teacher by setting the scaling parameter to zero (each zero marked with a red `x') for any neuron that does not exist in the teacher. In both cases, the gray edges do not affect the function.
In this specific example, we can see how the redundancy is higher in SFC than in the vanilla FC network, hinting at better generalization capabilities. 
\label{Fig: teacher-student illustration}}
\end{figure*}

\subsection{Main Results} 
Using the definitions above, we can state the following.

\begin{theorem}[Main result for fully connected neural networks]
\label{lem:int_quantized}

For any activation function such that $\sigma(0)=0$, depth $L$, $Q$-quantized teacher with widths $D^{\star}$,  student with widths $D>D^{\star}$, $d_0^{\star}\triangleq d_0$, and prior $\mathcal{P}$ uniform over $Q$-quantized parameterizations, we have that: 
\begin{enumerate}
    \item For Vanilla Fully Connected Networks:
    \begin{align} \label{eq: PC FCN}
        \SC \leq \BSCM{FC} \triangleq \rb{\sum_{l=1}^{L}{\rb{d_{l}^{\star}d_{l-1}+d_{l}^{\star}}}}\log Q\,.
    \end{align}
    \item For Scaled Neuron Fully Connected Networks:
    \begin{align}\label{Eq: PC FCN better} 
        \SC \leq \BSCM{SFC} \triangleq \rb{\sum_{l=1}^{L}\rb{d_{l}^{\star}d_{l-1}^{\star}+2d_{l}}} \log Q\,.
    \end{align}
    
\end{enumerate}
And, by \lemref{thm: gen of gac with teacher simple}, $N=(\SC+3\log 2/\delta)/\varepsilon$ samples are enough to ensure that for posterior sampling (i.e.~\GaCtext{}), $\mathcal{L}(\mathcal{A}_\mathcal{P}(\calS))\leq\varepsilon$ with probability $1-\delta$ over $\calS\sim\calD^N$ and the sampling.
\end{theorem}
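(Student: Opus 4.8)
The generalization conclusion will follow immediately once the two bounds on $\SC$ are established: I would simply substitute $\SC \le \BSCM{FC}$ (resp.\ $\SC \le \BSCM{SFC}$) into \thmref{thm: gen of gac with teacher simple}, whose hypothesis $\pteacher < \tfrac12$ holds in the over-parametrized regime of interest. So the real work is to lower bound $\pteacher = \bbP_{\btheta \sim \mathrm{Uniform}(\mathcal{Q}^{M(D)})}\!\rb{h_{\btheta} \equiv h^{\star}}$, i.e.\ to upper bound $\SC = -\log \pteacher$. My plan is to use that, since the prior is uniform on the finite set $\mathcal{Q}^{M(D)}$, it is enough to exhibit a large axis-aligned sub-cube of parametrizations that all compute the teacher function \emph{exactly} (hence a fortiori are teacher-equivalent, for any $\mathcal{D}$): if I can fix $k$ of the $M(D)$ coordinates so that the student computes $h^{\star}$ no matter how the remaining $M(D)-k$ coordinates range over $\mathcal{Q}$, then $\pteacher \ge Q^{-k}$ and hence $\SC \le k\log Q$. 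Each of the two bounds then reduces to describing one explicit ``embedding'' of the teacher into the wider student and counting the coordinates it pins down.

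\textbf{Vanilla FC (\eqref{eq: PC FCN}).}
I would label the student's layer-$l$ units so that $1,\dots,d_l^{\star}$ are ``teacher units'' and $d_l^{\star}+1,\dots,d_l$ are ``extra units'' --- with no extra units at layer $0$ (since $d_0 = d_0^{\star}$) nor at the output layer (since $d_L = d_L^{\star} = 1$) --- and then fix, for every layer $l$: the first $d_l^{\star}$ rows of $\mathbf{W}^{(l)}$ (equal to the teacher's matrix in the first $d_{l-1}^{\star}$ columns and to $0$ in the remaining columns) together with the first $d_l^{\star}$ entries of $\mathbf{b}^{(l)}$ (equal to the teacher's), leaving all other parameters free. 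A short induction on $l$ should then show that the first $d_l^{\star}$ coordinates of $f^{(l)}(\mathbf{x})$ equal the teacher's layer-$l$ activations for all $\mathbf{x}$: the zeroed cross-columns annihilate the (arbitrary) outputs of the layer-$(l-1)$ extra units, so every teacher unit sees exactly the teacher's pre-activation, and at $l=L$ this yields $h^{FC}_{\btheta} = h^{\star}$ pointwise. Counting the pinned coordinates gives $k = \sum_{l=1}^{L} d_l^{\star}(d_{l-1}+1)$ and hence $\SC \le \rb{\sum_{l=1}^L \rb{d_l^{\star} d_{l-1} + d_l^{\star}}}\log Q = \BSCM{FC}$.

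\textbf{Scaled-neuron FC (\eqref{Eq: PC FCN better}).}
For the SFC bound the plan is to exploit $\sigma(0)=0$ to make each extra unit \emph{dead} rather than merely disconnected, which is cheaper per unit. I would fix, for every layer $l$: all of $\bgamma^{(l)}$ and all of $\mathbf{b}^{(l)}$ (the first $d_l^{\star}$ entries to the teacher's values and the remaining $d_l - d_l^{\star}$ entries to $0$), together with the $d_l^{\star}\times d_{l-1}^{\star}$ top-left block of $\mathbf{W}^{(l)}$ (equal to the teacher's), leaving every other weight free. Each extra unit then has pre-activation $0$ and output $\sigma(0)=0$ regardless of its free incoming weights, so the free weights into and out of extra units never matter, and --- exactly as in the vanilla case --- each teacher unit receives the teacher's pre-activation (its weights onto the dead layer-$(l-1)$ extra units being multiplied by $0$). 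Here $k = \sum_{l=1}^L \rb{2 d_l + d_l^{\star} d_{l-1}^{\star}}$, so $\SC \le \rb{\sum_{l=1}^L \rb{d_l^{\star} d_{l-1}^{\star} + 2 d_l}}\log Q = \BSCM{SFC}$; the unused $\bgamma^{(L)}$ being fixed here only makes the count a (harmless) over-estimate.

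\textbf{Where the difficulty lies.}
I do not expect a genuine obstacle --- the content is picking the right embedding and doing the bookkeeping carefully --- but two points need attention. First, the induction ``$f^{(l)}$ agrees with the teacher on its first $d_l^{\star}$ coordinates'' must really close: in the vanilla construction it is the \emph{cross-columns} of $\mathbf{W}^{(l)}$ that are zeroed (the outgoing weights of extra units stay arbitrary and get killed one layer later), while in the SFC construction one must check that ``dead extra unit'' is consistent between adjacent layers. Second, I would be explicit that teacher-equivalence is obtained in the strong, pointwise sense (so no property of $\mathcal{D}$ is used), and that $\pteacher < \tfrac12$ can be assumed before invoking \thmref{thm: gen of gac with teacher simple} --- otherwise $h^{\star}$ is already recovered from $O(1)$ samples and there is nothing to prove.
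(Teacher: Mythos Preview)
Your proposal is correct and follows essentially the same approach as the paper: you exhibit an explicit set $\mathcal{E}$ of student parametrizations that are teacher-equivalent (by fixing the top $d_l^{\star}$ rows of $\mathbf{W}^{(l)}$ and $\mathbf{b}^{(l)}$ in the vanilla case, and the top-left $d_l^{\star}\times d_{l-1}^{\star}$ block of $\mathbf{W}^{(l)}$ together with all of $\bgamma^{(l)},\mathbf{b}^{(l)}$ in the scaled case), verify the embedding works by induction on the layer, and read off $\pteacher \ge Q^{-k}$ by counting pinned coordinates. Your bookkeeping, including the observation that zeroing $\bgamma_2^{(l)}$ and $\mathbf{b}_2^{(l)}$ kills the extra units so that the free weights $\mathbf{W}_{12}^{(l)}$ into teacher units from dead units contribute nothing, matches the paper's argument exactly.
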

\begin{remark} \label{rem: volume quantized}
From \corref{cor:volume} we deduce that for a given training set, the volume of `bad' ($\varepsilon$) interpolators is `small' ($\delta$) with high probability ($1-\delta$) with this sample complexity.
\end{remark}

\textbf{Proof idea.} The idea is simple and centers on counting a sufficient number of constraints on the parameters of the student network to ensure it is TE. 
A fundamental illustration of this concept is provided in the caption of Figure \ref{Fig: teacher-student illustration}. 
Full proof is in \appref{app:quant_proofs}. 

\subsection{Discussion: Comparing Sample Complexities} 

 To understand the sample complexity bound $O(\BSCM{FC})$ and $O(\BSCM{SFC})$ implied by Theorem \ref{lem:int_quantized}, let us first consider the complexities (number of bits, or log cardinalities) of the teacher and student models:
\begin{align*}
    & C^{\star} =\rb{\sum_{l=1}^{L} \rb{d_l^{\star} d_{l-1}^{\star} + k d_{l}^{\star}}}\log Q \\
    & \C=\rb{\sum_{l=1}^{L} \rb{d_l d_{l-1} + k d_{l}}}\log Q
\end{align*}
where $k=1$ for Vanilla FC Networks and $k=2$ for Scaled FC Networks.

Either way, the dominant term is the quadratic term $\sum_l d_l d_{l-1}$.  Lacking other considerations, the sample complexity of learning with the student network would be $C$.  However, we see here that thanks to the parameterization and prior, the student network implicitly adapts to the complexity of the teacher, with $\hat{C} \ll C$ when $D^{\star} \ll D$ and in any case $C^{\star} \leq \BSCM{} \leq C$.   With Vanilla FC Networks, the sample complexity $\BSCM{FC}$, although smaller than $C$, is still significantly larger than the complexity $C^{\star}$ of the teacher, which is what we could have hoped for.  The quadratic (dominant) terms in $\BSCM{FC}$ are roughly geometric averages of terms from $C$ and $C^{\star}$, and so we have that, very roughly, $\BSCM{FC} \approx \sqrt{C C^{\star}}$. We can improve this using scaling, which creates more redundancy since zero scales can deactivate entire units.  Indeed, for Scaled Fully Connected Networks, we have that $\BSCM{SFC} = C^{\star} + \sum_{l=1}^L d_l \ll \BSCM{FC} \ll C$ (when $D^{\star} \ll D$).  We still pay a bit for the width of the student, but only linearly instead of quadratically.  In particular, even if the width of the student is quadratic in the width of the teacher, the sample complexity of learning by sampling random NNs is almost the same as that of using a much narrower teacher.

\textbf{Minimum widths.} We note that, just as in the minimum description length example in the previous section, an explicit narrowness prior could have of course been fully adaptive to the width of the teacher and ensured learning with the ideal sample complexity $C^{\star}$.  
E.g., this is achieved by allowing the student to choose the width of each layer, and using the Occam rule 
\begin{align}
    \min_{D'\leq D, \btheta\in \mathcal{Q}^{M(D')} \text{with widths}\, D'} M(D') \;\;\text{s.t.~}\mathcal{L}_S(h_{\btheta})=0.
\end{align}
But from Theorem \ref{lem:int_quantized} we see that even without such an explicit bias, choosing weights uniformly induces significant inductive bias toward narrow networks.

\textbf{Maximum sparsity.} It is also insightful to compare this to using an explicit sparsity bias, e.g.~with an Occam rule of the form:
\begin{align}\label{eq:sparse}
    \min_{\btheta\in \mathcal{Q}^{M(D)}} \norm{\btheta}_0 \;\;\text{s.t.~}\mathcal{L}_S(h_{\btheta})=0.
\end{align}
The sparsity-inducing rule \eqref{eq:sparse} would have the following bound for the effective sample complexity
\begin{align} \label{eq:sparseC}
    \hat{C}^{\textrm{sparse}} \triangleq O \rb{ \ccfct \rb{ \log\left( \ccfct +d_{0} \right) + \log Q }}.
\end{align}
See \appref{app:sparse} for a derivation of this equation. Note that $\hat{C}^{\textrm{sparse}}$ in \eqref{eq:sparseC} does not depend on the size of the student, but rather only on the size of the teacher, which is smaller, that is $M\rb{D^{\star}}\ll M\rb{D}$. For comparison, recall our previous bound for posterior sampling, in the case of FC scaled-neuron networks from \eqref{Eq: PC FCN better}:
    \begin{align*}
        \BSCM{SFC} &= \rb{\sum_{l=1}^{L}\rb{d_{l}^{\star}d_{l-1}^{\star}+2d_{l}}} \log Q
        \\
        &= O\rb{\C^{\star}+ \sum_{l=1}^{L}2d_{l}\log Q}\,.
    \end{align*}
When $\log Q = O(1)$, we can rewrite $\hat{C}^{\textrm{sparse}}$ as follows:
\begin{align*}
    \hat{C}^{\textrm{sparse}} = O\rb{\C^{\star}\rb{1+\frac{\log M\rb{D^{\star}}}{\log{Q}}}}=O\rb{\C^{\star}\log\C^{\star}}\,,
\end{align*}
that is $\hat{C}^{\textrm{sparse}}$ has an additional multiplicative factor bounded by $\log\C^{\star}$, and so can be worse than the bounds for posterior sampling.
Specifically, in the regime where $2 \sum_{l=1}^{L}{d_l}\log Q \le \frac{\log M\rb{D^{\star}}}{\log Q }$, we have that $\BSCM{SFC}<\hat{C}^{\textrm{sparse}}$. 
For example, when $\sum_{l=1}^{L}{d_l}\log Q=O\rb{\C^{\star}}$, we have that $\BSCM{SFC}$ is suboptimal with a factor of 2 with respect to $\C^{\star}$, whereas $\hat{C}^{\textrm{sparse}}$ is off by a larger factor of $\log\C^{\star}$.

\textbf{Minimum norm.} One might also ask whether a similar adaptation to teacher width can be obtained by regularizing the norm of the weights.  Indeed, \cite{neyshabur2015norm,golowich2018size} obtained sample complexity bounds that depend only on the $\ell_2$ norm $\norm{\btheta}$ of the learned network, without any dependence on the width of the student (but with an exponential dependence on depth!).  These guarantees are not directly applicable in our setting, since applying them to get guarantees on the misclassification error we study here requires bounding the margin.  Even with a discrete teacher, without further assumptions on the input distribution, we cannot ensure a margin.  If we did consider only integer inputs and integer weights, we could at least ensure a margin of 1.  In this case: on one hand, the norm-based sample complexity would scale as $Q^{O(L)}$, i.e.~exponential in the $L \log Q$ dependence of $\BSCM{}$.  On the other hand, the norm-based guarantee would not depend at all on the student widths $D$, while even $\BSCM{SFC}$ increases linearly with the student widths.

\subsection{Extension to convolutional neural networks}
In \appref{app:quant_proofs}, we extend Theorem \ref{lem:int_quantized} to convolutional neural networks (CNN) and convolutional neural networks where each channel is multiplied by a learned parameter (SCNN). 
Specifically, we show that for quantized convolutional networks, we get similar bounds on the sample complexity $\BSC$ with channel numbers substituting layer widths.
\begin{align*}
    \BSCM{CNN}=&\left(d_s + 1 + \sum_{l=1}^L \rb{k_l c_{l}^{\star}c_{l-1}  + c_{l}^{\star}} \right) \log Q\\
\BSCM{SCNN}=& \left( d_{s}^{\star} + 1 + \sum_{\lconv=1}^{L}\rb{{k_{\lconv}c_{\lconv}^{\star}c_{\lconv-1}^{\star} +2c_{l}}} \right) \log Q\,,
\end{align*}
where $d_s$ is the number of neurons in the last convolutional layer (i.e., the width of the last layer which is a fully connected layer with a single output), $k_l$ and $c_l$ are the $l^{\text{th}}$ layer's kernel size and number of channels.
See \appref{app:quant_proofs} for precise definitions of the model and statement of the result. Importantly, similar to the fully connected architecture we observe that  $\BSCM{SCNN}$ is 
again close to the teacher's complexity, with only a weak dependence on the student's channel numbers. The proof here is analogous to the proof of \thmref{lem:int_quantized}, where neurons are replaced with channels.

\textbf{Implications to realistic benchmarks.}
Currently, no NN has achieved zero test error on real-world datasets, even for simple ones like MNIST. 
Therefore, the size of the teacher NN is unknown to us when facing practical applications. 
However, we can use the dimensions of common NN architectures and datasets to approximate the necessary size of the teacher to obtain meaningful generalization bounds using our results. 
For example, suppose that we substitute the actual sizes of a training set and network into our bound
$$
N = \frac{ \hat{C} + 3 \log{ \left( \frac{2}{\delta} \right) } }{\varepsilon} .
$$
We can deduce the size of the teacher required to satisfy this equation. 
For example, using the size of the ImageNet dataset, $\delta=0.05$, and the actual test error of some CNNs, we can estimate the required size of the teacher.
For simplicity of calculation, we assume that each layer of the teacher NN has exactly $\alpha$ channels compared to the same layer in the student NN, where $0< \alpha < 1$. 
In the following table, we show the required width reduction $\alpha$ and the number of parameters in the resulting teacher NN. 
We use the smallest quantization level (2bit) for which the NN accuracy remains near the FP32 accuracy (less than ~0.5\% degradation, from \citep{liu2022nonuniform}).

\begin{table}[h]
\centering
\begin{tabular}{|c|c|c|c|}
\hline
Architecture & $\varepsilon$ & $\alpha$ & \#parameters \\ \hline
ResNet18     & 0.3        & 0.125    & $\sim 241k$  \\ \hline
ResNet50     & 0.25       & 0.05     & $\sim 159k$  \\ \hline
\end{tabular}
\caption{Approximate relative width reduction and number of parameters in the teacher required to obtain meaningful bounds using standard ResNet architectures and the ImageNet dataset.}
\label{tab: numberical examples}
\end{table}

Table~\ref{tab: numberical examples} shows that our bound is consistent with a narrow teacher of non-trivial size and widths.
For example, in ResNet18 the resulting channel numbers in the teacher layers are $[3,8,...,8,16,..., 16, 32,..., 32, 64,....64]$, where ‘3’ counts the input channels and the rest are the following hidden layers.
This architecture can implement highly complex non-linear functions. 
Note that we cannot directly validate the existence of this teacher, as standard optimization methods may not be capable of finding such a solution efficiently, even if it exists (without over-parameterization, it is much harder to find global minima).

\section{Continuous Nets Sample Complexity}
\label{sec: continuous}
So far, we focused on quantized uniform priors. However, per-layer continuous spherical priors (e.g., Gaussian) are also quite common in practice and theory. Therefore, in this section, we show how to extend our results beyond the quantized case into a continuous setting, for the special case of two-layer NNs without bias and with the leaky rectifier linear unit (LReLU, \citet{Maas2013RectifierNI}) activation function. 
Formally, let $\student, \teacher$ be fully connected (Definition \ref{def: vanilla fcn}) two layer NNs with input dimension $d_{0}$, output dimension $d_{2}=1$ and hidden layer dimensions $d_{1}$ and $d_{1}^{\star}$, respectively.
Explicitly:
\begin{equation*}
\student\left(\mathbf{x}\right)=\sign\rb{\mathbf{W}^{\rb{2}}\sigma\left(\mathbf{W}^{\rb{1}}\mathbf{x}\right)}
\end{equation*}
\begin{equation*}
\teacher\left(\mathbf{x}\right)=\sign\rb{\mathbf{W}^{\rb{2}}_{\star}\sigma\left(\mathbf{W}^{\rb{1}}_{\star}\mathbf{x}\right)}
\end{equation*}
where
\[
\mathbf{W}^{\rb{1}}=\left[\mathbf{w}_{1},\dots,\mathbf{w}_{d_{1}}\right]^{\top}\in\mathbb{R}^{d_{1}\times d_{0}}\,,\,\mathbf{W}^{\rb{2}}\in\mathbb{R}^{d_2\times d_{1}}\,,
\]
\[
\mathbf{W}^{\rb{1}}_{\star}=\left[\mathbf{w}_{1}^{\star},\dots,\mathbf{w}_{d_{1}^{\star}}^{\star}\right]^{\top}\in\mathbb{R}^{d_{1}^{\star} \times d_{0}}\,,\,\mathbf{W}^{\rb{2}}_{\star} \in \mathbb{R}^{d_2\times d_1^{\star}}\,,
\]
and $\sigma\left(\cdot\right)$ is the common LReLU with parameter $\rho\notin \{0,1\}$.

As in the previous sections, our goal is to obtain generalization guarantees by lower bounding $\pteacher$ and then combining this results with \lemref{thm: gen of gac with teacher simple}. 
To this end, we first need to define some prior on the hypotheses. 
\begin{assumption}[Prior over parameters, continuous setting]\label{asm:continuous prior}
    Suppose that the weights of $\student$ are random such that each row of the first layer, $\mathbf{w}_i$, is independently sampled from a uniform distribution on the unit sphere $\mathbb{S}^{d_0 -1}$, and the second layer $\mathbf{W}^{\rb{2}}$ is sampled uniformly\footnote{Sampling $\mathbf{W}^{\rb{2}}$ from the unit sphere is equivalent to sampling it from a Gaussian distribution. In fact, any spherically symmetric distribution in $\mathbb{R}^{d_1}$ will suffice, as it amounts to scaling of the output without affecting the classification.} from $\mathbb{S}^{d_1 - 1}$.
    Both $\mathbf{W}^{\rb{(1)}}$ and $\mathbf{W}^{\rb{2}}$ are independent of the teacher and data.
\end{assumption}

\begin{figure}[t]
\centering     
\subfigure[]{\label{fig:a}\includegraphics[width=0.2\textwidth, trim={4cm 0cm 3.5cm 0cm},clip]{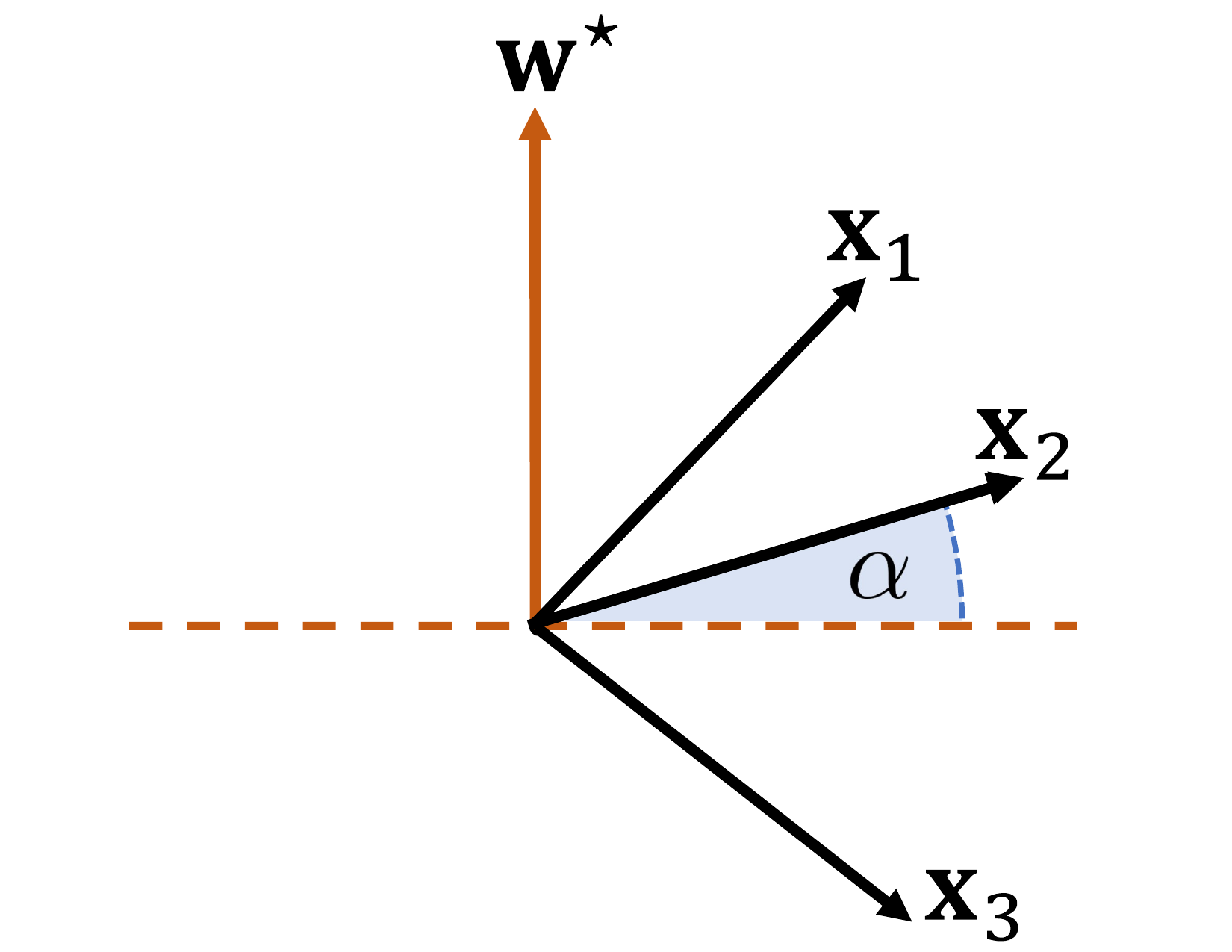}}\quad\quad~%
\subfigure[]{\label{fig:b}\includegraphics[width=0.2\textwidth, trim={4cm 0cm 3.5cm 0cm},clip]{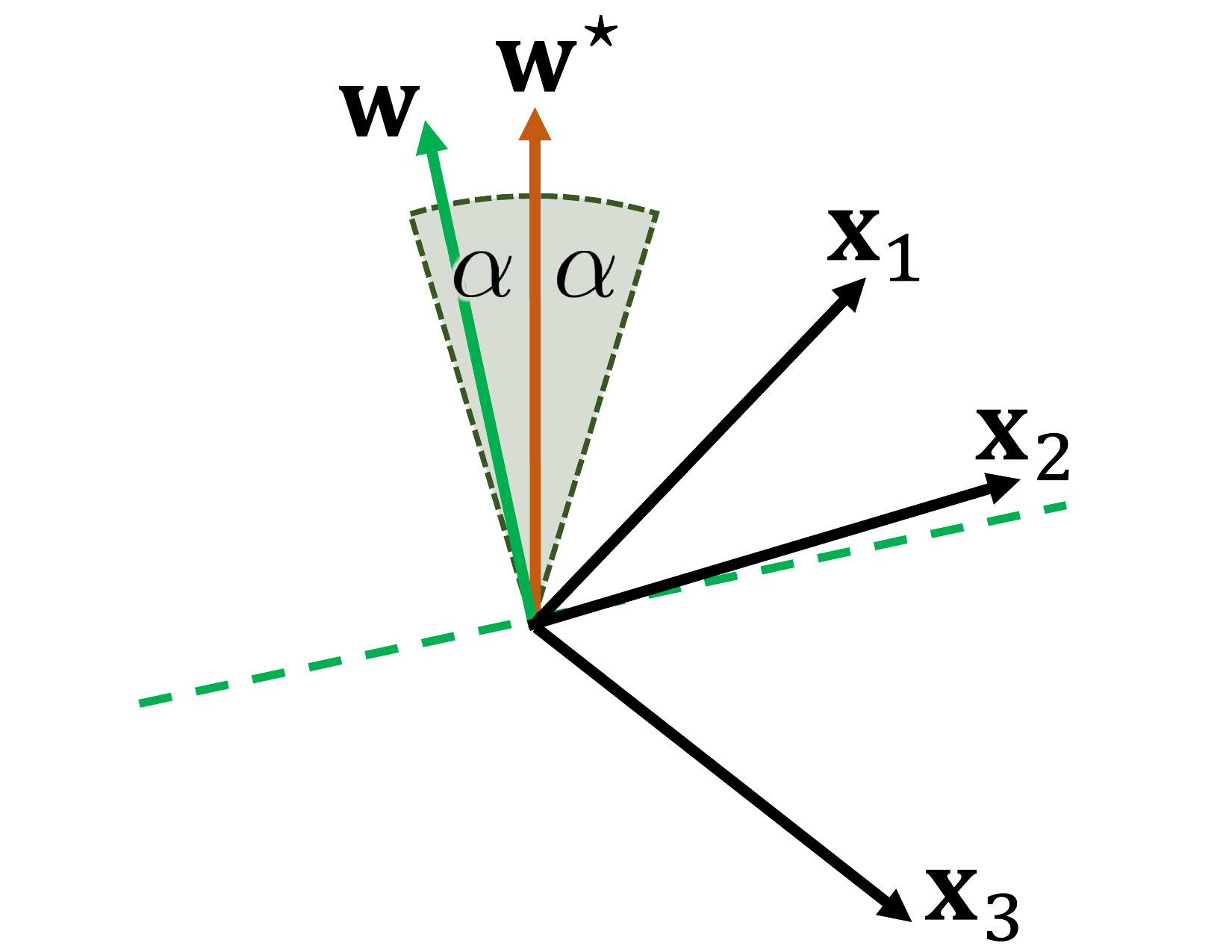}}
\caption{A two-dimensional illustration of the first layer angular margin. In \ref{fig:a}, we show how the angle $\alpha$ is defined for a single $\vect{w}_{i}^{\star}$. Note that $\alpha$ is defined as the minimal angle when considering all rows of $\vect{W}_{\star}^{\rb{1}}$. In \ref{fig:b}, we illustrate how $\alpha$ margin creates a cone around $\vect{w}_{i}^{\star}$ in which any $\mathbf{w}_i$ results in the same activation pattern as $\mathbf{w}_{i}^{\star}$ (i.e., as the teacher) on a training set.
}
\label{fig:margin}
\end{figure}

To extend the notion of teacher-equivalence from the quantized setting to the continuous setting, we assume an ``angular margin" exists between the training set $\calS$ and the teacher, similarly to \citet{soudry2017exponentially}.

\begin{definition}[First layer angular margin]\label{def: First layer angular margin}
    For any training set $\calS=\{\vect{x}_n \}_{n=1}^N$, we say that $\calS$ has \emph{first layer angular margin $\alpha$} w.r.t. the teacher if
    \begin{equation}
    \forall i\in[d_1^{\star}]\,,\,n\in[N]:\left|\frac{\mathbf{x}_n^{\top}\mathbf{w}_{i}^{\star}}{\left\Vert \mathbf{x}_n\right\Vert_2 \left\Vert \mathbf{w}_{i}^{\star}\right\Vert_2 }\right|>\sin\alpha \,.\label{eq: angular margin}
    \end{equation}
\end{definition}
In words, we say that $\calS$ has \emph{first layer angular margin $\alpha$} if all datapoints $\vect{x}_n$ are at an angle of at least $\alpha$ from any hyperplane induced by a row $\vect{w}_{i}^{\star}$ of the matrix $\mathbf{W}^{\rb{1}}_{\star}$. Here, the rows of $\mathbf{W}^{\rb{1}}_{\star}$ represent the normals to the hyperplanes. As illustrated in Figure \ref{fig:margin}, we require some angular margin $\alpha>0$ to guarantee that the first layer of the student network can be within a certain angular margin of the first layer of the teacher network and still achieve accurate classification on the training set. This assumption prevents degenerate neurons in the first layer of the teacher network.

Similarly, for the output of the teacher network, we define the second layer angular margin.

\begin{definition}[Second layer angular margin]\label{def: Second layer angular margin}
    For any training set $\calS=\{\vect{x}_n \}_{n=1}^N$, we say that $\calS$ has \emph{second layer angular margin $\beta$} w.r.t. the teacher if
    \begin{equation}\label{eq:output angular margin}
    \forall n\!\in\!\left[N\right]\!:
    \left|
    \frac{
    \mathbf{W}^{\rb{2}}_{\star}
    \sigma
    \big(\mathbf{W}^{\rb{1}}_{\star}\mathbf{x}_n\big)
    }{
    \big\Vert\mathbf{x}_n\big\Vert_2 \big\Vert \mathbf{W}^{\rb{2}}_{\star}
    \big\Vert_2 }
    \right|
    >\!
    \sqrt{d_1
    \!
    \left(1\!+\!\rho^2\right)}\sin\beta
    .
    \!
    \end{equation}
\end{definition}
In essence, this ensures some margin in the output of the teacher network. With this definition, our main assumption for the continuous case is stated below.
\begin{assumption}\label{asm: angular margin}
    Let $\alpha<\beta\in\rb{0,\frac{\pi}{2}}$. There exists $\lambda\in\rb{0,1}$ such that with probability at least $1-\lambda$ over $\calS \sim \mathcal{D}^N$, $\calS$ has first layer angular margin $\alpha$ (Definition \ref{def: First layer angular margin}) and second layer angular margin $\beta$ (Definition \ref{def: Second layer angular margin}).
\end{assumption}
\begin{remark}
    Note that $\alpha$ is the minimal margin of all hidden neurons, while $\beta$ is the margin of the single network output. Thus, intuitively, $\beta$ is usually larger than $\alpha$.
    For Gaussian data, we show empirically in \figref{fig:alpha-beta-sim} in \appref{app:cont_proofs}, that the assumption $\beta > \alpha$ holds with high probability.
\end{remark}

This assumption allows us to extend the results from the previous section to a continuous setting (proof in  \appref{app:cont_proofs}).

\begin{theorem} [Interpolation of Continuous Networks] \label{thm: interpolation probability of continuous networks}
Assume that $\hat{p}_{\calS} < \frac{1}{2}$ a.s. and $d_0 \gg d_1^{\star} \gg 1$ \footnote{This assumption is used to simplify the bound. A non-asymptotic version that does not require this assumption is in \appref{app:cont_proofs}.}. 
Then under \asmref{asm: angular margin}, for any $\varepsilon, \delta \in \rb{0,1}$ we have 
$$\bbP_{\calS \sim\calD^N, h \sim \mathcal{P}_{\mathcal{S}}} \rb{\exError{h} \leq \varepsilon} \ge 1-\delta\ - \lambda,$$
whenever $N$ is larger than the sample complexity
\begin{align*}
\frac{ \BSCM{cont} + 2\log \rb{\BSCM{cont}} + 4\log \rb{\frac{8}{\delta}}}{\varepsilon}\,, 
\end{align*}
with $\gamma = \arccos{\frac{\cos{\beta}}{\cos{\alpha}}}$ and 
\begin{align*}
    \BSCM{cont} &= - d_{1}^{\star} d_{0} \log\left(\sin\left(\alpha\right)\right) - d_{1} \log\left(\sin\left(\gamma\right)\right) \\
    & \quad + \frac{1}{2} d_1^{\star} \log \rb{d_0} + O\left( d_1^{\star} + \log{\left(d_1\right)}\right) \,.
\end{align*}
\end{theorem}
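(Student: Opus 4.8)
The plan is to lower-bound the probability $\hat p$ that a random student interpolates a training set with the assumed angular margins, by exhibiting an explicit ``near-teacher'' event of probability at least $2^{-\BSCM{cont}}$, and then to feed the resulting bound $-\log\hat p\le\BSCM{cont}$ into the Guess-and-Check generalization argument exactly as in \thmref{thm: gen of gac with teacher simple}, with ``interpolates $\calS$'' playing the role of ``teacher-equivalent''. Throughout I would condition on the event of probability $\ge 1-\lambda$ that $\calS$ has first- and second-layer angular margins $\alpha,\beta$ (\asmref{asm: angular margin}); the $-\lambda$ in the conclusion absorbs its complement. Since leaky ReLU is positively homogeneous, I would first rescale the teacher so that each first-layer row $\mathbf w_i^\star$ is a unit vector, without changing the teacher's output sign.

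Next I would construct the near-teacher event $E$ over the random student parameters (which is independent of $\calS$): (i) for every $i\in[d_1^\star]$ the first-layer row $\mathbf w_i$ lies in the spherical cap of angular radius $\alpha$ around $\mathbf w_i^\star$ in $\mathbb{S}^{d_0-1}$; and (ii) the unit vector $\mathbf W^{\rb{2}}\in\R^{d_1}$ lies in the spherical cap of angular radius $\gamma=\arccos(\cos\beta/\cos\alpha)$ in $\mathbb{S}^{d_1-1}$ around the direction $\mathbf u\triangleq(\mathbf W^{\rb{2}}_{\star}/\norm{\mathbf W^{\rb{2}}_{\star}},\mathbf 0)$ obtained by zero-padding the normalized teacher output layer. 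Conditionally on the margins, I claim $E$ forces $\emError{\student}=0$. Under (i), the first-layer margin (\defref{def: First layer angular margin}) gives $\sign(\mathbf w_i^\top\mathbf x_n)=\sign((\mathbf w_i^\star)^\top\mathbf x_n)$ for all $n$ and $i\le d_1^\star$, so $\sigma$ scales both by the same $\kappa_{i,n}\in\{1,\rho\}$ and $\sigma(\mathbf w_i^\top\mathbf x_n)-\sigma((\mathbf w_i^\star)^\top\mathbf x_n)=\kappa_{i,n}(\mathbf w_i-\mathbf w_i^\star)^\top\mathbf x_n$ with $\norm{\mathbf w_i-\mathbf w_i^\star}\le\alpha$. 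Writing $\mathbf W^{\rb{2}}=\cos\phi\,\mathbf u+\sin\phi\,\mathbf u^{\perp}$ with $\phi<\gamma$, the student pre-activation at $\mathbf x_n$ decomposes into a main part equal to $\tfrac{\cos\phi}{\norm{\mathbf W^{\rb{2}}_{\star}}}$ times the teacher pre-activation, a first-layer perturbation of magnitude $\le\sqrt{d_1^\star}\,\alpha\,\norm{\mathbf x_n}\norm{\mathbf W^{\rb{2}}_{\star}}$, and an error part (the $\sin\phi$ component of $\mathbf W^{\rb{2}}$ against the hidden activations, bounded by $\abs{\mathbf w_j^\top\mathbf x_n}\le\norm{\mathbf x_n}$) of magnitude $\le\sin\phi\,\sqrt{d_1}\,\norm{\mathbf x_n}$. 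The second-layer margin (\defref{def: Second layer angular margin}), with its $\sqrt{d_1(1+\rho^2)}\sin\beta$ factor, is exactly the budget needed so that these two perturbations cannot flip the output sign, and the identity $\cos\gamma\cos\alpha=\cos\beta$ is the right-spherical-triangle relation making the first-layer $\alpha$-budget and the second-layer $\gamma$-budget add up to the available $\beta$-margin.

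Then I would compute $\bbP(E)$: by \asmref{asm:continuous prior} the $d_1$ first-layer rows and $\mathbf W^{\rb{2}}$ are independent, so $\bbP(E)$ factorizes into $d_1^\star$ cap probabilities in $\mathbb{S}^{d_0-1}$ and one in $\mathbb{S}^{d_1-1}$. Using the standard lower bound $\Omega\big((\sin\psi)^{m-1}/\sqrt m\big)$ on the normalized area of a cap of angular radius $\psi$ in $\mathbb{S}^{m-1}$ and taking $-\log$, I get $-\log\bbP(E)\le d_1^\star\big[(d_0-1)(-\log\sin\alpha)+\tfrac12\log d_0+O(1)\big]+\big[(d_1-1)(-\log\sin\gamma)+\tfrac12\log d_1+O(1)\big]$, which under $d_0\gg d_1^\star\gg1$ collapses to $\BSCM{cont}$. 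Hence $-\log\hat p\le-\log\bbP(E)\le\BSCM{cont}$. Plugging this into the proof of \thmref{thm: gen of gac with teacher simple} — where crudely bounding the random Guess-and-Check stopping time together with the cap normalizations accounts for the slightly larger additive terms $2\log(\BSCM{cont})+4\log(8/\delta)$ — yields $\exError{\student}\le\epsilon$ with probability $\ge 1-\delta$ conditionally on the margin event, i.e.\ $\ge 1-\delta-\lambda$ overall, whenever $N\ge(\BSCM{cont}+2\log(\BSCM{cont})+4\log(8/\delta))/\epsilon$.

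\textbf{Main obstacle.} The cap-volume estimates and the reduction to \thmref{thm: gen of gac with teacher simple} are routine; the delicate step is (ii) — showing that a radius-$\alpha$ first-layer cap and a radius-$\gamma$ second-layer cap with $\cos\gamma=\cos\beta/\cos\alpha$ jointly keep the student output inside the teacher's $\beta$-margin on $\calS$. This is where the non-oddness of leaky ReLU bites: the matched neurons are only within a cone of the teacher's, not scalar multiples, so one must simultaneously control the per-neuron linear perturbation $\kappa_{i,n}(\mathbf w_i-\mathbf w_i^\star)^\top\mathbf x_n$ and the leakage from the $d_1-d_1^\star$ extra neurons, and verify that the $\sqrt{d_1(1+\rho^2)}$ scaling built into \defref{def: Second layer angular margin} is precisely the slack that closes the argument.
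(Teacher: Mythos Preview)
Your overall strategy---lower-bound $\hat p$ by the probability of a near-teacher event built from spherical caps, take $-\log$, and feed the result into a G\&C-type bound while conditioning on the margin event---matches the paper, and the cap-volume asymptotics and the handling of the $-\lambda$ term are fine. The gap is precisely where you flag it: your perturbation decomposition does not close with $\cos\gamma\cos\alpha=\cos\beta$. Your first-layer perturbation enters as an \emph{additive} term of size $O(\sqrt{d_1^\star}\sin\alpha\,\|\mathbf x_n\|)$ (and $\|\mathbf w_i-\mathbf w_i^\star\|\le\alpha$ should be $2\sin(\alpha/2)$), competing against a main term of size $O(\sqrt{d_1}\sin\beta\,\|\mathbf x_n\|)$ and a leakage of size $O(\sin\phi\,\sqrt{d_1}\,\|\mathbf x_n\|)$. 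Balancing these yields a second-layer cap radius that depends on $\sqrt{d_1^\star/d_1}$ and on $\rho$, not on $\arccos(\cos\beta/\cos\alpha)$; since the theorem does not assume $d_1\gg d_1^\star$, the $\sqrt{d_1^\star}$ term cannot simply be absorbed.

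The paper avoids this by \emph{linearizing} rather than perturbing. Once the first-layer activation patterns agree on the first $d_1^\star$ neurons, both networks satisfy $y^{(n)}h(\mathbf x^{(n)})=\mathbf w^\top\boldsymbol\phi^{(n)}$ and $y^{(n)}h^\star(\mathbf x^{(n)})=(\mathbf w^\star)^\top\boldsymbol\phi^{(n)}$ for the \emph{same} feature $\boldsymbol\phi^{(n)}=(\boldsymbol a^{(n)}\otimes\mathbf x^{(n)})y^{(n)}\in\R^{d_0 d_1}$, with $\mathbf w=\mathrm{vec}(\mathbf W^\top\mathrm{diag}(\mathbf z))$. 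The second-layer margin then becomes an angular margin between $\mathbf w^\star$ and each $\boldsymbol\phi^{(n)}$, so $\langle\mathbf w,\mathbf w^\star\rangle/(\|\mathbf w\|\,\|\mathbf w^\star\|)>\cos\beta$ already forces interpolation. After additionally conditioning on the per-coordinate event $\{z_i z_i^\star>0,\ i\le d_1^\star\}$ (a $2^{-d_1^\star}$ factor absorbed into $O(d_1^\star)$, which your construction omits), one has $\langle\mathbf w,\mathbf w^\star\rangle\ge\cos\alpha\,|\mathbf z^\top\mathbf z^\star|$ and $\|\mathbf w\|=\|\mathbf z\|$, so the first layer contributes a \emph{multiplicative} $\cos\alpha$---and that is exactly what turns the $\beta$-cap on $\mathbf w$ into the $\gamma$-cap on $\mathbf z$. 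A minor point: the additive constants $2\log(\BSCM{cont})+4\log(8/\delta)$ come from the $\hat p$-based bound in \propref{theorem:pscard simplified} (via \thmref{thm: gen_bound_app}), not from \thmref{thm: gen of gac with teacher simple}, which is phrased in terms of $\pteacher$.
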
 

\begin{remark}
Since \asmref{asm: angular margin} gives a positive margin only in high probability, we use a different generalization guarantee than the one in \lemref{thm: gen of gac with teacher simple} which takes into account the interpolation probability $\hat{p}_\calS$, instead of $\pteacher$.
See \appref{app: nonuniform gen bound} for more details.
\end{remark}
\begin{remark}
Although \asmref{asm: angular margin} may be natural in some cases\footnote{Realistic data many times has some intrinsic margin. For example, there is a low probability for (semantic) `mixings' of dogs and birds from a distribution of natural images of dogs and birds (with a single animal in every image). }, 
in other cases the margins $\alpha, \beta$ may decay with $N$ or with the network's dimensions.
Therefore, even if the assumption holds, for the generalization bound in \thmref{thm: interpolation probability of continuous networks} to remain meaningful, we need the margin decay rate to be sufficiently slow.\footnote{For example, for Gaussian data with a random teacher it is possible to derive a lower bound in high probability on the first layer angular margin $\alpha$ and show that it decays as $1/(N d_0 d_1^\star)$, so it adds only a multiplicative $\log(N d_0 d_1^\star)$ factor to the bound.}
\end{remark}

\textbf{Implications.} Our results in this section rely on a margin assumption instead of quantized weights. This margin assumption can improve the quantized bound given the data $\calS$ has a large enough margin from the teacher. However, for a generic input distribution without any pre-set margin (e.g., standard Gaussian) we observed empirically the resulting margin is near the numerical precision level, so the resulting bound is not better than the quantized approach.

\section{Related work}

\textbf{Random interpolating neural networks.} \citet{chiang2023loss} empirically studied a few gradient-free algorithms for the optimization of neural networks, suggesting that generalization is not necessarily due to the properties of the SGD. 
For the \GaCtext{} algorithm, they also empirically investigated the effect of the loss on the generalization. We focus on the 0-1, rather than some surrogate loss function.
\citet{chiang2023loss} focused only on small-scale datasets since it is not possible to run their experiments when the training set is larger than a few dozen samples.
\citet{theisen2021good} provided a theoretical analysis of random interpolating linear models. 
In contrast, our work focuses on deeper models with non-linear activation functions, and our teacher assumption relies on depth, such that it is possible to have many zeroed-out hidden neurons and obtain generalization guarantees that stem from the redundancy in parameterized deep networks.
\citet{vallepérez2019deep,mingard2021sgd} used a connection between neural networks and Gaussian processes to model random interpolating networks, which usually requires the width of the network to be infinite, in contrast with our finite width setting.
\citet{teney2024neural} empirically investigated random fully connected neural networks without conditioning on the interpolation of a training set. They examined their spectral properties and found a bias towards simple functions (according to various metrics). \citet{Berchenko_2024} showed generalization results for uniformly sampled trees learning Boolean functions, and discuss the simplicity bias of random deep NNs.
In contrast, we proved an explicit generalization bound for typical interpolating deep neural networks.

\textbf{Redundancy in neural networks.} The Lottery Ticket Hypothesis \citep{frankle2019lottery} suggests that for any random neural network, there exists a sparse sub-network capable of competitive generalization with the original. A related hypothesis was conjectured by \citet{ramanujan2020whats} and later proven by \citet{malach2020proving} and states that it is possible to prune an initialized neural network without significantly affecting the obtained predictions.
In contrast, our work, while also focusing on random neural networks, is oriented towards providing assurances regarding the generalization of these random networks, when conditioned on the event of perfectly classifying the training data. Also, we operate under the assumption of a narrow teacher, which is sparse in the number of neurons --- in contrast to the sub-network in the Lottery Ticket Hypothesis which is sparse in the number of weights.

\section{Discussion} \label{sec: discussion}
\textbf{Summary.} In this work, we examined the generalization of samples from the NN posterior with the 0-1 loss \eqref{eq: posterior}. We proved that even when using a uniform (or `uniform-like') prior for the parameters, typical samples from the NN posterior are biased toward low generalization error --- if there exist sufficiently narrow NN teachers.

\textbf{Implications of narrowness.} 
Assuming a narrow teacher may sometimes limit the class of possible target functions. 
For example, in a fully connected NN, if the first hidden layer width is smaller than the input width $\rb{d_{0}>d_{1}^{\star}}$ then the teacher NN must be constant on the $d_{0}-d_{1}^{*}$ nullspace of the first weight layer.
However, this is not an issue for a convolutional NN, since typically the number of input channels is small (e.g., $c_{0}=3$ for RGB images), and so $c_{0}<c_{1}^{*}$ is reasonable. 
In any case, (1) many realistic datasets are low rank \citep{udell2019big, zeno2024minimum}, and (2) in order to represent potentially complex target functions with narrow NNs we require depth \citep{kidger2020universal} --- both for student and teacher. 
This might suggest another reason why deep architectures are more useful in realistic settings.

\textbf{Quantized models.} Our results in Section \ref{sec:quant_nets} rely on NNs to have a quantized weights. The sample complexity we obtain is a product of the parameter count of the model (mainly the teacher) and the quantization bits. The simplicity of the bound allows it to be applied to other architectures as well. 
For example, we can add pooling layers to the CNN models we analyzed. 
Also, using the same considerations, we may obtain similar results for multi-head attention layers when the student has redundant heads and no LayerNorm.  Moreover, using quantization-aware methods \citep{hubara2018quantized} one can reduce numerical precision to improve the bound. For example, using 2bits weights (and activations) in ResNet50 on ImageNet results in only 0.5\% degradation in accuracy \citep{Liu2022}). Such quantization approaches are common in compression-type bounds (e.g. \citet{lotfi2022pacbayes}). However, there the goal is to compress the learned model (the student), while here we only need some compressed model to \textit{exist} (i.e.~the narrow teacher).

\textbf{Beyond interpolators.} In this work we focus for simplicity on NN interpolators, but in many scenarios, the training loss does not reach zero. 
In \appref{app: non interpolators} we show how to generalize our results to this non-realizable case, where the teacher NN does not reach zero trainning error (i.e., there is some irreducible error). 
Unfortunately, the use of the non-realizable generalization bound for finite hypothesis classes introduce a quadratic dependence of the sample complexity on the generalization error bound $\varepsilon$.
It is interesting to see if this quadratic dependence can be improved.

\textbf{Does posterior sampling bias towards sparse representations?} Our results indicate that posterior sampling biases NNs towards sparse representations. To examine whether this also happens in other models, we empirically examined posterior sampling in a sparse regression setting for linear diagonal networks (a common theoretical model, e.g. \citet{pmlr-v125-woodworth20a,moroshko2020implicit}) with a Gaussian weights prior, where the ground truth has a single nonzero component.
In that case, we do get a bias toward a sparser predictor, but only with sufficient depth.
Specifically, we found that a small depth (2 or 3) does not help much to improve generalization compared to depth 1. 
However, larger depth did seem to help significantly.
In contrast, the NN's result in this paper is different, since there depth is not necessary to obtain good generalization results (as our results hold even with depth 2).

\textbf{Parameterization and minimum description length.}
The results in this paper rely heavily on the choice of parameterization of the hypotheses.
Specifically, as discussed in \secref{sec:gen_bound}, we can obtain results similar to \lemref{thm: gen of gac with teacher simple} using a Minimum Description Length (MDL)/Occam's Razor learning rule.
Choosing a different mapping from parameters (or descriptions) to hypotheses may not benefit from the redundancy and will result in different generalization bounds which may be worse and even trivial.
An example for such parameterization and further discussion are presented in \appref{app: MDL}.

\textbf{Relation to SGD.}
As we have mentioned in \remref{rem: volume quantized}, the volume of `bad' interpolators is exponentially decaying in the size of the training set.
Therefore, algorithms with bad generalization properties must have significant probability to sample this `small' subset of hypotheses.
Therefore, it would be interesting to know when practical training algorithms are biased towards this `bad' domain.
For example, the implicit bias of SGD is partially known in some cases (e.g., \citet{lyu2019gradient}), so it would be interesting to understand the generalization of typical NN interpolators that also obey this implicit bias.

\section*{Impact Statement}
This paper presents work whose goal is to advance the field of Machine Learning. There are many potential societal consequences of our work, none which we feel must be specifically highlighted here.

\section*{Acknowledgments}
The authors would like to thank Itay Evron and Yaniv Blumenfeld for valuable comments and discussions. The research of DS was Funded by the European Union (ERC, A-B-C-Deep, 101039436). Views and opinions expressed are however those of the author only and do not necessarily reflect those of the European Union or the European Research Council Executive Agency (ERCEA). Neither the European Union nor the granting authority can be held responsible for them. DS also acknowledges the support of the Schmidt Career Advancement Chair in AI. Part of this work was done as part of the NSF-Simons funded Collaboration on the Theoretical Foundations of Deep Learning.  NS was supported in part by NSF IIS awards. 

\bibliography{999_biblio}
\bibliographystyle{icml2024}

\newpage
\appendix
\onecolumn
\newpage
\section{Redundancy and Description Length}
\label{app: MDL}
\newcommand{\tirm}[1]{\mathrm{trim}(\sigma)}

In this section, we want to give a broader perspective on the effect of the parameterization on  $\pteacher = \bbP_{h \sim \mathcal{P}} \rb{h \equiv h^{\star}}$. 
As we saw, the parameterization controls the generalization behavior of \GaCtext{} and posterior sampling, and how a seemingly uniform prior over parameters can induce a rich prior over hypothesis via redundancy. We first consider a conceptually very direct parameterization.  

\textbf{How parameter redundancy works: a minimal example.} Consider a predictor $h_\sigma$ specified by a description language (i.e.~parameterization) using bit strings $\sigma$, where descriptions end with the string ``END'' (encoded in bits), and this string never appears elsewhere in the description\footnote{This can be a Turing complete programming language, or perhaps better to think of simpler descriptions such as boolean formulas or strings encoding decision trees.}. 
We will consider students that use fixed-length descriptions, i.e.,~$\sigma \in \{0,1\}^\C$, where we only consider the description up to the first ``END'', i.e.~$h_\sigma=h_{\tilde{\sigma}}$ where $\tilde{\sigma}$ is a prefix of $\sigma$ ending with the first ``END''.\footnote{If $\sigma$ does not contain ``END'', or $\tilde{\sigma}$ is not a valid description, we can set $h_\sigma$ to the constant 0 predictor.}  
Let us now consider a uniform prior over this parameterization, i.e.~$\mathcal{P}(\sigma)=2^{-\C}$ for all $\sigma \in \{0,1\}^\C$. 
Although this prior is uniform over {\em parameterizations}, it is easy to see that it induces a highly {\em non-uniform} prior over predictors: for a predictor $h_{\tau}$ with a short description $\tau\in\{0,1\}^{\abs{\tau}}$, $\abs{\tau} \ll \C$, it's $\C$-bit description is highly redundant, as the final $\C-\abs{\tau}$ bits can be set arbitrarily, as long as the first bits are $\tau$, and so $\mathcal{P}(h_\tau) \geq 2^{-\abs{\tau}}$. 
For a teacher with a short description $h^\star=h_{\tau^\star}$, $\tau^\star \in \{0,1\}^{\C^\star}$, we thus have $-\log \pteacher \leq \C^\star$. 
The sample complexity of learning a small teacher thus depends {\em only} on the complexity of the teacher and not the complexity of the (possibly much larger) student.

\textbf{Why would MDL fail here?} Returning to the MDL / Occam principle mentioned at the end of \secref{sec:gen_bound}, it is important to note that applying this principle to the distribution over {\em parameters} would not work here.  Consider the MDL / Occam / MAP rule which selects the most likely interpolating parameters $\arg\max_{\mathcal{L}_\calS(h_\sigma)}\mathcal{P}(\sigma)$.  This rule would be useless here, since for all $\sigma$ in our parameter space $\{0,1\}^\C$ have the same prior probability, and its associated sample complexity would be $O(-\log \mathcal{P}(\sigma^\star))=\C$ (not $\mathcal{P}(h_{\sigma^\star})$ !), where $\sigma^\star$ is a description of $h^\star$ in our parameter space (e.g.~$\sigma^\star = \tau^\star+$`0'$\cdot(\C-\C^\star)$).  The important distinction is that Occam considers the probability mass function over {\em parameters} (roughly, the density under a specific parameterization or base measure).  On the other hand,  even if we sample $h_\sigma$ by sampling parameters $\sigma$, posterior sampling (i.e. \GaCtext{}) can be thought of directly in terms of the {\em distribution over hypotheses}.

\textbf{When would redundancy fail?} To see how changing the parameterization can change the induced distribution over hypothesis, consider instead a non-redundant parameterization, where we take $\sigma$ to be uniform over valid descriptions, of length at most $\C$, ending with ``END'' (i.e. strings ending with ``END'' and that do not otherwise contain ``END'').  The induced prior over hypothesis is now uniform\footnote{We might still have further redundancies in that multiple valid descriptions can describe the same function, again introducing non-uniformity.  Consider here a non-redundant description language, e.g.~a read-once branching program.} and even using posterior sampling/\GaCtext{} would have sample complexity determined by the complexity of the student, rather than the teacher.

\textbf{When would MDL succeed?} Finally, we note that we could of course learn a short teacher with an Occam rule that is {\em explicitly} biased towards short descriptions, e.g.~using the Kraft prior\footnote{We again absorb any remaining probability in the constant zero predictor.  Here we are thinking of an unbounded student and the parameter space being any string ending with the first occurrence of ``END''.  We can also of course bound the size of the student.} $\mathcal{P}(\sigma)=2^{-\abs{\sigma}}$.  Using such a prior, the Occam rule also enjoys a sample complexity of $O(\C^\star)$ that depends only on the length of the teacher.  But here the prior over parameters is non-uniform and explicitly biased, and our interest in this paper is in how seemingly uniform priors over parameters can induce non-uniform priors and generalization over predictors due to the choice of parameterization.  
\newpage
\section{Generalization Results}
\label{app:gen_bound}

This section contains:
\begin{itemize}
    \item A restatement and discussion on the connection to a well-known result from \citep[Corollary 2.3]{shalev2014understanding} in \appref{app: finite hypo PAC}.
    \item The proof of our primary generalization result (\lemref{thm: gen of gac with teacher simple}) in Appendix \ref{app: direct gen bound}.
    \item The proof of \corref{cor:volume} in \appref{app:proof_volume}.
    \item An extension of \lemref{thm: gen of gac with teacher simple} to non-interpolating solutions in \appref{app: non interpolators}.
    \item A further discussion on the relation of \lemref{thm: gen of gac with teacher simple} to PAC-Bayes is brought in \appref{app: pac bayes rel app}
    \item In Appendix \ref{app: refined gen bound} we prove a refined version of \lemref{thm: gen of gac with teacher simple} and discuss its implications.
    \item The proof of an alternative generalization result, used for proving generalization in the continuous setting, in Appendix \ref{app: nonuniform gen bound}.
\end{itemize}

\subsection{Finite Hypothesis Class PAC Generalization Bound}\label{app: finite hypo PAC}
Here we present a classic generalization bound for finite hypothesis classes adapted from \cite{shalev2014understanding}, which we will use throughout our paper.
\begin{theorem}\label{Theorem: classic finite hypothesis class generalization result} $[$adapted from \citep[Corollary 2.3]{shalev2014understanding}$]$
Let $\mathcal{H}$ be a finite hypothesis class. 
Let $\delta\in\rb{0,1}$ and $\varepsilon>0$ and let $N$ be an integer that satisfies 
\begin{align*}
N\ge\frac{\log\rb{\abs{\mathcal{H}}/\delta}}{\varepsilon}\,.
\end{align*}
Then, for any realizable data distribution $\mathcal{D}$, with probability of at least $1-\delta$ over the choice of an i.i.d sample $S$ of size $N$, we have that for every interpolating\footnote{\citep[Corollary 2.3]{shalev2014understanding} is more general, but we focus on interpolating hypotheses.} hypothesis, $\student$, it holds that
\begin{align*}
\mathcal{L}_\mathcal{D}\rb{h_\calS}\le\varepsilon\,.
\end{align*}
\end{theorem}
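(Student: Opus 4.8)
The plan is to run the textbook union-bound argument for realizable PAC learning. Fix $\epsilon>0$, $\delta\in(0,1)$, and an integer $N\ge\log(\abs{\calH}/\delta)/\epsilon$, and let $\calD$ be realizable (so that with probability one some hypothesis in $\calH$ is consistent with the sample, making the claim non-vacuous). Call $h\in\calH$ \emph{$\epsilon$-bad} if $\exError{h}>\epsilon$, and let $\calH_{\mathrm{bad}}\subseteq\calH$ be the set of all $\epsilon$-bad hypotheses; note that $\calH_{\mathrm{bad}}$ depends only on $\calD$, not on the random sample $\calS$. The reduction is: any consistent hypothesis $h\in\calH$ with $\exError{h}>\epsilon$ witnesses the event $B\triangleq\{\exists h\in\calH_{\mathrm{bad}}:\emError{h}=0\}$, so it suffices to show $\bbP_{\calS\sim\calD^N}(B)\le\delta$; on $B^c$ every consistent $h\in\calH$ automatically satisfies $\exError{h}\le\epsilon$.

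First I would bound, for a single fixed $h\in\calH_{\mathrm{bad}}$, the probability that it is consistent with $\calS$. Since the $N$ points are i.i.d.\ and $h$ errs on each independently with probability $\exError{h}>\epsilon$, we get $\bbP_{\calS\sim\calD^N}(\emError{h}=0)=(1-\exError{h})^N<(1-\epsilon)^N\le e^{-\epsilon N}$, using $1-t\le e^{-t}$. Then a union bound over $\calH_{\mathrm{bad}}\subseteq\calH$ gives $\bbP(B)\le\sum_{h\in\calH_{\mathrm{bad}}}\bbP(\emError{h}=0)\le\abs{\calH}\,e^{-\epsilon N}$.

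Finally, plugging in $N\ge\log(\abs{\calH}/\delta)/\epsilon$ yields $\abs{\calH}\,e^{-\epsilon N}\le\abs{\calH}\cdot(\delta/\abs{\calH})=\delta$, hence $\bbP(B)\le\delta$. On the complementary event of probability at least $1-\delta$, no $\epsilon$-bad hypothesis is consistent with $\calS$, so in particular every interpolating $h\in\calH$ (i.e.\ with $\emError{h}=0$) has $\exError{h}\le\epsilon$, which is the claim.

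There is no serious obstacle here --- it is a classic argument. The only points requiring a little care are that the per-hypothesis tail bound and the union bound compose correctly because $\calH_{\mathrm{bad}}$ is a deterministic (sample-independent) subset of the finite class, and that the conclusion is stated uniformly over all consistent hypotheses in $\calH$ rather than for a single pre-specified one.
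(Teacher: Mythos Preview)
Your proof is correct and is exactly the standard union-bound argument for realizable PAC learning. The paper does not actually give its own proof of this statement; it simply restates the result with a citation to \cite{shalev2014understanding}, so your argument is precisely the textbook proof being invoked.
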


\textbf{Applying \thmref{Theorem: classic finite hypothesis class generalization result} to quantized neural networks.} It is possible to directly obtain a generalization bound for any quantized model, such as those presented in \secref{sec:quant_nets}. 
Note that the hypothesis class of quantized models with a finite number of parameters, for example, neural networks with finite width and depth, with $M$ parameters and $Q$ quantization levels is finite and of size $\left|\mathcal{H}\right|=Q^{M}$. Therefore, the sample complexity from \thmref{Theorem: classic finite hypothesis class generalization result} becomes 
\begin{align}\label{eq: naive quantized sample complexity}
N\ge\frac{M\log\rb{Q}+\log\rb{\frac{1}{\delta}}}{\varepsilon}
\end{align}

Note that \thmref{Theorem: classic finite hypothesis class generalization result} is used to prove \lemref{thm: gen of gac with teacher simple}.The sample complexity in \eqref{eq: naive quantized sample complexity} is dependent on the student parameters, and therefore much worse than the sample complexities we derive throughout our paper.
\newpage
\subsection{Proving the Generalization of Algorithm \ref{alg:Guess and Check}} \label{app: direct gen bound}
In this section, we rely on the existence of the teacher to provide guarantee on the generalization of Algorithm \ref{alg:Guess and Check}. We will first restate \lemref{thm: gen of gac with teacher simple}.

\begin{lemma}[\lemref{thm: gen of gac with teacher simple} restated]\label{thm: gen of gac with teacher app}
Let $\varepsilon\in\left(0,1\right)$ and $\delta\in\rb{0,\frac{1}{5}}$, and assume that $\pteacher<\frac{1}{2}$.
For any $N$ larger than 
\begin{align*}
 \left(-\log\left({\pteacher}\right) + 3\log\left(\frac{2}{\delta}\right)  \right) \frac{1}{\varepsilon} \,,    
\end{align*}
the sample complexity, we have that
    \begin{align*}
    \mathbb{P}_{\calS \sim\mathcal{D}^{N}, h\sim\mathcal{P}_\mathcal{S}}\left(\exError{h}\le\varepsilon\right) \ge 1-\delta\,.
    \end{align*}
\end{lemma}

\textbf{Proof Outline.} The idea is to show that the probability to sample a TE model within a finite number of steps $\tau$ is large. Conditioning on that event, we treat those models as a realizable finite hypothesis class, obtaining the sample complexity using \thmref{Theorem: classic finite hypothesis class generalization result}, which is then bounded, and after some technical details the theorem is obtained.

We first recall \defref{def: TE} and add some notation:
\begin{definition}$[$\defref{def: TE} extended$]$
For any hypothesis class $\mathcal{H}$, we say that $h \in\mathcal{H}$ is a \textit{teacher-equivalent} (TE)
model w.r.t $\mathcal{D}$, and denote $h \equiv h^{\star}$, if 
\[
\mathbb{P}_{\mathbf{x}\sim \mathcal{D}} \rb{h\left(\mathbf{x}\right)=h^{\star}\left(\mathbf{x}\right)}=1\,.
\]
We denote the probability of a random hypothesis to be TE by 
\begin{align*}
    \pteacher=\bbP_{h \sim \mathcal{P}} \rb{h \equiv h^{\star}}\,
\end{align*}
and denote
\begin{align*}
    h^\star \in \mathcal{H} \; \Longleftrightarrow \; \exists h \in \mathcal{H} \; : \; h \equiv h^\star \;.
\end{align*}
\end{definition}

\begin{definition}
    Recall the sequence $\rb{h_t}_{t=1}^\infty$ from \algref{alg:Guess and Check} is sampled such that $h_t \overset{\textrm{i.i.d.}}{\sim} \calP$. 
    For any $\tau\in\bbN$ we define
    \begin{align*}
        \mathcal{H}_{\tau}=\{h_1,\dots,h_{\tau}\}\sim\mathcal{P}^{\tau}        
    \end{align*}
    as the finite hypothesis class created from the first $\tau$ hypotheses.
\end{definition}

We first show that the probability of sampling a TE model early enough is large.

\begin{lemma}\label{lem: eta bound}
    For $\tau=\left\lceil \frac{\log\left(\delta_{h}\right)}{\log\left(1-\pteacher\right)}\right\rceil$ it holds that 
       \begin{align*}
        \bbP_{\mathcal{H}_{\tau}}\rb{h^{\star} \in \mathcal{H}_{\tau}} \ge 1-\delta_{h}\,.
    \end{align*}
    That is, the probability to have a TE model within the first $\tau$ sampled model is at least $1-\delta_{h}$.
\end{lemma}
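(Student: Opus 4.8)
The plan is to exploit the independence of the draws $h_1,h_2,\dots \overset{\textrm{i.i.d.}}{\sim}\calP$ in \algref{alg:Guess and Check} and reduce the claim to a one-line geometric-tail computation.

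First I would observe that, by \defref{def: TE}, each indicator $\mathbb{I}\bb{h_t \equiv h^\star}$ is a Bernoulli random variable with success probability $\pteacher$, and that these indicators are mutually independent across $t$, since the $h_t$ are i.i.d.\ and the event ``$h_t \equiv h^\star$'' depends only on $h_t$ (the teacher is fixed and does not depend on the sampled sequence). Hence the complementary event $\{h^\star \notin \mathcal{H}_\tau\}$ is exactly $\bigcap_{t=1}^{\tau}\{h_t \not\equiv h^\star\}$, so
\[
\bbP_{\mathcal{H}_\tau}\rb{h^\star \notin \mathcal{H}_\tau} = \prod_{t=1}^{\tau}\bbP_{h_t\sim\calP}\rb{h_t \not\equiv h^\star} = \rb{1-\pteacher}^{\tau}.
\]

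Next I would show that the stated choice of $\tau$ makes this tail at most $\delta_h$. Since $0 < \pteacher < \tfrac12$ we have $\log(1-\pteacher) < 0$, and since $\delta_h \in (0,1)$ we have $\log \delta_h < 0$; thus $\log(\delta_h)/\log(1-\pteacher)$ is a positive real and $\tau = \lceil \log(\delta_h)/\log(1-\pteacher)\rceil \ge \log(\delta_h)/\log(1-\pteacher)$. Multiplying this inequality by the negative number $\log(1-\pteacher)$ reverses it, giving $\tau \log(1-\pteacher) \le \log(\delta_h)$, i.e.\ $\log\rb{(1-\pteacher)^{\tau}} \le \log(\delta_h)$, and exponentiating yields $(1-\pteacher)^{\tau} \le \delta_h$. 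Combining with the display above, $\bbP_{\mathcal{H}_\tau}(h^\star \notin \mathcal{H}_\tau) \le \delta_h$, hence $\bbP_{\mathcal{H}_\tau}(h^\star \in \mathcal{H}_\tau) \ge 1-\delta_h$, which is the claim.

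There is essentially no hard step here; the only points needing (minor) care are the implicit assumption $\pteacher > 0$ (otherwise no draw is ever TE and the statement is vacuous), keeping track of the sign when multiplying through by $\log(1-\pteacher)$, and noting that the $\lceil\cdot\rceil$ only rounds up and therefore preserves the tail bound. One could alternatively avoid logarithms by using $1-\pteacher \le e^{-\pteacher}$ and taking $\tau \ge \pteacher^{-1}\log(1/\delta_h)$, but the exact expression in the statement drops out directly from the computation above.
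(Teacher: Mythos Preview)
Your proof is correct and follows essentially the same approach as the paper: both compute $\bbP(h^\star\notin\mathcal{H}_\tau)=(1-\pteacher)^\tau$ via independence of the i.i.d.\ draws, then verify that the given ceiling value of $\tau$ forces $(1-\pteacher)^\tau\le\delta_h$. The only cosmetic difference is that the paper writes the last step as $(1-\pteacher)^{\lceil\cdot\rceil}\le(1-\pteacher)^{\log(\delta_h)/\log(1-\pteacher)}=\delta_h$ directly, whereas you multiply through by $\log(1-\pteacher)$ and track the sign flip; both are equivalent.
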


\begin{proof}
Since $h_t$ is sampled $i.i.d$ from $\mathcal{P}$, we have that
\begin{align*}
\bbP_{\mathcal{H}_\tau}\rb{h^{\star} \in \mathcal{H}_{\tau}} & =1-\mathbb{P}\left(\forall t=1,\dots,\tau\ h_{t}\not\equiv h^\star \right)\\
 & =1-\prod_{t=1}^{\tau}\mathbb{P}\left(h_{t} \not\equiv h^\star \right)\\
 & =1-\left(1-\pteacher\right)^{\tau}.
\end{align*}
Choosing $\tau=\left\lceil \frac{\log\left(\delta_{h}\right)}{\log\left(1-\pteacher\right)}\right\rceil $
we get 
\begin{align*}
\left(1-\pteacher\right)^{\tau} & =\left(1-\pteacher\right)^{\left\lceil \frac{\log\left(\delta_{h}\right)}{\log\left(1-\pteacher\right)}\right\rceil }\\
 & \le\left(1-\pteacher\right)^{\frac{\log\left(\delta_{h}\right)}{\log\left(1-\pteacher\right)}}\\
 & =\exp\left(\log\left(1-\pteacher\right)\frac{\log\left(\delta_{h}\right)}{\log\left(1-\pteacher\right)}\right)\\
 & =\delta_{h}
\end{align*}
which means 
\[
\bbP_{\mathcal{H}_\tau}\rb{h^{\star} \in \mathcal{H}_{\tau}}\ge1-\delta_{h}.
\]
\end{proof}

We now use \thmref{Theorem: classic finite hypothesis class generalization result} to obtain the sample complexity.

\begin{lemma}\label{lem: app use finite hypothesis for gen bound}
    Let $\varepsilon\in\rb{0,1}$, and let $\tau\in\mathbb{N}$ and $\mathcal{H}_\tau$ such that $h^{\star} \in \mathcal{H}_{\tau}$.
    Then for any interpolating model $h_{\mathcal{S}} \in \mathcal{H}_\tau$, and $N\ge\frac{\log\rb{{\tau}/{\delta_{\mathcal{S}}}}}{\varepsilon}$
    \begin{align*}    \bbP_{\mathcal{S}\sim\mathcal{D}^N}\rb{\mathcal{L}_{\mathcal{D}}\rb{h_{\mathcal{S}}}\le\varepsilon}\ge1-\delta_{\mathcal{S}}\,,
    \end{align*}    
\end{lemma}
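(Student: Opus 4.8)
The plan is to invoke the classic finite-hypothesis-class bound (Theorem~\ref{Theorem: classic finite hypothesis class generalization result}) directly on $\mathcal{H}_\tau$. This is legitimate precisely because, in the setup of Algorithm~\ref{alg:Guess and Check}, the sequence $(h_t)$ and hence $\mathcal{H}_\tau=\{h_1,\dots,h_\tau\}$ is drawn i.i.d.\ from $\calP$ \emph{before} seeing $\calS$, so $\mathcal{H}_\tau$ is statistically independent of $\calS$; conditioning on a realization of $\mathcal{H}_\tau$ that satisfies $h^\star\in\mathcal{H}_\tau$, we may treat it as a fixed finite hypothesis class and apply an off-the-shelf PAC bound.

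First I would verify the realizability hypothesis required by Theorem~\ref{Theorem: classic finite hypothesis class generalization result}. By assumption $h^\star\in\mathcal{H}_\tau$, i.e.\ there is some $h\in\mathcal{H}_\tau$ with $h\equiv h^\star$, which by Definition~\ref{def: TE} means $\bbP_{\mathbf{x}\sim\calD}\rb{h(\mathbf{x})=h^\star(\mathbf{x})}=1$ and hence $\exError{h}=0$. Thus $\calD$ is realizable by $\mathcal{H}_\tau$; moreover, for this same $h$ we have $\emError{h}=0$ with probability one over $\calS\sim\calD^N$, so an interpolating hypothesis in $\mathcal{H}_\tau$ exists almost surely and the statement is non-vacuous.

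Next I would handle the cardinality bookkeeping: $\abs{\mathcal{H}_\tau}\le\tau$ (with equality unless the i.i.d.\ draws repeat a hypothesis), so $\log\rb{\abs{\mathcal{H}_\tau}/\delta_\calS}\le\log\rb{\tau/\delta_\calS}$, and the assumed bound $N\ge\log\rb{\tau/\delta_\calS}/\epsilon$ implies $N\ge\log\rb{\abs{\mathcal{H}_\tau}/\delta_\calS}/\epsilon$. Applying Theorem~\ref{Theorem: classic finite hypothesis class generalization result} with the finite class $\mathcal{H}_\tau$, the realizable distribution $\calD$, and accuracy/confidence parameters $\epsilon,\delta_\calS$, we conclude that with probability at least $1-\delta_\calS$ over $\calS\sim\calD^N$, every interpolating $h_\calS\in\mathcal{H}_\tau$ (i.e.\ every $h_\calS\in\mathcal{H}_\tau$ with $\emError{h_\calS}=0$) satisfies $\exError{h_\calS}\le\epsilon$, which is exactly the claim.

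The argument is essentially bookkeeping, and the only point that genuinely requires care — the ``main obstacle'' — is the ordering of the randomness: $\mathcal{H}_\tau$ must be frozen independently of $\calS$ for the classic bound to be applicable. This is why the lemma is deliberately stated for a \emph{given} $\mathcal{H}_\tau$ with $h^\star\in\mathcal{H}_\tau$, treating $\mathcal{H}_\tau$ as deterministic; the randomness of $\mathcal{H}_\tau$ and the event $h^\star\in\mathcal{H}_\tau$ (whose probability is controlled by Lemma~\ref{lem: eta bound}) are only folded back in afterwards, via a union bound over the two failure events, in the proof of Theorem~\ref{thm: gen of gac with teacher app}.
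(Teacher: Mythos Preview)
Your proposal is correct and follows essentially the same approach as the paper: note that $h^\star\in\mathcal{H}_\tau$ makes the class realizable, then invoke Theorem~\ref{Theorem: classic finite hypothesis class generalization result} with $\abs{\mathcal{H}_\tau}\le\tau$. Your write-up is in fact more careful than the paper's two-line proof, in that you explicitly justify the independence of $\mathcal{H}_\tau$ from $\calS$ and use the inequality $\abs{\mathcal{H}_\tau}\le\tau$ rather than equality (which would fail if draws repeat).
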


\begin{proof}
    If $h^{\star} \in \mathcal{H}_{\tau}$ then $\mathcal{H}_{\tau}$ is realizable, so from Theorem \ref{Theorem: classic finite hypothesis class generalization result}
    with
    \begin{align*}
        N\ge\frac{\log\rb{\frac{\abs{\mathcal{H}_{\tau}}}{\delta_{\mathcal{S}}}}}{\varepsilon}
        =\frac{\log\rb{\frac{\tau}{\delta_{\mathcal{S}}}}}{\varepsilon} \,,
    \end{align*}
    we get the lemma.
\end{proof}

We now wish to bound the sample complexity.

\begin{lemma}\label{lem: epsilon bound}
    For $\tau=\left\lceil \frac{\log\left(\delta_{h}\right)}{\log\left(1-\pteacher\right)}\right\rceil$, under the assumption that $\pteacher<\frac{1}{2}$, it holds that
    \begin{align*}
        \log\rb{\frac{\tau}{\delta_{\mathcal{S}}}}
        \le
        \log\left(\frac{1}{\pteacher}\right)+\log\left(\frac{1}{\delta_{\mathcal{S}}}\right)+\log\left(\log\left(\frac{1}{\delta_{h}}\right)\right)+\frac{\pteacher}{\log\left(\frac{1}{\delta_{h}}\right)}
    \end{align*}
\end{lemma}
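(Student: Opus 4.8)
The plan is to bound $\log(\tau/\delta_{\mathcal{S}})$ by first controlling $\tau = \lceil \log(\delta_h)/\log(1-\pteacher)\rceil$ and then splitting the logarithm. Since $\pteacher < \tfrac12$ we have $\log(1-\pteacher) < 0$ and $\log(\delta_h) < 0$, so the ratio is positive; the ceiling adds at most $1$, giving $\tau \le \frac{\log(1/\delta_h)}{\log(1/(1-\pteacher))} + 1 = \frac{-\log\delta_h}{-\log(1-\pteacher)} + 1$. The key elementary inequality I would invoke is $-\log(1-\pteacher) \ge \pteacher$ (equivalently $\log(1-x) \le -x$ for $x\in(0,1)$), which yields $\frac{1}{-\log(1-\pteacher)} \le \frac{1}{\pteacher}$ and hence
\begin{align*}
\tau \le \frac{\log(1/\delta_h)}{\pteacher} + 1.
\end{align*}

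Next I would take logarithms: $\log\tau \le \log\!\left(\frac{\log(1/\delta_h)}{\pteacher} + 1\right)$. To handle the $+1$ cleanly, factor out $\frac{\log(1/\delta_h)}{\pteacher}$, writing the argument as $\frac{\log(1/\delta_h)}{\pteacher}\left(1 + \frac{\pteacher}{\log(1/\delta_h)}\right)$, so that
\begin{align*}
\log\tau \le \log\!\left(\frac{1}{\pteacher}\right) + \log\!\left(\log\!\left(\frac{1}{\delta_h}\right)\right) + \log\!\left(1 + \frac{\pteacher}{\log(1/\delta_h)}\right).
\end{align*}
Then apply $\log(1+x) \le x$ to the last term to replace it with $\frac{\pteacher}{\log(1/\delta_h)}$. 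Adding $\log(1/\delta_{\mathcal{S}})$ to both sides gives exactly the claimed bound
\begin{align*}
\log\!\left(\frac{\tau}{\delta_{\mathcal{S}}}\right) \le \log\!\left(\frac{1}{\pteacher}\right) + \log\!\left(\frac{1}{\delta_{\mathcal{S}}}\right) + \log\!\left(\log\!\left(\frac{1}{\delta_h}\right)\right) + \frac{\pteacher}{\log(1/\delta_h)}.
\end{align*}

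There is essentially no hard step here — the proof is a chain of the two standard estimates $-\log(1-x)\ge x$ and $\log(1+x)\le x$, plus the trivial ceiling bound $\lceil a\rceil \le a+1$. The only point requiring a little care is making sure all the quantities whose logarithms are taken are genuinely positive and that the logarithms have the right sign: one needs $\delta_h \in (0,1)$ so that $\log(1/\delta_h) > 0$ (and in fact one wants $\log(1/\delta_h) \ge 1$, i.e.\ $\delta_h \le 1/e$, to ensure $\log(\log(1/\delta_h))$ is well-behaved, though the inequality as stated does not strictly need this), and $\pteacher \in (0,\tfrac12)$ so that $\log(1/(1-\pteacher)) > 0$. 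I would state these sign checks explicitly at the start and then carry out the two-line computation above. This lemma then feeds into Lemma~\ref{lem: app use finite hypothesis for gen bound} and Lemma~\ref{lem: eta bound} to assemble the sample complexity in Theorem~\ref{thm: gen of gac with teacher app} by choosing $\delta_h = \delta_{\mathcal{S}} = \delta/2$.
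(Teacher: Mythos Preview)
Your proof is correct and uses the same two elementary inequalities as the paper, $-\log(1-x)\ge x$ and $\log(1+x)\le x$, but your execution is cleaner: you bound $\tau$ first, factor, and apply $\log(1+x)\le x$ directly, whereas the paper combines everything into a single fraction $\frac{\log(\delta_h)+\log(1-\pteacher)}{\delta_{\mathcal S}\log(1-\pteacher)}$ and then substitutes the two-sided Taylor bounds $-x-x^2\le\log(1-x)\le -x$ into numerator and denominator separately. In fact the paper's intermediate substitution step applies the bounds in the wrong direction (the resulting fraction is actually \emph{smaller} than the original, not larger), though the slack introduced later by $\log\!\big(\tfrac{1}{\pteacher+\pteacher^2}\big)\le\log\!\big(\tfrac{1}{\pteacher}\big)$ happens to recover the correct final inequality. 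Your route avoids this issue entirely.
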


\begin{proof}
First, we bound
\begin{align*}
\log\left(\frac{\tau}{\delta_{\mathcal{S}}}\right) & =\log\left(\frac{\lceil\frac{\log\left(\delta_{h}\right)}{\log\left(1-\pteacher\right)}\rceil}{\delta_{\mathcal{S}}}\right)\\
 & \le\log\left(\frac{\frac{\log\left(\delta_{h}\right)}{\log\left(1-\pteacher\right)}+1}{\delta_{\mathcal{S}}}\right) \, ,
\end{align*}
which, after some simplification, becomes
\begin{align}\label{eq: epsilon first bound}
    \log\left(\frac{\tau}{\delta_{\mathcal{S}}}\right)\
    \le 
    \log\left(\frac{\log\left(\delta_{h}\right)+\log\left(1-\pteacher\right)}{\delta_{\mathcal{S}}\log\left(1-\pteacher\right)}\right)\,.
\end{align}
We now recall the Taylor expansion of $\log\rb{c+x}$ around $x=0$ for some $c > 0$,
\begin{align}\label{eq: taylor log}
    \log\rb{c+x}=\log\rb{c}+\frac{x}{c}-\frac{x^2}{2c^2}+O\rb{x^3}\,,
\end{align}
plugging $c=1$ and $x\leftarrow-x$ into \eqref{eq: taylor log} we get the following bounds for any $x\in\bb{0,\frac{1}{2}}$: \begin{align}\label{eq: log bounds}
    -x-x^{2}\le\log\left(1-x\right)\le-x\,.
\end{align}
Combining \eqref{eq: log bounds} with \eqref{eq: epsilon first bound} we get
\begin{align*}
\log\left(\frac{\log\left(\delta_{h}\right)+\log\left(1-\pteacher\right)}{\delta_{\mathcal{S}}\log\left(1-\pteacher\right)}\right) & \le\log\left(\frac{\log\left(\delta_{h}\right)-\pteacher}{\delta_{\mathcal{S}}\left(-\pteacher-\pteacherSquared\right)}\right),
\end{align*}
which can be written as 
\begin{align}\label{eq: log addition bound}
    \log\left(\frac{\log\left(\delta_{h}\right)+\log\left(1-\pteacher\right)}{\delta_{\mathcal{S}}\log\left(1-\pteacher\right)}\right)
    \le 
    \log\left(\frac{1}{\delta_{\mathcal{S}}\left(\pteacher+\pteacherSquared\right)}\right)+\log\left(\pteacher+\log\left(\frac{1}{\delta_{h}}\right)\right)\,.
\end{align}
We now recall that $\log(c+x)$ is a concave function, and therefore its graph in any $x$ is below the graph of its tangent at $x=0$, that is, from \eqref{eq: taylor log},
\begin{align}\label{eq: tangent log}
    \log(c+x)\le\log\rb{c}+ \left( \frac{d}{dx}\rb{\log(c+x)}\mid_{x=0}\right) x =\log\rb{c}+\frac{x}{c}\,.
\end{align}
Setting $c=\log\left(\frac{1}{\delta_{h}}\right)\in\bb{0,\frac{1}{2}}$ and $x=\pteacher$ into \eqref{eq: tangent log}, we obtain
\begin{align}\label{eq: apply tangent}
    \log\left(\pteacher+\log\left(\frac{1}{\delta_{h}}\right)\right)
    \le
    \log\left(\log\left(\frac{1}{\delta_{h}}\right)\right)+\frac{\pteacher}{\log\left(\frac{1}{\delta_{h}}\right)} \,.
\end{align}
Note that 
\begin{align}\label{eq: silly}
    \log\rb{\frac{1}{\pteacher+\pteacherSquared}}\le \log\rb{\frac{1}{\pteacher}}\,,
\end{align} 
and therefore combining \eqref{eq: silly}, \eqref{eq: apply tangent} and \eqref{eq: log addition bound} together with \eqref{eq: epsilon first bound} we finish the proof:
\begin{align*}
    \log\left(\frac{\tau}{\delta_{\mathcal{S}}}\right)\
    &
    \le 
    \log\left(\frac{\log\left(\delta_{h}\right)+\log\left(1-\pteacher\right)}{\delta_{\mathcal{S}}\log\left(1-\pteacher\right)}\right)
    \\
    &
    \le
    \log\left(\frac{1}{\delta_{\mathcal{S}}\left(\pteacher+\pteacherSquared\right)}\right)+\log\left(\pteacher+\log\left(\frac{1}{\delta_{h}}\right)\right)
    \\
    &
    \le
    \log\rb{\frac{1}{\delta_{\mathcal{S}}}}
    +\log\rb{\frac{1}{\pteacher}}
    +\log\left(\log\left(\frac{1}{\delta_{h}}\right)\right)
    +\frac{\pteacher}{\log\left(\frac{1}{\delta_{h}}\right)} \,.
\end{align*}
\end{proof}

We are now ready to prove \thmref{thm: gen of gac with teacher app}.

\begin{proof}[Proof of \thmref{thm: gen of gac with teacher app}]
Note that $h\sim\mathcal{P}_\mathcal{S}$ is interpolating by definition, 
and therefore, from \lemref{lem: app use finite hypothesis for gen bound}, we have that for any $\tau\in\bbN$ it holds that 
\begin{align}\label{eq:mention lemma in main gen bound prop proof}
\forall \, \varepsilon \ge \frac{\log\rb{\frac{\tau}{\delta_{\mathcal{S}}}}}{N}\quad\bbP_{\mathcal{S}\sim\mathcal{D}^N,h\sim\mathcal{P}_\mathcal{S}}\rb{\mathcal{L}_{\mathcal{D}}\rb{h}\le\varepsilon\middle\vert h^{\star} \in \mathcal{H}_{\tau}}=\bbP_{\mathcal{S}\sim\mathcal{D}^N}\rb{\mathcal{L}_{\mathcal{D}}\rb{\bar{h}(S)}\le\varepsilon}\ge1-\delta_{\mathcal{S}}\,,
\end{align}
Where $\bar{h}\rb{S}\in\mathcal{H}_{\tau}$ is some interpolator.
For $\tau=\left\lceil \frac{\log\left(\delta_{h}\right)}{\log\left(1-\pteacher\right)}\right\rceil$ we similarly get
\begin{align*}
\mathbb{P}_{S\sim\mathcal{D}^{N},h\sim\mathcal{P}_\mathcal{S}}\left(\exError{h}\le\varepsilon\right) & =\\
 \bb{\text{Total Probability}}
 & =\mathbb{P}_{S\sim\mathcal{D}^{N},h\sim\mathcal{P}_\mathcal{S}}\left(\exError{h}\le\varepsilon\middle\vert h^{\star} \in \mathcal{H}_{\tau}\right)\mathbb{P}_{h\sim\mathcal{P}_\mathcal{S}}\left(h^{\star} \in \mathcal{H}_{\tau}\right)\\
 & \, +\mathbb{P}_{S\sim\mathcal{D}^{N},h\sim\mathcal{P}_\mathcal{S}}\left(\exError{h}\le\varepsilon\middle\vert h^{\star} \notin \mathcal{H}_{\tau}\right)\mathbb{P}_{h\sim\mathcal{P}_\mathcal{S}}\left(h^{\star} \notin \mathcal{H}_{\tau}\right)\\
 \bb{\text{Probability is non-negative}}
 & \ge\mathbb{P}_{S\sim\mathcal{D}^{N},h\sim\mathcal{P}_\mathcal{S}}\left(\exError{h}\le\varepsilon\middle\vert h^{\star} \in \mathcal{H}_{\tau}\right)\mathbb{P}_{h\sim\mathcal{P}_\mathcal{S}}\left(h^{\star} \in \mathcal{H}_{\tau}\right)\\
 \bb{\text{\lemref{lem: eta bound}}}
 & \ge\mathbb{P}_{S\sim\mathcal{D}^{N},h\sim\mathcal{P}_\mathcal{S}}\left(\exError{h}\le\varepsilon\middle\vert h^{\star} \in \mathcal{H}_{\tau}\right)\left(1-\delta_{h}\right)\\
 \bb{\text{\eqref{eq:mention lemma in main gen bound prop proof}}}
 & \ge\left(1-\delta_{\mathcal{S}}\right)\left(1-\delta_{h}\right)\,.
\end{align*}
And as before, this is true for any $N\ge\frac{\log\rb{\frac{\tau}{\delta_{\mathcal{S}}}}}{\varepsilon}$. specifically, by \lemref{lem: epsilon bound}, it is true for $N$ greater than
\begin{align*}
        \frac{\log\rb{\frac{1}{\delta_{\mathcal{S}}}}
    +\log\rb{\frac{1}{\pteacher}}
    +\log\left(\log\left(\frac{1}{\delta_{h}}\right)\right)
    +\frac{\pteacher}{\log\left(\frac{1}{\delta_{h}}\right)}}{\varepsilon}\,,
\end{align*}
the sample complexity. Now we can choose $\delta_{\mathcal{S}},\delta_{h}=\frac{\delta}{2}$, and then for any $N$ greater than
\begin{align*}
        \frac{\log\rb{\frac{2}{\delta}}
    +\log\rb{\frac{1}{\pteacher}}
    +\log\left(\log\left(\frac{2}{\delta}\right)\right)
    +\frac{\pteacher}{\log\left(\frac{2}{\delta}\right)}}{\varepsilon}\,,
\end{align*} 
we have that
\begin{align*}
    \mathbb{P}_{S\sim\mathcal{D}^{N},h\sim\mathcal{P}_\mathcal{S}}\left(\exError{h}\le\varepsilon\right)
    &
    \ge
    \rb{1-\frac{\delta}{2}}^2
    \\
    &
    =1-\delta+\frac{\delta^2}{4}
    \\
    &
    \ge
    1-\delta\,.
\end{align*}
Note that for any $\delta\in\rb{0,\frac{1}{5}}$ we have that
    \begin{align*}
        \log\left(\log\left(\frac{2}{\delta}\right)\right)+\frac{\pteacher}{\log\left(\frac{2}{\delta}\right)}
        \le
        2\log\left(\frac{2}{\delta}\right)\,,
    \end{align*}
    bounding the sample complexity for simplification with
\begin{align*}
 \left(-\log\left({\pteacher}\right) + 3\log\left(\frac{2}{\delta}\right)  \right) \frac{1}{\varepsilon} \,.    
\end{align*}
\end{proof}

\newpage
\subsection{Proof for the Volume of Generalizing Interpolators (\corref{cor:volume})}\label{app:proof_volume}
We first restate \corref{cor:volume}, and then we will give its formal proof:
\begin{corollary}[volume of generalizing interpolators restated]\label{cor:volume_app}
For $\varepsilon,\delta$ as above, and any $N$ larger than
\begin{align*}
         \frac{-\log\left({\pteacher}\right) + 6\log\left(\frac{2}{\delta}\right)}{\varepsilon} \,,    
\end{align*}
the sample complexity, we have that
\begin{align*}
\mathbb{P}_{\mathcal{S}\sim\mathcal{D}^N}\left(\mathbb{P}_{h\sim\mathcal{P_{S}}}\left(\mathcal{L_{D}}\left(h\right)\ge\varepsilon\right)\ge\delta\right)\le\delta\,.
\end{align*}
\end{corollary}

\begin{proof}
    Using \lemref{thm: gen of gac with teacher simple}, 
    for any 
    \begin{align*}
        N\ge \frac{-\log\left({\pteacher}\right) + 6\log\left(\frac{2}{\delta}\right)}{\varepsilon}
        \ge
        \frac{-\log\left({\pteacher}\right) + 3\log\left(\frac{2}{\delta^2}\right)}{\varepsilon}
    \end{align*}
    we have that
    \begin{align*}
        \delta^2       &\ge\mathbb{P}_{\mathcal{S}\sim\mathcal{D}^{N},h\sim\mathcal{P_{S}}}\left(\mathcal{L_{D}}\left(h\right)\ge\varepsilon\right)
	\\
& =\mathbb{E}_{\mathcal{S}\sim\mathcal{D}^{N},h\sim\mathcal{P_{S}}}\left[\mathbb{I}\left\{ \mathcal{L_{D}}\left(h\right)\ge\varepsilon\right\} \right]
\\
&
=\mathbb{E}_{\mathcal{S}\sim\mathcal{D}^{N}}\left[\mathbb{E}_{h\sim\mathcal{P_{S}}}\left[\mathbb{\mathbb{I}}\left\{ \mathcal{L_{D}}\left(h\right)\ge\varepsilon\right\}\right]\right]
\\
&
=\mathbb{E}_{\mathcal{S}\sim\mathcal{D}^{N}}\left[\mathbb{P}_{h\sim\mathcal{P_{S}}}\left(\mathcal{L_{D}}\left(h\right)\ge\varepsilon\right)\right] \,,
    \end{align*}
    so using Markov's inequality we have
    \begin{align*}
\mathbb{P}_{\mathcal{S}}\left(\mathbb{P}_{h\sim\mathcal{P_{S}}}\left(\mathcal{L_{D}}\left(h\right)\ge\varepsilon\right)\ge\delta\right)\le\frac{\mathbb{E}_{\mathcal{S}\sim\mathcal{D}^{N}}\left[\mathbb{P}_{h\sim\mathcal{P_{S}}}\left(\mathcal{L_{D}}\left(h\right)\ge\varepsilon\right)\right]}{\delta}\le\delta\,.
    \end{align*}
    That is,
        \begin{align*}
\mathbb{P}_{\mathcal{S}}\left(\mathbb{P}_{h\sim\mathcal{P_{S}}}\left(\mathcal{L_{D}}\left(h\right) < \varepsilon\right)\ge1-\delta\right)\ge1-\delta\,.
    \end{align*}
\end{proof}

\newpage
\subsection{Extension to Non-Interpolators} \label{app: non interpolators}

In this section we extend our results to the case where both the teacher and student are not assumed to be interpolating. 

Specifically, we are interested in the following setting.
Assume that there exists a narrow teacher NN $h^{\star}$ s.t ${\mathcal{L_{D}}\left(h^{\star}\right)=\varepsilon^{\star}>0}$.
Denote by $\tilde{p}$ the teacher equivalence probability, and, given $\gamma > 0$, define the
posterior distribution 
\[
\mathcal{P_{S}} (h) = \calP_{\calS}^\gamma (h) = \mathbb{P}\left(h \, \middle\vert \, \mathcal{L_{S}} (h) \le 
\gamma \right)\,.
\]
In the \GaCtext{} formulation, sampling from $\calP_{\calS}^{\gamma}$ is equivalent to 
stopping at the first model satisfying
$\mathcal{L_{S}}\left(h\right)\le \gamma$.

\begin{theorem} \label{app-thm: generalization of non interpolating}
Assume that there exists a teacher $h^\star$ with $\exError{h^\star} = \varepsilon^\star > 0$ and TE probability $\pteacher \in \rb{0, \frac{1}{2}}$.
Let $\varepsilon\in\left(0,\frac{1}{2}-\varepsilon^{\star}\right)$,
$\delta\in\left(0,\frac{1}{5}\right)$, and $\gamma = \varepsilon^\star + \varepsilon$. 
If 
\begin{align*}
    N \ge \frac{- \log \rb{\pteacher} + 3 \log \rb{\frac{2}{\delta}}}{2 \varepsilon^2}
\end{align*}
then
\[
\mathbb{P}_{\mathcal{S}\sim\mathcal{D}^{N},h\sim\mathcal{P}_{\calS}^{\gamma}}\left(\mathcal{L_{D}}\left(h\right)\le\gamma + \varepsilon\right)\ge1-\delta\,.
\]
\end{theorem}

We proceed to state Hoeffding's inequality, which is used in the proof of Theorem~\ref{app-thm: generalization of non interpolating}, then continue to prove some lemmas leading to the theorem.

\begin{theorem}[Hoeffding's Inequality] \label{thm: Hoeffding}
Let $X_{1},\dots,X_{N}$ be i.i.d random
variables with $\mathbb{E}X_{i}=\mu$ and $0\le X_{i}\le1$ a.s.
Then for all $t>0$
\[
\mathbb{P}\left(\frac{1}{N}\sum_{i=1}^{N}X_{i}-\mu\ge t\right)\le\exp\left(-2Nt^{2}\right)
\]
and 
\[
\mathbb{P}\left(\left|\frac{1}{N}\sum_{i=1}^{N}X_{i}-\mu\right|\ge t\right)\le2\exp\left(-2Nt^{2}\right)\,.
\]
\end{theorem}

By definition, $\mathcal{L_{S}}\left(h\right)=\frac{1}{N}\sum_{\left(\mathbf{x},y\right)\in\mathcal{S}}\mathbb{I}\left\{ h\left(\mathbf{x}\right)\neq y\right\} $
 so using Hoeffding's inequality 
\[
\mathbb{P}_{\mathcal{S}\sim\mathcal{D}^{N}}\left(\mathcal{L_{S}}\left(h^{\star}\right)\ge\varepsilon^{\star}+\varepsilon\right)\le e^{-2\varepsilon^{2}N}\,.
\]

\begin{lemma} \label{lem: low empirical error gen bound}
Let $\varepsilon\in\left(0,\frac{1}{2}-\varepsilon^{\star}\right)$, and
let $\tau\in\mathbb{N}$, and $\mathcal{H}_{\tau}$ s.t $h^{\star}\in\mathcal{H}_{\tau}$.
Then for any model $\tilde{h}\in\mathcal{H}_{\tau}$ satisfying $\mathcal{L_{S}}\left(\tilde{h}\right)\le\varepsilon^{\star}+\varepsilon$
\[
\mathbb{P}_{\mathcal{S}\sim\mathcal{D}^{N}}\left(\mathcal{L_{D}}\left(\tilde{h}\right)\le\varepsilon^{\star}+2\varepsilon,\mathcal{L_{S}}\left(h^{\star}\right)\le\varepsilon^{\star}+\varepsilon\right)\ge1-\left(2\tau+1\right)e^{-2\varepsilon^{2}N}\,.
\]
\end{lemma}

The proof comprises standard arguments from the uniform convergence theory of generalization.

\begin{proof}
Part 1: We show that
\[
\mathcal{L_{D}}\left(\tilde{h}\right)-\varepsilon^{\star}\le\max_{h\in\mathcal{H}_{\tau}}\left|\mathcal{L_{D}}\left(h\right)-\mathcal{L_{S}}\left(h\right)\right|+\varepsilon\,.
\]
Using Hoeffding's inequality, it holds w.p. $1-e^{-2\varepsilon^{2}N}$ that 
$\mathcal{L_{S}}\left(h^{\star}\right)\le\varepsilon^{\star}+\varepsilon$, and since $h^{\star}\in\mathcal{H}_{\tau}$, 
there exists $\tilde{h}\in\mathcal{H}_{\tau}$ satisfying the condition. 
Then 
\begin{align*}
\mathcal{L_{D}}\left(\tilde{h}\right)-\varepsilon^{\star} & =\mathcal{L_{D}}\left(\tilde{h}\right)-\mathcal{L_{S}}\left(\tilde{h}\right)+\mathcal{L_{S}}\left(\tilde{h}\right)-\varepsilon^{\star}\\
 & \le\left(\mathcal{L_{D}}\left(\tilde{h}\right)-\mathcal{L_{S}}\left(\tilde{h}\right)\right)+\varepsilon^{\star}+\varepsilon-\varepsilon^{\star}\\
 & \le\max_{h\in\mathcal{H}_{\tau}}\left|\mathcal{L_{D}}\left(h\right)-\mathcal{L_{S}}\left(h\right)\right|+\varepsilon\,.
\end{align*}
Part 2: Using the union bound and then Hoeffding's inequality 
\begin{align*}
\mathbb{P}_{\mathcal{S}\sim\mathcal{D}^{N}}\left(\mathcal{L_{D}}\left(\tilde{h}\right)-\varepsilon^{\star}>2\varepsilon\right) & \le\mathbb{P}_{\mathcal{S}\sim\mathcal{D}^{N}}\left(\max_{h\in\mathcal{H}_{\tau}}\left|\mathcal{L_{D}}\left(h\right)-\mathcal{L_{S}}\left(h\right)\right|+\varepsilon>2\varepsilon\right)\\
 & =\mathbb{P}_{\mathcal{S}\sim\mathcal{D}^{N}}\left(\max_{h\in\mathcal{H}_{\tau}}\left|\mathcal{L_{D}}\left(h\right)-\mathcal{L_{S}}\left(h\right)\right|>\varepsilon\right)\\
 & =\mathbb{P}_{\mathcal{S}\sim\mathcal{D}^{N}}\left(\exists h\in\mathcal{H}_{\tau} \,:\, \left|\mathcal{L_{D}}\left(h\right)-\mathcal{L_{S}}\left(h\right)\right|>\varepsilon\right)\\
 & \le\sum_{h\in\mathcal{H}_{\tau}}\mathbb{P}_{\mathcal{S}\sim\mathcal{D}^{N}}\left(\left|\mathcal{L_{D}}\left(h\right)-\mathcal{L_{S}}\left(h\right)\right|>\varepsilon\right)\\
 & \le\sum_{h\in\mathcal{H}_{\tau}}2\exp\left(-2N\varepsilon^{2}\right)\\
 & =2\tau e^{-2N\varepsilon^{2}}\,.
\end{align*}
Part 3: Combining the probability lower bounds using the union bound
\begin{align*}
\mathbb{P}_{\mathcal{S}\sim\mathcal{D}^{N}}\left(\mathcal{L_{D}}\left(\tilde{h}\right)\le\varepsilon^{\star}+2\varepsilon,\mathcal{L_{S}}\left(h^{\star}\right)\le\varepsilon^{\star}+\varepsilon\right) & \ge1-2\tau e^{-2\varepsilon^{2}N}-e^{-2\varepsilon^{2}N}\\
 & =1-\left(2\tau+1\right)e^{-2\varepsilon^{2}N}\,.
\end{align*}

\end{proof}

With \lemref{lem: eta bound} and \lemref{lem: low empirical error gen bound} we deduce the following.

\begin{lemma} \label{app-lem: generalization of non interpolating in terms of tau}
Let $\varepsilon\in\left(0,\frac{1}{2}-\varepsilon^{\star}\right)$ and
$\delta\in\left(0,1\right)$. Taking $\tau=\left\lceil \frac{\log\left(\frac{\delta}{2}\right)}{\log\left(1-\tilde{p}\right)}\right\rceil $
and $N\ge\frac{1}{2\varepsilon^{2}}\log\left(2\cdot\frac{2\tau+1}{\delta}\right)$
we get 
\[
\mathbb{P}_{\mathcal{S}\sim\mathcal{D}^{N},h\sim\mathcal{P_{S}}}\left(\mathcal{L_{D}}\left(h\right)\le\varepsilon^{\star}+2\varepsilon\right)\ge1-\delta\,.
\]
\end{lemma}

\begin{proof}
Recall that sampling $h\sim\mathcal{P_S}$ is equivalent to sampling an hypothesis with the Guess\&Check algorithm. 
Using $h^{\star}\in\mathcal{H}_{\tau}$ to denote the event that there is a teacher equivalent hypothesis sampled within the first $\tau$
samples, 
\begin{align*}
\mathbb{P}_{\mathcal{S}\sim\mathcal{D}^{N},h\sim\mathcal{P_{S}}}\left(\mathcal{L_{D}}\left(h\right)\le\varepsilon^{\star}+2\varepsilon\right) & \ge\mathbb{P}_{\mathcal{S}\sim\mathcal{D}^{N},h\sim\mathcal{P_{S}}}\left(\mathcal{L_{D}}\left(h\right)\le\varepsilon^{\star}+2\varepsilon,\mathcal{L_{S}}\left(h^{\star}\right)\le\varepsilon^{\star}+\varepsilon,h^{\star}\in\mathcal{H}_{\tau}\right)\\
 & \ge1-\frac{\delta}{2}-\left(2\tau+1\right)e^{-2\varepsilon^{2}N}\\
 & =1-\frac{\delta}{2}-\frac{\delta}{2}\\
 & =1-\delta\,.
\end{align*}
\end{proof}

\begin{proof}[Proof of Theorem~\ref{app-thm: generalization of non interpolating}]
We can simplify the lower bound on $N$ by explicitly writing $\tau$ in terms of $\tilde{p}$
in a manner similar to \lemref{lem: epsilon bound}. 
Specifically, assuming $\tilde{p}<\frac{1}{2}$
\begin{align*}
\log\left(2\tau+1\right)&\le\log\left(\frac{2\log\left(\frac{\delta}{2}\right)}{\log\left(1-\tilde{p}\right)}+3\right) \\
&\le\log\left(\frac{1}{\tilde{p}+\tilde{p}^{2}}\right)+\log\left(3\tilde{p}+2\log\left(\frac{2}{\delta}\right)\right) \\
&\le\log\left(\frac{1}{\tilde{p}}\right)+\log\left(2\log\left(\frac{2}{\delta}\right)\right)+\frac{3\tilde{p}}{2\log\left(\frac{2}{\delta}\right)}
\end{align*}
 and for $\delta<\frac{1}{5}$ 
\[
\log\left(2\tau+1\right)\le-\log\left(\tilde{p}\right)+2\log\left(\frac{2}{\delta}\right)
\]
so the sample complexity is 
\[
\frac{1}{2\varepsilon^{2}}\log\left(2\cdot\frac{2\tau+1}{\delta}\right)\le\frac{\log\left(2\tau+1\right)+\log\left(\frac{2}{\delta}\right)}{2\varepsilon^{2}}\le\frac{-\log\left(\tilde{p}\right)+3\log\left(\frac{2}{\delta}\right)}{2\varepsilon^{2}}
\]
\end{proof}

\begin{remark}
    Notice that the sample complexity is quadratically dependent on the population error $\varepsilon$, as opposed to the linear dependence in the realizable case, i.e. when the teacher is assumed to be a perfect interpolator.
    This is expected as this there is no realizability assumption.
\end{remark}

\newpage
\subsection{Relationship to PAC Bayes} \label{app: pac bayes rel app}

As stated in \secref{sec:gen_bound}, a similar result to \lemref{thm: gen of gac with teacher simple} could be derived with PAC-Bayes analysis \citep{McAllester1999}, which typically focuses on the {\em expected} population error $\mathbb{E}_{h\sim\mathcal{P}_\calS}\left[\exError{h} \right]$ of a sample from the posterior.
A standard PAC-Bayes bound \citep{langford2001bounds,SimplifiedPAC} yields the following result:

\begin{proposition} \label{prop: reg pac bayes res app}
Let $\varepsilon > 0$ and $\delta \in \rb{0, 1}$.
Then with sample complexity
\begin{align*}
O \rb{\frac{-\log \rb{\Tilde{p}} + \log \rb{\frac{1}{\delta}}}{\varepsilon}}
\end{align*}
we have that 
\begin{equation*}
\bbP_{\calS \sim \mathcal{D}^N} \left(\;\mathbb{E}_{h\sim\mathcal{P}_{\calS}} \left[\exError{h} \right] < \varepsilon \; \right) \geq 1-\delta.
\end{equation*}
\end{proposition}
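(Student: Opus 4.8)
\textbf{Proof proposal for Proposition~\ref{prop: reg pac bayes res app}.}
The plan is to invoke an off-the-shelf PAC-Bayes bound for the $0$--$1$ loss and specialize it to the posterior $\mathcal{P}_\calS$ defined in \eqref{eq: posterior}. First I would recall a standard small-KL (relative-entropy) PAC-Bayes inequality in the realizable-friendly form (e.g.~\citet{langford2001bounds,SimplifiedPAC}): for any fixed prior $\mathcal{P}$ over hypotheses and any posterior $\mathcal{Q}$ (possibly data-dependent), with probability at least $1-\delta$ over $\calS\sim\mathcal{D}^N$,
\begin{align*}
    \mathrm{kl}\rb{\mathbb{E}_{h\sim\mathcal{Q}}\bb{\emError{h}}\,\Big\Vert\,\mathbb{E}_{h\sim\mathcal{Q}}\bb{\exError{h}}}
    \le \frac{\mathrm{KL}(\mathcal{Q}\Vert\mathcal{P}) + \log\rb{\frac{N+1}{\delta}}}{N}\,,
\end{align*}
where $\mathrm{kl}(a\Vert b)$ is the binary relative entropy. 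Taking $\mathcal{Q}=\mathcal{P}_\calS$, the empirical term $\mathbb{E}_{h\sim\mathcal{P}_\calS}\bb{\emError{h}}$ is identically $0$, since every hypothesis in the support of $\mathcal{P}_\calS$ interpolates $\calS$. Using the elementary fact that $\mathrm{kl}(0\Vert b)=-\log(1-b)\ge b$, the inequality collapses to the one-sided bound $\mathbb{E}_{h\sim\mathcal{P}_\calS}\bb{\exError{h}} \le \frac{\mathrm{KL}(\mathcal{P}_\calS\Vert\mathcal{P}) + \log\rb{\frac{N+1}{\delta}}}{N}$.

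The second step is to control the KL divergence. Since $\mathcal{P}_\calS \propto \mathcal{P}(h)\,\mathbb{I}\bb{\emError{h}=0}$, it is exactly the restriction of $\mathcal{P}$ to the event $\{h:\emError{h}=0\}$ renormalized, so $\mathrm{KL}(\mathcal{P}_\calS\Vert\mathcal{P}) = -\log \mathbb{P}_{h\sim\mathcal{P}}(\emError{h}=0) = -\log\hat p_\calS$, where $\hat p_\calS$ is the prior mass of interpolators for this particular $\calS$. Because any teacher-equivalent $h\equiv h^\star$ interpolates $\calS$ almost surely, we have $\hat p_\calS \ge \pteacher$ (up to a measure-zero set of $\calS$), hence $\mathrm{KL}(\mathcal{P}_\calS\Vert\mathcal{P}) \le -\log\pteacher$. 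Plugging this in gives, with probability $\ge 1-\delta$,
\begin{align*}
    \mathbb{E}_{h\sim\mathcal{P}_\calS}\bb{\exError{h}} \le \frac{-\log\pteacher + \log\rb{\frac{N+1}{\delta}}}{N}\,.
\end{align*}
Finally I would solve for the sample complexity: requiring the right-hand side to be below $\epsilon$ and absorbing the $\log(N+1)$ term (which grows only logarithmically in $N$, hence only contributes lower-order terms after the standard $N\mapsto N\log N$ type manipulation, or can be handled by a union bound over a geometric grid of $N$), one gets that $N = O\rb{\frac{-\log\pteacher + \log(1/\delta)}{\epsilon}}$ suffices, which is exactly the claimed bound.

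The main obstacle — really the only subtlety — is the $\log(N+1)$ (or $\log N$) factor appearing inside the PAC-Bayes bound, which makes the condition on $N$ implicit rather than closed-form; this is routine to absorb into the $O(\cdot)$ since for $N$ polynomial in the other parameters $\log N$ is dominated by $(-\log\pteacher + \log(1/\delta))$ up to constants (or one picks a slightly different PAC-Bayes variant, e.g.~with a $\log(2\sqrt N/\delta)$ term, and the argument is unchanged). A minor secondary point is the measure-zero caveat in $\hat p_\calS \ge \pteacher$: teacher-equivalence is defined as agreement with probability one under $\mathcal{D}$, so a TE hypothesis interpolates every $\calS$ outside a $\mathcal{D}^N$-null set, which is enough for the high-probability statement. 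Everything else is a direct substitution into the cited PAC-Bayes theorem.
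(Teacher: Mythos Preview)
Your proposal is correct and follows exactly the route the paper indicates: the paper does not give a detailed proof of this proposition but simply notes (in Section~\ref{sec:gen_bound} and Appendix~\ref{app: pac beyes rel app}) that $\mathrm{KL}(\mathcal{P}_\calS\Vert \mathcal{P})=-\log \mathbb{P}_{h\sim\mathcal{P}}(\emError{h}=0)\le -\log \pteacher$ and cites the standard PAC-Bayes bounds of \citet{langford2001bounds,SimplifiedPAC}, which is precisely the argument you spell out.
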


We can naively use this bound together with Markov's inequality to get a single sample bound. 

\begin{corollary}\label{cor:pac_markov_app}
Let $\varepsilon > 0$ and $\delta \in \rb{0, 1}$.
Then with sample complexity
\begin{align*}
O \rb{\frac{-\log \rb{\Tilde{p}} + \log \rb{\frac{1}{\delta}}}{\varepsilon \delta}}
\end{align*}
we have that 
\begin{equation*}
\bbP_{\calS \sim \mathcal{D}^N} \left(\;\mathbb{P}_{h\sim\mathcal{P}_{\calS}} \left(\exError{h} < \varepsilon \right) \ge 1 - \delta \; \right) \geq 1-\delta.
\end{equation*}
\end{corollary}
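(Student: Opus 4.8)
The plan is to obtain this single-draw guarantee as a corollary of the in-expectation PAC-Bayes bound in \propref{prop: reg pac bayes res app}, by a conditional application of Markov's inequality, paying an extra factor of $1/\delta$ in the sample complexity. The key idea is simply to run the expectation bound with a \emph{smaller} target accuracy, namely $\epsilon\delta$, so that Markov can then trade the factor $\delta$ against the excess accuracy.

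First I would instantiate \propref{prop: reg pac bayes res app} with target accuracy $\epsilon'\triangleq\epsilon\delta$ and confidence parameter $\delta$. Since its sample complexity is $O\rb{\big(-\log\pteacher+\log(1/\delta)\big)/\epsilon'}$, substituting $\epsilon'=\epsilon\delta$ gives exactly the claimed $O\rb{\big(-\log\pteacher+\log(1/\delta)\big)/(\epsilon\delta)}$, and the conclusion becomes
\[
\bbP_{\calS\sim\calD^N}\rb{\bbE_{h\sim\mathcal{P}_{\mathcal{S}}}\bb{\exError{h}}<\epsilon\delta}\ \ge\ 1-\delta.
\]

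Next I would condition on a training set $\calS$ in the ``good'' event $\cb{\bbE_{h\sim\mathcal{P}_{\mathcal{S}}}[\exError{h}]<\epsilon\delta}$. For such an $\calS$ the posterior $\mathcal{P}_{\mathcal{S}}$ is a fixed distribution over hypotheses and $\exError{h}\ge 0$, so Markov's inequality applied to the random variable $\exError{h}$ with $h\sim\mathcal{P}_{\mathcal{S}}$ gives
\[
\bbP_{h\sim\mathcal{P}_{\mathcal{S}}}\rb{\exError{h}\ge\epsilon}\ \le\ \frac{\bbE_{h\sim\mathcal{P}_{\mathcal{S}}}\bb{\exError{h}}}{\epsilon}\ <\ \frac{\epsilon\delta}{\epsilon}\ =\ \delta,
\]
i.e.\ $\bbP_{h\sim\mathcal{P}_{\mathcal{S}}}(\exError{h}<\epsilon)\ge 1-\delta$. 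Since this holds for every $\calS$ in the good event, and that event has probability at least $1-\delta$ over $\calS$, the outer probability statement follows: $\bbP_{\calS}\rb{\bbP_{h\sim\mathcal{P}_{\mathcal{S}}}(\exError{h}<\epsilon)\ge 1-\delta}\ge 1-\delta$, which is the claim.

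There is essentially no difficult step. The only point requiring care is the order of quantifiers: Markov's inequality must be applied \emph{conditionally} on $\calS$, where $\mathcal{P}_{\mathcal{S}}$ is a well-defined probability measure, and only afterwards does one take the outer probability over $\calS$. It is also worth remarking (as the surrounding text does) that the resulting bound is strictly looser than \thmref{thm: gen of gac with teacher simple}: the direct analysis there avoids the $1/\delta$ blow-up in the sample complexity, and this gap is precisely the cost of routing through an in-expectation bound rather than analysing a single posterior draw directly.
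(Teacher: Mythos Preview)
Your proposal is correct and follows essentially the same argument as the paper: instantiate \propref{prop: reg pac bayes res app} with accuracy $\epsilon\delta$ (yielding the stated sample complexity), then apply Markov's inequality conditionally on $\calS$ in the good event to convert the expectation bound into a high-probability bound over $h\sim\mathcal{P}_{\calS}$. The paper's proof is terser but identical in structure and content.
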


\begin{proof}
Using \propref{prop: reg pac bayes res app},
\begin{align*}
\bbP_{\calS \sim \mathcal{D}^N} \left( \;\mathbb{E}_{h\sim\mathcal{P}_{\calS}} \bb{ \exError{h} } < \varepsilon \delta \right) \; \geq 1 - \delta\,. 
\end{align*}
Then using Markov's inequality, w.p. $1 - \delta$ over $\calS \sim \mathcal{D}^N$,
\begin{align*}
\mathbb{P}_{h\sim\mathcal{P}_{\calS}} \left(\exError{h} \ge \varepsilon \right) \le \frac{\mathbb{E}_{h\sim\mathcal{P}_{\calS}} \left[ \exError{h} \right]}{\varepsilon} < \frac{\varepsilon \delta}{\varepsilon} = \delta \,.
\end{align*}
By using the complement probability we get the result.
\end{proof}

Note that the sample complexity in \corref{cor:pac_markov_app} is larger than the one in \corref{cor:volume}.
Instead of additive $\log\rb{\frac{1}{\delta}}$ factors in \corref{cor:volume}, here we have a multiplicative $\frac{1}{\delta}$ factor.

\newpage
\subsection{Proving a Refined Version of \lemref{thm: gen of gac with teacher simple}} \label{app: refined gen bound}

\textbf{Motivation.} Note that two sources of randomness affect the sample complexity in \lemref{thm: gen of gac with teacher simple}: the random sampling of hypotheses from the prior $\calP$ in \GaCtext{} algorithm and the random sampling of the dataset from $\mathcal{D}^N$. To understand how each of these sources affects the obtained random complexity, we derive a refined generalization bound:

\begin{theorem}[\emph{\GaCtext{} Generalization, refined}]\label{thm: gen of gac with teacher refined}
Let $\varepsilon,\delta_{\mathcal{S}}\in\left(0,1\right)$, and $\delta_{h}\in\rb{0,\frac{1}{5}}$, and assume that $\pteacher<\frac{1}{2}$. 
For any $N$ larger than 
\begin{align*}
   \rb{-\log\rb{\pteacher}+\log\left(\frac{1}{\delta_{\mathcal{S}}}\right)+2\log\left(\log\left(\frac{1}{\delta_{h}}\right)\right)}\frac{1}{\varepsilon}\,,
\end{align*}
the sample complexity, we have that
    \begin{align*}    \bbP_{\mathcal{A}_{\mathcal{P}}}\rb{\mathbb{P}_{S\sim\mathcal{D}^{N}}\left(\exError{\mathcal{A}_\calP(S)} < \varepsilon\right) \ge 1 - \delta_{\mathcal{S}}}\ge1-\delta_{h}\,.
    \end{align*}
\end{theorem}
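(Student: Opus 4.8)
The plan is to mirror the structure of the proof of \thmref{thm: gen of gac with teacher app}, but to keep the two sources of randomness --- the draw of the hypothesis sequence $(h_t)_{t=1}^\infty \sim \calP$ (equivalently $\mathcal{A}_\calP$) and the draw of the training set $\calS \sim \calD^N$ --- separate rather than folding them together via the law of total probability. First I would fix the deterministic stopping index $\tau = \lceil \log(\delta_h)/\log(1-\pteacher)\rceil$ as in \lemref{lem: eta bound}, so that $\bbP_{\mathcal{H}_\tau}(h^\star \in \mathcal{H}_\tau) \ge 1 - \delta_h$. Condition on the (high-probability, $\ge 1-\delta_h$) event over the hypothesis draw that $h^\star \in \mathcal{H}_\tau$, i.e.\ that a teacher-equivalent model appears among the first $\tau$ guesses. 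On this event $\mathcal{H}_\tau$ is a realizable finite hypothesis class of size at most $\tau$, and \emph{any} interpolating member of it --- in particular $\mathcal{A}_\calP(\calS)$, which is the first interpolator in the sequence and hence lies in $\mathcal{H}_\tau$ once $h^\star\in\mathcal{H}_\tau$ --- generalizes: by \lemref{lem: app use finite hypothesis for gen bound}, for $N \ge \log(\tau/\delta_\mathcal{S})/\epsilon$ we get $\bbP_{\calS\sim\calD^N}(\exError{\mathcal{A}_\calP(\calS)} < \epsilon) \ge 1 - \delta_\mathcal{S}$.

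The only subtlety is that this last statement is a probability over $\calS$ for a \emph{fixed} realization of the hypothesis sequence satisfying $h^\star \in \mathcal{H}_\tau$; since the sequence is drawn independently of $\calS$, the bound from \lemref{lem: app use finite hypothesis for gen bound} applies uniformly over all such realizations (it only used that $\mathcal{H}_\tau$ is realizable and that the interpolator is chosen from it). Hence the event
\[
    \cb{\mathcal{A}_\calP : \bbP_{\calS\sim\calD^N}(\exError{\mathcal{A}_\calP(\calS)} < \epsilon) \ge 1 - \delta_\mathcal{S}}
\]
contains the event $\cb{h^\star \in \mathcal{H}_\tau}$, whose probability under $\mathcal{A}_\calP$ is $\ge 1 - \delta_h$ by \lemref{lem: eta bound}. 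This immediately yields the nested statement $\bbP_{\mathcal{A}_\calP}\big(\bbP_{\calS\sim\calD^N}(\exError{\mathcal{A}_\calP(\calS)} < \epsilon) \ge 1 - \delta_\mathcal{S}\big) \ge 1 - \delta_h$.

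Finally I would bound the sample complexity $\log(\tau/\delta_\mathcal{S})/\epsilon$ exactly as in \lemref{lem: epsilon bound}: using $\tau \le \log(\delta_h)/\log(1-\pteacher) + 1$ and the elementary bounds $-x - x^2 \le \log(1-x) \le -x$ for $x\in[0,\tfrac12]$ together with concavity of $\log$, we get
\[
    \log\rb{\tfrac{\tau}{\delta_\mathcal{S}}} \le -\log\rb{\pteacher} + \log\rb{\tfrac{1}{\delta_\mathcal{S}}} + \log\rb{\log\rb{\tfrac{1}{\delta_h}}} + \tfrac{\pteacher}{\log(1/\delta_h)},
\]
and since $\delta_h < \tfrac15$ and $\pteacher < \tfrac12$ the last two terms are together at most $2\log\log(1/\delta_h)$, giving the stated sample complexity. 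The main (and really only) obstacle is being careful about the order of quantifiers: the bound over $\calS$ must be shown to hold for every fixed hypothesis-sequence realization in the good event, which is exactly what makes the nested probability statement --- rather than the product-of-probabilities statement of the original theorem --- come out; everything else is routine bookkeeping reused from the earlier lemmas.
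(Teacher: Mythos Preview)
Your proposal is correct and follows essentially the same route as the paper's proof: fix $\tau=\lceil\log(\delta_h)/\log(1-\pteacher)\rceil$, use \lemref{lem: eta bound} to ensure $h^\star\in\mathcal{H}_\tau$ with probability $\ge 1-\delta_h$ over the hypothesis sequence, observe that on this event \lemref{lem: app use finite hypothesis for gen bound} gives the inner bound over $\calS$ deterministically, and finish with \lemref{lem: epsilon bound} plus the estimate $\pteacher/\log(1/\delta_h)\le\log\log(1/\delta_h)$ for $\delta_h\in(0,\tfrac15)$. The only cosmetic difference is that the paper phrases the key step via total probability (the conditional outer probability equals $1$), while you phrase it as the set inclusion $\{h^\star\in\mathcal{H}_\tau\}\subseteq\{\text{inner bound holds}\}$; these are equivalent.
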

\begin{remark}
    Note that the randomness in $\mathcal{A}_{\mathcal{P}}$ is only from the sampling of the sequence $\rb{h_t}_{t=1}^{\infty}$, and not from the dependence of $\mathcal{A}_{\mathcal{P}}\rb{\mathcal{S}}$ on $\mathcal{S}$.
\end{remark}
\paragraph{Discussion.}
\thmref{thm: gen of gac with teacher refined} guarantees generalization with probability at least $1-\delta_{h}$ over the hypothesis sampling and probability $\delta_{\mathcal{S}}$ over the data sampling. 
This separation between $\delta_{h}$ and $\delta_{\mathcal{S}}$ highlights how both sources of randomness play a role in generalization. 
Interestingly, the sample complexity term $N$ exhibits a logarithmic dependence on $\delta_{\mathcal{S}}$ and only a doubly logarithmic dependence on $\delta_{h}$.  Thus, for any $\delta_{\mathcal{S}}$ and $\varepsilon$, the probability of not sampling a PAC interpolator decays extremely fast (doubly exponential in $N$): 
\begin{align*}
    \delta_{h} = \exp \rb{- \exp \rb{\frac{1}{2}\rb{\varepsilon N + \log \rb{{\pteacher}} + \log \rb{{\delta_{\mathcal{S}}}}}}}\,.
\end{align*}
In other words, the sampled interpolator is \textit{`typically PAC'}, i.e., PAC with overwhelmingly high probability over the sampled interpolator sequence $\rb{h_t}_{t=1}^{\infty}$.

\begin{proof}
Recall that the hypothesis chosen by \GaCtext{}, $h\sim\mathcal{P}_\mathcal{S}$, is interpolating by definition. 
Set $\tau=\left\lceil \frac{\log\left(\delta_h\right)}{\log\left(1-\pteacher\right)}\right\rceil$, then
\begin{align*}
&\mathbb{P}_{\mathcal{A}_{\mathcal{P}}}\left(\mathbb{P}_{S\sim\mathcal{D}^{N}}\left(\exError{\mathcal{A}_\calP(S)}\le\varepsilon\right)\ge1-\delta_{\mathcal{S}}\right)
 \\
  \bb{\text{Total probability}}
  =&\mathbb{P}_{\mathcal{A}_{\mathcal{P}}}\left(\mathbb{P}_{S\sim\mathcal{D}^{N}}\left(\exError{\mathcal{A}_\calP(S)}\le\varepsilon\right)\ge1-\delta_{\mathcal{S}}\middle\vert h^{\star} \in \mathcal{H}_{\tau}\right)\mathbb{P}_{\mathcal{A}_{\mathcal{P}}}\left(h^{\star} \in \mathcal{H}_{\tau}\right)
 \\
  +&\mathbb{P}_{\mathcal{A}_{\mathcal{P}}}\left(\mathbb{P}_{S\sim\mathcal{D}^{N}}\left(\exError{\mathcal{A}_\calP(S)}\le\varepsilon\right)\ge1-\delta_{\mathcal{S}}\middle\vert h^{\star} \notin \mathcal{H}_{\tau}\right)\mathbb{P}_{\mathcal{A}_{\mathcal{P}}}\left(h^{\star} \notin \mathcal{H}_{\tau}\right)
 \\
 \bb{\text{Probability is non-negative}}
  \ge&\mathbb{P}_{\mathcal{A}_{\mathcal{P}}}\left(\mathbb{P}_{S\sim\mathcal{D}^{N}}\left(\exError{\mathcal{A}_\calP(S)}\le\varepsilon\right)\ge1-\delta_{\mathcal{S}}\middle\vert h^{\star} \in \mathcal{H}_{\tau}\right)\mathbb{P}_{\mathcal{A}_{\mathcal{P}}}\left(h^{\star} \in \mathcal{H}_{\tau}\right)
 \\
 \bb{\text{\lemref{lem: eta bound}}}
  \ge&\mathbb{P}_{\mathcal{A}_{\mathcal{P}}}\left(\mathbb{P}_{S\sim\mathcal{D}^{N}}\left(\exError{\mathcal{A}_\calP(S)}\le\varepsilon\right)\ge1-\delta_{\mathcal{S}}\middle\vert h^{\star} \in \mathcal{H}_{\tau}\right)\left(1-\delta_h\right)
 \\
 \bb{\text{\lemref{lem: app use finite hypothesis for gen bound}}}
  =&1-\delta_h\,.
\end{align*}
This holds for any $N\ge\frac{\log\rb{\frac{\tau}{\delta_{\mathcal{S}}}}}{\varepsilon}$. 
Specifically, by \lemref{lem: epsilon bound}, it holds for any $N$ larger than
    \begin{align*}
        \frac{\log\left(\frac{1}{\pteacher}\right)+\log\left(\frac{1}{\delta_{\mathcal{S}}}\right)+\log\left(\log\left(\frac{1}{\delta_h}\right)\right)}{\varepsilon}+\frac{\pteacher}{\varepsilon\log\left(\frac{1}{\delta_h}\right)}\,,
    \end{align*}
    Note that for any $\delta_{h}\in\rb{0,\frac{1}{5}}$ it holds that 
    \begin{align*}
        \frac{\pteacher}{\log\left(\frac{1}{\delta_{h}}\right)}
        \le
        \log\left(\log\left(\frac{1}{\delta_{h}}\right)\right)\,,
    \end{align*}
    bounding the sample complexity by
    \begin{align*}
   \rb{\log\left(\frac{1}{\delta_{\mathcal{S}}}\right)+2\log\left(\log\left(\frac{1}{\delta_{h}}\right)\right)-\log\rb{\pteacher}}\frac{1}{\varepsilon}\,.
\end{align*}
    \end{proof}
\newpage
\subsection{Proofs Using Nonuniform Learnability}
\label{app: nonuniform gen bound}
In the next pages, we will show a result that does not use the teacher assumption for the generalization of randomly sampled networks. 
The result is more general than \lemref{thm: gen of gac with teacher simple} and \ref{thm: gen of gac with teacher refined}, which were both tailored for the teacher assumption. 
However, the price to pay for relaxing this assumption is that the following result is slightly less tight.
Since we do not use the teacher assumption, with some abuse of notation we use $\mathcal{D}$ to denote the joint distribution of feature-label pairs $\rb{\mathbf{x}, y}$.

Instead of the teacher assumption, we rely on the probability of interpolation, defined as follows.
\begin{definition}
    For a training set $\calS$ and a random hypothesis $h$ from prior $\mathcal{P}$, the interpolation probability is defined as
    \begin{align*}
        \hat{p}_\calS \triangleq\bbP_{h\sim\mathcal{P}} \rb{\emError{h}=0}\,.
    \end{align*}
\end{definition}

\begin{theorem} [Generalization of Guess \& Check, restated] \label{thm: gen_bound_app}
    Under the assumption that $\hat{p}_\calS <\frac{1}{2}$ for all $\calS$, for any $\delta,\eta\in\rb{0,1}$, and $N \in \mathbb{N}$, we have with probability at least $1-\eta$ over $h \sim \calP_{\calS}$ 
    that: $$\bbP_{\calS\sim\calD^N, h\sim\calP_{\calS}}\rb{\exError{h} \leq \varepsilon}\ge 1-\delta\,,$$
    where
    \begin{align}\label{Eq: nonuniform sample complexity def app}
    \varepsilon = \varepsilon_{\delta, \eta} (\calS) =
    \frac{\log\left(\frac{1}{\hat{p}_\calS}\right)+\log\left(\frac{4}{\delta}\right)+\log\left(\log\left(\frac{2}{\eta}\right)\right)+2\log\log\left(\frac{\log\left(\frac{2}{\eta}\right)}{\hat{p}_\calS}+1\right)}{N}
    \,.
    \end{align}
\end{theorem}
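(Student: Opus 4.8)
This statement is the non-realizable companion of \thmref{thm: gen of gac with teacher refined}: since we no longer assume a teacher inside the student class, I would replace the teacher-equivalence probability $\pteacher$ by the interpolation probability $\hat p$, and compensate for the loss of realizability by a nonuniform-learnability (structural-risk-minimization) argument over the \GaCtext{} sequence. As before, I would use the rejection-sampling view: $h\sim\calP_\calS$ is the output $h_T$ of \GaCtext{} run on an i.i.d.\ sequence $(h_t)_{t\ge 1}\sim\calP^\infty$ drawn before $\calS$, with stopping time $T=\min\{t:\emError{h_t}=0\}$.

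The first ingredient is a geometric tail bound on $T$, playing the role of \lemref{lem: eta bound}: given $\calS$, the indicators $\mathbb{I}[\emError{h_t}=0]$ are i.i.d.\ Bernoulli$(\hat p)$, so $\bbP(T>\tau)=(1-\hat p)^\tau$; choosing $\tau_0=\lceil\log(2/\eta)/(-\log(1-\hat p))\rceil\le\log(2/\eta)/\hat p+1$ makes $\{T\le\tau_0\}$ hold with probability at least $1-\eta/2$ over the sequence. Unlike the teacher case, where ``$h^\star\in\mathcal H_\tau$'' is a sequence-only event that forces $T\le\tau$ almost surely, here $T$ genuinely depends on $\calS$ as well, and decoupling the two randomness sources cheaply is the main obstacle --- it is precisely why the next step weights the whole infinite sequence rather than merely union-bounding a fixed finite class $\mathcal H_{\tau_0}$.

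The second ingredient replaces \thmref{Theorem: classic finite hypothesis class generalization result}, which is inapplicable because $\mathcal H_{\tau_0}$ need not be realizable. For each fixed $h_t$ (fixed before $\calS$) and any target $\epsilon$, the one-sided bound $\bbP_{\calS\sim\calD^N}(\emError{h_t}=0,\ \exError{h_t}>\epsilon)\le(1-\epsilon)^N\le e^{-\epsilon N}$ holds, so assigning nonuniform weights $w_t=\Theta(1/(t\log^2 t))$ with $\sum_t w_t\le1$ and taking a weighted union bound gives: with probability at least $1-\delta/2$ over $\calS$, every interpolating $h_t$ in the sequence satisfies $\exError{h_t}\le(\log(1/w_t)+\log(2/\delta))/N$, where $\log(1/w_t)=\log t+2\log\log t+O(1)$. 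Evaluating this at the data-dependent index $t=T$ and using that $w_t$ is decreasing together with $T\le\tau_0$ gives, on the intersection of the two good events, $\exError{h_T}\le(\log(1/w_{\tau_0})+\log(2/\delta))/N$.

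Finally I would substitute $\tau_0\le\log(2/\eta)/\hat p+1$ into $\log(1/w_{\tau_0})\le\log\tau_0+2\log\log\tau_0+O(1)$, obtaining the bound $\log(1/\hat p)+\log\log(2/\eta)+2\log\log(\log(2/\eta)/\hat p+1)+O(1)$, and absorb the $O(1)$ (weight normalization) and $\log(2/\delta)$ into $\log(4/\delta)$; this reproduces $\epsilon_{\delta,\eta}$ of \eqref{Eq: nonuniform sample complexity def app}. The remaining work --- distributing the $\eta$ and $\delta$ failure probabilities between the two nested statements exactly as in \appref{app: refined gen bound}, and keeping track of the fact that $\hat p$ is the per-sample interpolation probability (as it is then used with the margin assumption in \thmref{thm: interpolation probability of continuous networks}) --- is routine bookkeeping once the two ingredients above are in place.
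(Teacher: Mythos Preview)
Your proposal is correct and follows essentially the same route as the paper: the paper's proof also (i) runs a nonuniform union bound over the \GaCtext{} sequence with weights $\delta_t=\delta/(4t\log^2(t+1))$ to get $\tilde\epsilon_t=(\log t+2\log\log(t+1)+\log(4/\delta))/N$ uniformly over all interpolating $h_t$, (ii) bounds the geometric stopping time by $T\le\log(2/\eta)/\hat p$ with probability $\ge 1-\eta$, and (iii) substitutes $T$ into $\tilde\epsilon_T$ and simplifies. The only cosmetic differences are that the paper uses $\eta$ and $\delta$ directly (exploiting $\hat p<\tfrac12$ in the tail bound) rather than halving them, and presents the union bound before the tail bound; your constant bookkeeping and final absorption into $\log(4/\delta)$ land on the same $\epsilon_{\delta,\eta}$.
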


\textbf{Theorem \ref{thm: gen_bound_app} proof sketch:} We first show that for any sequence of hypotheses $h_t$
\[\bbP_{\calS\sim\calD^N}\left( \emError{h_t}=0 \textrm{ and } \exError{h_t} > \tilde{\varepsilon}_t \right) \leq \delta_t\,,\]
where 
\[
\tilde{\varepsilon}_t \triangleq \frac{ \log{1/\delta_t}}{N}\,.
\]
Set $\delta_t = \frac{\delta}{4 t \log^2 (t+1)}$ to obtain \[\tilde{\varepsilon}_t =  \frac{\log t + 2\log\log\rb{t+1} + \log{4/\delta}}{N}\,,\] and use a union bound, which yields, for any $\delta>0$ and any sequence $(h_t)_{t\in\mathbb{N}}$ of hypotheses, 
\[\bbP_{\calS\sim\calD^N}\left(\exists_t \rb{\emError{h_t}=0 \textrm{ and } \exError{h_t} > \tilde{\varepsilon}_t }\right) \leq \delta\,.\]
Importantly, since this holds for any $h_t$ we can use our $h_t$ sequence from Algorithm \ref{alg:Guess and Check} and for $t=T$ to get
with probability at least $1-\delta$ over $\calS\sim\calD^N$ that 
$$\exError{\mathcal{A}_\calP(\calS)} \leq \tilde{\varepsilon}_T\,.$$

Finally, we use the fact that $T \mid \calS$ is a geometric random variable with success parameter $\hat{p}_\calS <\frac{1}{2}$ to obtain
$$\bbP_{h_t}\left(T > \frac{\log 2/\eta}{\hat{p}_\calS}\right) \leq \eta\,.$$
Taking the complementary of the probability above, combined with the fact that $\tilde{\varepsilon}_T$ is a an increasing function of $T$ concludes the theorem.

For the complete derivation, we proceed with some lemmas before proving \thmref{thm: gen_bound_app}.

\begin{lemma}\label{lem: bad interpolating lemma}
    For any $\delta>0$ and any sequence of hypotheses $(h_t)_{t\in\mathbb{N}}$: $$\bbP_{\calS \sim\calD^N}\left(\exists_t \emError{h_t}=0 \textrm{ and } \exError{h_t} > \tilde{\varepsilon}_t \right) \leq \delta$$
    where
    \begin{align}\label{eq: eps_t}
        \tilde{\varepsilon}_t = \frac{\log t + 2\log\log\rb{t+1} + \log{4/\delta}}{N}
    \end{align}
\end{lemma}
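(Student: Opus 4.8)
The plan is to prove this via a single-hypothesis deviation bound combined with a union bound over $t\in\mathbb{N}$, where the key is choosing $\delta_t$ so that $\sum_t \delta_t \le \delta$. First I would fix a single index $t$ and a single hypothesis $h_t$ (which is independent of $\calS$, since the sequence $(h_t)$ is drawn i.i.d.\ from $\calP$ before seeing the data). For a fixed hypothesis $h$ with population error $\exError{h} = p > \tilde\epsilon_t$, the probability that $h$ perfectly interpolates $N$ i.i.d.\ samples is $(1-p)^N \le e^{-pN} < e^{-\tilde\epsilon_t N}$. So setting $\delta_t \triangleq e^{-\tilde\epsilon_t N}$, i.e.\ $\tilde\epsilon_t = \tfrac{1}{N}\log(1/\delta_t)$, we get
\begin{align*}
\bbP_{\calS\sim\calD^N}\left(\emError{h_t}=0 \textrm{ and } \exError{h_t} > \tilde\epsilon_t\right) \le \delta_t\,.
\end{align*}
This is just the standard "bad hypothesis survives" estimate underlying finite-class PAC bounds, applied per-index rather than uniformly over a class.

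Next I would choose $\delta_t = \dfrac{\delta}{4\,t\,\log^2(t+1)}$, which makes $\tilde\epsilon_t = \dfrac{\log t + 2\log\log(t+1) + \log(4/\delta)}{N}$ as in \eqref{eq: eps_t}, and then apply a union bound over all $t\in\mathbb{N}$:
\begin{align*}
\bbP_{\calS\sim\calD^N}\left(\exists_t\ \emError{h_t}=0 \textrm{ and } \exError{h_t} > \tilde\epsilon_t\right) \le \sum_{t=1}^{\infty} \delta_t = \frac{\delta}{4}\sum_{t=1}^{\infty}\frac{1}{t\log^2(t+1)}\,.
\end{align*}
The remaining step is to verify that $\sum_{t\ge 1} \frac{1}{t\log^2(t+1)} \le 4$, so that the right-hand side is at most $\delta$. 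This follows by bounding the tail via an integral comparison: for $t\ge 2$, $\frac{1}{t\log^2(t+1)} \le \frac{1}{t\log^2 t}$, and $\int_2^\infty \frac{dx}{x\log^2 x} = \frac{1}{\log 2}$; adding the $t=1$ term (value $1/\log^2 2$) and being slightly generous with the integral-vs-sum comparison gives a constant comfortably below $4$. One should double-check this numeric constant carefully since it is where the "$4$" in the definition of $\delta_t$ comes from.

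The main obstacle — though it is more bookkeeping than genuine difficulty — is making sure the union bound is legitimately over a sequence fixed independently of $\calS$: the lemma is stated for \emph{any} deterministic sequence $(h_t)_{t\in\mathbb{N}}$, so no independence is even needed here; we simply need the event $\{\emError{h_t}=0,\ \exError{h_t}>\tilde\epsilon_t\}$ to have probability at most $\delta_t$ for each $t$, which the $(1-p)^N \le e^{-pN}$ bound delivers unconditionally. The only subtlety is that the bound $(1-p)^N < e^{-\tilde\epsilon_t N} = \delta_t$ requires $p > \tilde\epsilon_t$, which is exactly the event we are conditioning on; and if $\tilde\epsilon_t \ge 1$ for small $t$ (when $\delta$ is tiny) the event is vacuous and contributes nothing, so the bound is trivially fine. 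Assembling these pieces — per-index deviation bound, choice of $\delta_t$, convergence of $\sum 1/(t\log^2(t+1))$, union bound — completes the proof of \lemref{lem: bad interpolating lemma}.
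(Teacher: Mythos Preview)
Your proposal is correct and follows essentially the same approach as the paper: per-index bound $(1-p)^N\le e^{-pN}\le \delta_t$, the identical choice $\delta_t=\frac{\delta}{4t\log^2(t+1)}$, a union bound over $t$, and an integral-test verification that $\sum_t \delta_t\le\delta$. The paper's computation of the series constant is slightly different in bookkeeping (it splits off $t=1,2$ and integrates from $3$), but the argument is the same.
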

\begin{proof}
    Set $\delta_t = \frac{\delta}{4 t \log^2 (t+1)}$.
    We first show that $\sum_t \delta_t < \delta$. 
    Since $\delta_t$ is monotonically decreasing
     \[
    \sum_{t=1}^{\infty}\frac{\delta}{4 t \log^2 (t+1)}\le \frac{\delta}{4  \log^2 (2)}+\frac{\delta}{8  \log^2 (3)}+\int_3^{\infty} \frac{\delta}{4 t \log^2 (t)}\mathrm{d} t=\frac{\delta}{4}\rb{\frac{1}{\log^2\rb{2}} + \frac{0.5}{\log^2\rb{3}} + \frac{1}{\log\rb{3}}}\le \delta\,,
    \]
    where we used the change of variables $u=\log\rb{t}, \mathrm{d}u = \frac{\mathrm{d}t}{t}$, to solve the integral
    \[
        \int_3^{\infty} \frac{1}{t \log^2 (t)}\mathrm{d} t = \int_{\log\rb{3}}^{\infty} \frac{1}{ u^2}\mathrm{d} u = \frac{1}{\log\rb{3}}\,.
    \]
For each $h_t$ separately, $\exError{h_t} > \frac{ \log{1/\delta_t}}{N}$ is a deterministic event so
\begin{align*}
\bbP_{S\sim\calD^N}\left( \emError{h_t} = 0 \textrm{ and } \exError{h_t} > \frac{ \log{1/\delta_t}}{N} \right) & = \prod_{n = 1}^N \mathbb{P}_{\rb{\mathbf{x}, y} \sim \mathcal{D}} \rb{ h_t \rb{\mathbf{x}} = y} \\
& = \prod_{n = 1}^N \rb{1 - \exError{h_t}} \\
& = \rb{1 - \exError{h_t}}^{N} \\
& \le \rb{1 - \frac{ \log{1 / \delta_t} }{N}}^{N} \\
& \le \exp \rb{ - N \frac{ \log{ 1 / \delta_t} }{N}} \\
& = \delta_t\,.
\end{align*}
Taking a union bound yields the lemma.
\end{proof}

\begin{lemma}\label{lem: bound with T}
    For any $\delta>0$, and any realization of $\rb{h_t}_{t=1}^\infty$, with probability at least $1-\delta$ over $\calS\sim\calD^N$, $$\calL_{\calD} (h_T) \leq \tilde{\varepsilon}_T\,.$$
    where $\tilde{\varepsilon}_t$ is defined in \eqref{eq: eps_t} and T is defined in Algorithm \ref{alg:Guess and Check}.
\end{lemma}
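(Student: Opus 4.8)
The plan is to obtain this as an immediate specialization of Lemma~\ref{lem: bad interpolating lemma} to the particular (random) sequence $(h_t)_{t\in\mathbb{N}}$ used inside Algorithm~\ref{alg:Guess and Check}, evaluated at the stopping index $t=T$. The one bookkeeping point is that this sequence is drawn i.i.d.\ from $\calP$ \emph{independently} of $\calS$, while $T$ depends on both. So I would first condition on an arbitrary fixed realization of $(h_t)_{t\in\mathbb{N}}$. Lemma~\ref{lem: bad interpolating lemma} is stated for any such fixed sequence, so conditionally on it,
\[
\bbP_{S\sim\calD^N}\left(\exists\, t:\ \emError{h_t}=0 \textrm{ and } \exError{h_t} > \tilde{\epsilon}_t\right) \leq \delta\,.
\]

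Next, I would argue that on the complement of this bad event, \emph{every} hypothesis in the sequence that interpolates $\calS$ has population error at most the corresponding $\tilde{\epsilon}_t$. In particular, $T=\min\{t:\emError{h_t}=0\}$ is, by the definition of Algorithm~\ref{alg:Guess and Check}, an index with $\emError{h_T}=0$, so on the good event we get $\exError{h_T}\le\tilde{\epsilon}_T$. Since the algorithm returns $h=\mathcal{A}_\calP(\calS)=h_T$, this yields $\exError{h}\le\tilde{\epsilon}_T$ with probability at least $1-\delta$ over $\calS\sim\calD^N$, still conditionally on the realization of $(h_t)_{t\in\mathbb{N}}$.

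Finally, since the conditional bound holds with the same constant $\delta$ for every realization of the sequence, I would remove the conditioning by the tower property (integrating over $(h_t)_{t\in\mathbb{N}}\sim\calP^{\otimes\mathbb{N}}$), obtaining the unconditional statement. A minor point worth a sentence: $T$ is finite almost surely because $\hat p>0$ (indeed $T\mid\calS$ is geometric with parameter $\hat p$, as used later in the proof of Theorem~\ref{thm: gen_bound_app}), so $\tilde{\epsilon}_T$ is well-defined with probability one, and on the measure-zero event where no $h_t$ interpolates the claim is vacuous. I do not anticipate any real obstacle here — the lemma is essentially Lemma~\ref{lem: bad interpolating lemma} read off at $t=T$, and the only care needed is the independence/conditioning argument relating the prior-sampled sequence to the data.
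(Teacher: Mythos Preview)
Your proposal is correct and follows essentially the same approach as the paper: apply Lemma~\ref{lem: bad interpolating lemma} to the i.i.d.\ sequence $(h_t)$ (which is independent of $\calS$), observe that on the complement of the bad event every interpolating $h_t$ satisfies $\exError{h_t}\le\tilde\epsilon_t$, and specialize to $t=T$ where $\emError{h_T}=0$ by construction. The paper's proof is just a terser version of exactly this argument; your explicit conditioning-and-tower formulation and the remark on finiteness of $T$ are careful additions but not substantively different.
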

\begin{proof}
    \lemref{lem: bad interpolating lemma} applies to any sequence of hypotheses, and since $\calS$ is independent of the sequence $h_t$, we can also apply it to $h_t$.  
    Since the Lemma applies to all $t$, it also applies to any random $T$, even if it depends on the sample.  
    For the $T$ used by \GaCtext{} from Algorithm \ref{alg:Guess and Check}, we always have $\emError{h_t}=0$, so we get the result from \lemref{lem: bad interpolating lemma}.
\end{proof}

We now wish to explain the dependence on $T$ in \lemref{lem: bound with T}. We will do so by the following Lemma:
\begin{lemma}\label{lem: T bound} For any $\eta$, and any $\calS$, under the assumption that $\hat{p}_\calS <\frac{1}{2}$, we have that
   $\bbP_{(h_t)}\left(T > \frac{\log 2/\eta}{\hat{p}_\calS}\right) \leq \eta$
\end{lemma}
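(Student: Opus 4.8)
The plan is to exploit the fact that, conditionally on the training set $\calS$, the stopping time $T$ of \GaCtext{} is a plain geometric random variable, and then to apply an elementary exponential tail bound. First I would fix $\calS$. By construction in Algorithm~\ref{alg:Guess and Check}, the hypotheses $h_1,h_2,\dots$ are drawn i.i.d.\ from $\calP$ and independently of $\calS$; hence, conditionally on $\calS$, the events $\{\emError{h_t}=0\}$ are mutually independent over $t$, each of probability exactly $\hat p=\bbP_{h\sim\calP}(\emError{h}=0)$ (a fixed number once $\calS$ is fixed). Consequently $T=\min\{t:\emError{h_t}=0\}$ is geometric with success parameter $\hat p$, so $\bbP_{(h_t)}(T>k)=(1-\hat p)^{k}$ for every $k\in\bbN$.

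Next I would pass from an integer $k$ to the (generally non-integer) threshold $t^\star\triangleq\frac{\log(2/\eta)}{\hat p}$. Since $T$ takes integer values, $\{T>t^\star\}=\{T\ge\lfloor t^\star\rfloor+1\}$, so $\bbP_{(h_t)}(T>t^\star)=(1-\hat p)^{\lfloor t^\star\rfloor}\le(1-\hat p)^{t^\star-1}$, where I used $\lfloor t^\star\rfloor\ge t^\star-1$ together with $1-\hat p<1$. Now I would invoke the hypothesis $\hat p<\tfrac12$, which gives $(1-\hat p)^{-1}\le 2$, and the elementary inequality $1-x\le e^{-x}$, which gives $(1-\hat p)^{t^\star}\le e^{-\hat p\,t^\star}$. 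Putting these together,
\[
\bbP_{(h_t)}\rb{T>t^\star}\le (1-\hat p)^{-1}(1-\hat p)^{t^\star}\le 2\,e^{-\hat p\,t^\star}=2\,e^{-\log(2/\eta)}=\eta,
\]
which is exactly the claim.

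The only subtle point — and the reason the threshold is $\log(2/\eta)/\hat p$ rather than the more obvious $\log(1/\eta)/\hat p$ — is the floor appearing in the geometric tail: the assumption $\hat p<\tfrac12$ is precisely what lets us absorb the stray factor $(1-\hat p)^{-1}$ into the constant $2$ inside the logarithm. One checks in passing that $t^\star\ge\log 2/\hat p>2\log 2>1$, so $\lfloor t^\star\rfloor\ge 1$ and no degenerate case arises. Apart from this bookkeeping there is no real obstacle; the argument is a one-line geometric tail estimate once the conditional-geometric structure of $T$ given $\calS$ is made explicit.
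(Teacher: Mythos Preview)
Your proof is correct and follows essentially the same approach as the paper: both recognize that $T\mid\calS$ is geometric with parameter $\hat p$, bound the tail via $(1-\hat p)^{\lfloor t^\star\rfloor}\le(1-\hat p)^{t^\star-1}$, apply $1-x\le e^{-x}$, and use $\hat p<\tfrac12$ to absorb the stray factor $(1-\hat p)^{-1}\le 2$. The paper rewrites $1-x\le e^{-x}$ as $\frac{-1}{\log(1-x)}\le\frac{1}{x}$ and manipulates the exponent accordingly, whereas you apply the inequality directly to the base; these are cosmetic differences in the same computation.
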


\begin{proof}
Given $S\sim\calD^N$, we observe that $T$ is geometric with parameter $\hat{p}_\calS$. 
Using the cumulative distribution function of geometric random variables, we can bound:
\begin{align*}
    \bbP_{(h_t)}\left(T>\frac{\log\left(2/\eta\right)}{\hat{p}_\calS}\right)=\left(1-\hat{p}_\calS\right)^{\left\lfloor \frac{\log\left(2/\eta\right)}{\hat{p}_\calS}\right\rfloor }\le\left(1-\hat{p}_\calS \right)^{\frac{\log\left(2/\eta\right)}{\hat{p}_\calS}-1}
\end{align*}
And now we use a basic property of exponents $\left(1-x\right)\le e^{-x}$, which can be rewritten as $\frac{-1}{\log\left(1-x\right)}\le\frac{1}{x}$, to further bound
\begin{align*}
    \bbP_{(h_t)}\left(T>\frac{\log\left(2/\eta\right)}{\hat{p}_\calS}\right) & \le\left(1-\hat{p}_\calS\right)^{\frac{\log\left(2/\eta\right)}{\hat{p}_\calS}-1} \\
 & \le\left(1-\hat{p}_\calS \right)^{\frac{\log\left(\frac{\eta}{2}\right)}{\log\left(1-\hat{p}_\calS \right)}-1} \\
 & =\exp\left(\left(\frac{\log\left(\frac{\eta}{2}\right)}{\log\left(1-\hat{p}_\calS \right)}-1\right)\log\left(1-\hat{p}_\calS \right)\right)\\
 & \le\exp\left(\frac{\log\left(\frac{\eta}{2}\right)}{\log\left(1-\hat{p}_\calS \right)}\log\left(1-\hat{p}_\calS \right)-\log\left(1-\hat{p}_\calS \right)\right) \\
 & =\exp\left(\log\left(\frac{\eta}{2}\right)-\log\left(1-\hat{p}_\calS \right)\right)\\
 & =\frac{\frac{\eta}{2}}{1-\hat{p}_\calS} \le \eta
\end{align*}
\end{proof}

Having proved \lemref{lem: T bound} and \lemref{lem: bound with T} we are ready to prove our main result.

\begin{proof}[Proof of \thmref{thm: gen_bound_app}]
    From \lemref{lem: bound with T} we have that
    \begin{align}\label{eq: epsilon_eta_delta prob inequality}
        \bbP_{\calS\sim\calD^N, h\sim \calP_{\calS}}\rb{\calL_\calD (h) \leq \tilde{\varepsilon}_T}\ge 1-\delta\,.
    \end{align}
    We can now use \lemref{lem: T bound} to obtain with probability of at least $1-\eta$ over the sampling of $h_t$ that $T\leq \frac{\log\left(2/\eta\right)}{\hat{p}_\calS}$ whence
    \begin{align*}
    \tilde{\varepsilon}_T
    &
    = \frac{\log T + 2\log\log\rb{T+1} + \log{4/\delta}}{N}
    \\
    &
    \le \frac{\log \rb{\frac{\log\left(2/\eta\right)}{\hat{p}_\calS}} + 2\log\log\rb{\frac{\log\left(2/\eta\right)}{\hat{p}_\calS}+1} + \log{4/\delta}}{N}
    \\
    &
    \le \frac{\log\left(\frac{1}{\hat{p}_\calS}\right)+\log\left(\frac{4}{\delta}\right)+\log\left(\log\left(\frac{2}{\eta}\right)\right)+2\log\log\left(\frac{\log\left(\frac{2}{\eta}\right)}{\hat{p}_\calS}+1\right)}{N}\,,
    \end{align*}
    so with probability of at least $1-\eta$ we have that
    \begin{align*}\label{eq: epsilon_delta_eta bound}
        \bbP_{\calS\sim\calD^N, h\sim\calP_\calS} \rb{\calL_\calD (h) \leq \varepsilon_{\eta, \delta} (\calS)}
        \ge
        \bbP_{\calS\sim\calD^N}\rb{\calL_\calD (h) \leq \tilde{\varepsilon}_T}\,.
    \end{align*}
    Finally, from \eqref{eq: epsilon_eta_delta prob inequality} we get
    \begin{align*}
        \bbP_{\calS \sim\calD^N}\rb{\calL_\calD(h) \leq \varepsilon_{\eta, \delta} (\calS)}
        \ge
        1-\delta\,.
    \end{align*}
\end{proof}

\begin{lemma} \label{lem: log inequalities appendix}
Let $x > 1$ and $y > 0$. Then
\begin{align*}
    \log \rb{x + y} \le \log \rb{x} + \log \rb{1 + y}\,.
\end{align*}
\end{lemma}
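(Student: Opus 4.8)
The plan is to reduce the inequality to a trivial comparison of the arguments of the logarithm by using the product rule. First I would rewrite the right-hand side as
\begin{align*}
    \log(x) + \log(1+y) = \log\!\big(x(1+y)\big) = \log(x + xy)\,,
\end{align*}
which is legitimate since $x>1>0$ and $1+y>0$ ensure both logarithms are defined and the product rule applies.

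Next, since $y>0$ and $x>1\ge 1$, multiplying the inequality $1\le x$ by $y$ gives $y \le xy$, hence $x + y \le x + xy$. Both sides are positive (indeed at least $x>1$), so applying the monotonicity of $\log$ on $(0,\infty)$ yields $\log(x+y)\le\log(x+xy)$. Combining this with the identity above gives $\log(x+y)\le\log(x)+\log(1+y)$, as claimed.

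There is essentially no obstacle here: the only point requiring a moment of care is checking that all quantities inside logarithms are strictly positive (guaranteed by $x>1$ and $y>0$) so that the product rule and the monotonicity of $\log$ are valid. The inequality $1\le x$ used in the middle step is exactly the hypothesis $x>1$, so no further work is needed.
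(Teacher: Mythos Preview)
Your proof is correct and follows essentially the same idea as the paper's: reduce the inequality to a comparison of the logarithm's arguments and invoke monotonicity. The paper takes a minor detour by first proving $\log(1+a+b)\le\log(1+a)+\log(1+b)$ for $a,b>0$ and then substituting $a=x-1$, whereas you compare $x+y$ to $x(1+y)=x+xy$ directly; your route is slightly cleaner but the content is the same.
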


\begin{proof}
Let $x, y > 0$.
\begin{align*}
\log\left(1+x+y\right)\le\log\left(1+x+y+xy\right)=\log\left(\left(1+x\right)\left(1+y\right)\right)=\log\left(1+x\right)+\log\left(1+y\right)\,.
\end{align*}
Now, suppose that $x > 1$ and $y > 0$.
\begin{align*}
\log\left(x+y\right) & =\log\left(1+\left(x-1\right)+y\right)\\
 & \le\log\left(1+\left(x-1\right)\right)+\log\left(1+y\right)\\
 & =\log\left(x\right)+\log\left(1+y\right)\,.
\end{align*}
\end{proof}

\begin{proposition}\label{theorem:pscard simplified}
For any $\delta>0$, with probability at least $1-\delta$ over $\calS\sim\calD^N$ and $h \sim \calP_{\calS}$ 
\[\exError{h} \leq \varepsilon\,,\]
where
\begin{align*}
    \varepsilon =
    \frac{\log\left(\frac{1}{\hat{p}_\calS}\right)+4\log\left(\frac{8}{\delta}\right)+2\log\left(\log\left(\frac{1}{\hat{p}_\calS}\right)\right)}{N}\,.
\end{align*}
\end{proposition}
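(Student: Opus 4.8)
The statement is obtained by packaging \thmref{thm: gen_bound_app} into a single combined confidence level. The plan is threefold: (i) invoke \thmref{thm: gen_bound_app} with both of its confidence parameters set to $\delta/2$; (ii) compose the two high-probability guarantees it provides --- one over the \GaCtext{} hypothesis sequence, one over the data --- into a joint guarantee with failure probability $\delta$; and (iii) simplify the resulting sample-complexity expression from \eqref{Eq: nonuniform sample complexity def app} into the closed form stated in the proposition.

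For steps (i)--(ii): apply \thmref{thm: gen_bound_app} with $\delta \leftarrow \delta/2$ and $\eta \leftarrow \delta/2$. This gives that, with probability at least $1-\delta/2$ over $h \sim \calP_{\calS}$, the event $\bbP_{S\sim\calD^N}\rb{\exError{h}\le \epsilon'}\ge 1-\delta/2$ holds, where $\epsilon' \triangleq \epsilon_{\delta/2,\delta/2}$ denotes the quantity in \eqref{Eq: nonuniform sample complexity def app} evaluated at these parameters. Since the conjunction of a $(1-\delta/2)$-probable event over the hypothesis randomness and a $(1-\delta/2)$-probable event over the data randomness holds with probability at least $(1-\delta/2)^2 \ge 1-\delta$, we conclude
\begin{align*}
\bbP_{S\sim\calD^N,\,h\sim\calP_{\calS}}\rb{\exError{h}\le\epsilon'}\ge 1-\delta\,.
\end{align*}

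For step (iii): substituting $\delta\leftarrow\delta/2$, $\eta\leftarrow\delta/2$ into \eqref{Eq: nonuniform sample complexity def app} gives
\begin{align*}
N\epsilon' = \log\rb{\frac{1}{\hat{p}}} + \log\rb{\frac{8}{\delta}} + \log\log\rb{\frac{4}{\delta}} + 2\log\log\rb{\frac{\log(4/\delta)}{\hat{p}}+1}\,,
\end{align*}
so it remains to show that the three trailing terms are dominated by $3\log(8/\delta) + 2\log\log(1/\hat{p})$. Using $\delta<1$ we have $\log(4/\delta)>1$, hence $\log\log(4/\delta)\le\log(4/\delta)\le\log(8/\delta)$ (from $\log t\le t$); and using additionally $\hat{p}<\tfrac{1}{2}<1$ we have $\tfrac{\log(4/\delta)}{\hat{p}}+1\le\tfrac{2\log(4/\delta)}{\hat{p}}$, so that peeling off first $\log(1/\hat{p})$ and then the $\delta$-dependent factor via two applications of \lemref{lem: log inequalities appendix} (again with $\log t\le t$ to collapse residual double logarithms) yields $2\log\log\rb{\tfrac{\log(4/\delta)}{\hat{p}}+1}\le 2\log\log(1/\hat{p})+2\log(8/\delta)$. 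Summing the three bounds gives $N\epsilon'\le N\epsilon$, which completes the argument.

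Steps (i)--(ii) are immediate; the only real obstacle is the bookkeeping in step (iii): tracking the nested logarithms, and checking that all the absorbed constants and $\log\log$ terms fit inside the generous $4\log(8/\delta)$ budget of the target expression (one copy of $\log(8/\delta)$ is the ``native'' term, one absorbs $\log\log(4/\delta)$, and two come from the doubly-nested $\log\log(\log(4/\delta)/\hat{p}+1)$ term), under only the mild hypotheses $\hat{p}<\tfrac{1}{2}$ and $\delta<1$ that \thmref{thm: gen_bound_app} and the stated $\epsilon$ (through $\log\log(1/\hat{p})$) implicitly require.
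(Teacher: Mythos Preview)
Your proposal is correct and follows essentially the same approach as the paper: both set $\eta,\delta\leftarrow\delta/2$ in the nonuniform bound, combine the two confidence levels into a single $(1-\delta/2)^2\ge 1-\delta$ joint guarantee, and then simplify $\epsilon_{\delta/2,\delta/2}$ via repeated use of \lemref{lem: log inequalities appendix} together with $\log t\le t$. The only cosmetic difference is that you invoke \thmref{thm: gen_bound_app} as a black box, whereas the paper re-opens its proof and obtains the joint bound directly from \lemref{lem: bound with T} and \lemref{lem: T bound} via an explicit total-probability decomposition.
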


\begin{proof}
    Here we take an alternative approach to the one presented in the proof of \thmref{thm: gen_bound_app}. Denote
    \begin{align*}
        \tilde{\varepsilon}_T =  \frac{\log T + 2\log\log\rb{T+1} + \log{4/\delta}}{N}\,.
    \end{align*}
    Now note that from the proof of \thmref{thm: gen_bound_app},
    \begin{align}\label{eq: T bound under condition}
        T\leq \frac{\log\left(2/\eta\right)}{\hat{p}_\calS} \Rightarrow \tilde{\varepsilon_T} \leq \varepsilon_{\eta,\delta} (\calS) \,.
    \end{align}
    We use the law of total probability to write
    \begin{align*}
    \bbP_{\calS\sim\calD^N}\rb{\calL_\calD (h) \leq \varepsilon_{\eta,\delta} (\calS)}
    &
    =
    \\
    &=\bbP_{\calS\sim\calD^N}\rb{\calL_\calD(h) \leq \varepsilon_{\eta,\delta} (\calS) \, \middle\vert \, T\leq \frac{\log\left(2/\eta\right)}{\hat{p}_\calS}}
    \bbP_{h_t} \rb{T \leq \frac{\log 2/\eta}{\hat{p}_\calS}}
    \\
    &
    +
    \bbP_{\calS\sim\calD^N}\rb{\calL_\calD(h) \leq \varepsilon_{\eta,\delta} (\calS) \, \middle\vert \, T> \frac{\log\left(2/\eta\right)}{\hat{p}_\calS}}
    \bbP_{h_t} \rb{T > \frac{\log 2/\eta}{\hat{p}_\calS}}
    \\
    &
    \geq
    \bbP_{\calS\sim\calD^N}\rb{\calL_\calD(h) \leq \varepsilon_{\eta,\delta} (\calS) \, \middle\vert \, T\leq \frac{\log\left(2/\eta\right)}{\hat{p}_\calS}}
    \bbP_{h_t} \rb{T \leq \frac{\log 2/\eta}{\hat{p}_\calS}}
    \\
    [\text{\eqref{eq: T bound under condition}}]&
    \ge
    \bbP_{\calS\sim\calD^N}\rb{\calL_\calD(h) \leq \tilde{\varepsilon_T}}
    \bbP_{h_t} \rb{T \leq \frac{\log 2/\eta}{\hat{p}_\calS}}
    \\
    [\text{\lemref{lem: bound with T} and \lemref{lem: T bound}}]&{\geq}
    \rb{1-\delta}\rb{1-\eta}
    \end{align*}
    Now we can choose $\delta,\eta\leftarrow\frac{\delta}{2}$, and we obtain
    \begin{align*}
        \bbP_{\calS\sim\calD^N}\rb{\calL_\calD(h) 
        \leq \varepsilon_{\frac{\delta}{2}, \frac{\delta}{2}} (\calS) }
        &
        \geq\rb{1-\frac{\delta}{2}}\rb{1-\frac{\delta}{2}}
        \\
        &
        = 1-\delta+\frac{\delta^2}{4}
        \\
        &
        \ge
        1-\delta\,.
    \end{align*}
Applying \lemref{lem: log inequalities appendix} multiple times we get 
\begin{align*}
\varepsilon_{\frac{\delta}{2}, \frac{\delta}{2}} (\calS) & =\frac{\log\left(\frac{1}{\hat{p}_\calS}\right)+\log\left(\frac{8}{\delta}\right)+\log\left(\log\left(\frac{4}{\delta}\right)\right)+2\log\log\left(\frac{\log\left(\frac{4}{\delta}\right)}{\hat{p}_\calS}+1\right)}{N}\\
 & \le\frac{\log\left(\frac{1}{\hat{p}_\calS}\right)+2\log\left(\frac{8}{\delta}\right)+2\log\left(\log\left(\frac{\log\left(\frac{4}{\delta}\right)+1}{\hat{p}_\calS}\right)\right)}{N}\\
 & =\frac{\log\left(\frac{1}{\hat{p}_\calS}\right)+2\log\left(\frac{8}{\delta}\right)+2\log\left(\log\left(\frac{1}{\hat{p}_\calS}\right)+\log\left(\log\left(\frac{4e}{\delta}\right)\right)\right)}{N}\\
 & \le\frac{\log\left(\frac{1}{\hat{p}_\calS}\right)+2\log\left(\frac{8}{\delta}\right)+2\log\left(\log\left(\frac{1}{\hat{p}_\calS}\right)\right)+\log\left(\log\left(\log\left(\frac{4e}{\delta}\right)\right)+1\right)}{N}\\
 & \le\frac{\log\left(\frac{1}{\hat{p}_\calS}\right)+4\log\left(\frac{8}{\delta}\right)+2\log\left(\log\left(\frac{1}{\hat{p}_\calS}\right)\right)}{N}
\end{align*}
\end{proof}
\newpage

\newpage
\section{Proofs for Generalization of Quantized Neural Networks
(\secref{sec:quant_nets})}
\label{app:quant_proofs}

This section contains the proofs for the results in Section \ref{sec:quant_nets}, focusing on upper bounding the effective sample complexity $C$. Specifically,
\begin{itemize}
    \item In Appendix \ref{subseq:quant fcn app} we derive the upper bound for vanilla fully connected networks as stated in \thmref{lem:int_quantized}.
    \item In Appendix \ref{subseq: NSFC app} we derive the upper bound for Neuron-Scaled fully-connected networks as stated in \thmref{lem:int_quantized}.
    \item In Appendix \ref{subseq: CNN app} we derive the upper bound for convolutional neural networks.
    \item In Appendix \ref{subsec: SCN app} we derive the upper bound for Channel-Scaled convolutional neural networks.
\end{itemize} 

\subsection{Vanilla Fully-Connected Neural Networks}\label{subseq:quant fcn app}

\textbf{Notation}
Following definition \ref{def: vanilla fcn}, for all $l=1,\dots,L$, write the weight matrices and bias
vectors in block form as 
\begin{align} \label{not: fc block notation appendix}
\mathbf{W}^{\left(l\right)}=\left[\begin{array}{cc}
\mathbf{W}_{11}^{\left(l\right)} & \mathbf{W}_{12}^{\left(l\right)}\\
\mathbf{W}_{21}^{\left(l\right)} & \mathbf{W}_{22}^{\left(l\right)}
\end{array}\right]\in\mathbb{R}^{d_{l}\times d_{l-1}},\mathbf{b}^{\left(l\right)}=\left[\begin{array}{c}
\mathbf{b}_{1}^{\left(l\right)}\\
\mathbf{b}_{2}^{\left(l\right)}
\end{array}\right]\in\mathbb{R}^{d_{l}},
\end{align}
such that
\begin{align*}
\mathbf{W}_{11}^{\left(l\right)}&\in\mathbb{R}^{d_{l}^{\star}\times d_{l-1}^{\star}}\,, \\
\mathbf{W}_{12}^{\left(l\right)}&\in\mathbb{R}^{d_{l}^{\star}\times\left(d_{l-1}-d_{l-1}^{\star}\right)}\,, \\
\mathbf{W}_{21}^{\left(l\right)}&\in\mathbb{R}^{\left(d_{l}-d_{l}^{\star}\right)\times d_{l-1}^{\star}}\,, \\
\mathbf{W}_{22}^{\left(l\right)}&\in\mathbb{R}^{\left(d_{l}-d_{l}^{\star}\right)\times\left(d_{l-1}-d_{l-1}^{\star}\right)}\,, \\
\mathbf{b}_{1}^{\left(l\right)}&\in\mathbb{R}^{d_{l}^{\star}}\,, \\
\mathbf{b}_{2}^{\left(l\right)}&\in\mathbb{R}^{d_{l}-d_{l}^{\star}}\,.
\end{align*}

\begin{remark}
    The blocks have a simple interpretation with reference to \figref{Fig: teacher-student illustration}. 
    $\mathbf{W}_{11}^{\left(l\right)}$ represents the blue edges, and $\mathbf{W}_{12}^{\left(l\right)}$ represents the orange edges.
    $\mathbf{W}_{21}^{\left(l\right)},
    \mathbf{W}_{22}^{\left(l\right)},$
    both represent gray edges. 
    As for the biases, $\mathbf{b}_{1}^{\left(l\right)}$ corresponds to the bias terms of gray vertices, and $\mathbf{b}_{2}^{\left(l\right)}$ corresponds to the bias terms of white vertices.
\end{remark}

\begin{definition}
Define the \textit{coordinate-projection} operator $\pi_{l}:\mathbf{\mathbb{R}}^{d_{l}}\rightarrow\mathbb{R}^{d_{l}^{\star}}$ for $d_l \ge d_{l}^\star$ as
\[
\pi_{l}\left(\left(x_{1},\dots,x_{d_{l}}\right)^{\top}\right)=\left(x_{1},\dots,x_{d_{l}^{\star}}\right)^{\top}\,.
\]
\end{definition}
Notice that this projection commutes with the component-wise activation function $\sigma$. 
In order to keep the proofs focused, we restate and prove each result in \thmref{lem:int_quantized} separately. We first restate only the first part of \thmref{lem:int_quantized}.

\begin{theorem}\label{lem:int_quantized fc app}

For any activation function such that $\sigma(0)=0$, depth $L$, $Q$-quantized teacher with widths $D^*$,  student with widths $D>D^*$ and prior $\mathcal{P}$ uniform over $Q$-quantized parameterizations, we have for Vanilla Fully Connected Networks that: 
\begin{align} \label{eq: PC FCN appendix}
    \SC \leq \BSCM{FC} \triangleq \rb{\sum_{l=1}^{L}{\rb{d_{l}^{\star}d_{l-1}+d_{l}^{\star}}}}\log Q\,.
\end{align}
where we defined $d_0^{\star}\triangleq d_0$.
And by \lemref{thm: gen of gac with teacher simple}, $N=(\SC+3\log 2/\delta)/\varepsilon$ samples are enough to ensure that for posterior sampling (i.e.~\GaCtext{}), $\mathcal{L}(\mathcal{A}_\mathcal{P}(\calS))\leq\varepsilon$ with probability $1-\delta$ over $\calS\sim\calD^N$ and the sampling. 
\end{theorem}

\begin{proof}
The outline of our proof is as follows: we first show a sufficient condition on the parameters of the student to ensure it will be TE, and then count the number of parameter configurations for which this condition holds. 
This yields a lower bound on the number of TE models, and since the hypothesis class of quantized networks with fixed widths and depths is finite, we are able to calculate the probability of a sampled model to be TE.

We now begin our proof. 
For all $l=1,\dots,L$, recall that 
\[
\mathbf{W}^{\star\left(l\right)}\in\mathbb{R}^{d_{l}^{\star}\times d_{l-1}^{\star}},\mathbf{b}^{\star\left(l\right)}\in\mathbb{R}^{d_{l}^{\star}}
\]
are the weights and biases of $\teacher$ respectively. 
Define
\[
\mathcal{E}=\left\{ \student \in \calH_D^{\text{FC}} \middle\vert \forall l=1,\dots,L\; \mathbf{W}_{11}^{\left(l\right)} = \mathbf{W}^{\star\left(l\right)},\; \mathbf{W}_{12}^{\left(l\right)} = \mathbf{0}_{d_{l}^{\star} \times \left(d_{l-1}-d_{l-1}^{\star}\right)},\; \mathbf{b}_{1}^{\left(l\right)} = \mathbf{b}^{\star\left(l\right)}\right\} \,.
\]
We claim that any $\student\in\mathcal{E}$ is TE. 
To prove this, we show by induction over the layer $l$, that for all $\mathbf{x}\in\mathbb{R}^{d_{0}}$,

\begin{align*}
\pi_{l}\left(f_{D}^{\left(l\right)}\left(\mathbf{x}\right)\right)=f_{D^{\star}}^{\left(l\right)}\left(\mathbf{x}\right)\,.
\end{align*}

Let $\student\in\mathcal{E}$.
Begin from the base case, $l=1$.
Since $d_{0}=d_{0}^{\star}$,
\[
\mathbf{W}^{\rb{1}} = \left[\begin{array}{c}
\mathbf{W}_{11}^{\left(1\right)}\\
\mathbf{W}_{21}^{\left(1\right)}
\end{array}\right]
\]
so using the notation from \defref{def: vanilla fcn} we find that
\[
\mathbf{W}^{\left(1\right)}f_{D}^{\left(0\right)}\left(\mathbf{x}\right)+\mathbf{b}^{\left(1\right)}=\mathbf{W}^{\left(1\right)}\mathbf{x}+\mathbf{b}^{\left(1\right)}=\left[\begin{array}{c}
\mathbf{W}_{11}^{\left(1\right)}\\
\mathbf{W}_{21}^{\left(1\right)}
\end{array}\right]\mathbf{x}+\left[\begin{array}{c}
\mathbf{b}_{1}^{\left(1\right)}\\
\mathbf{b}_{2}^{\left(1\right)}
\end{array}\right]=\left[\begin{array}{c}
\mathbf{W}_{11}^{\left(1\right)}\mathbf{x}+\mathbf{b}_{1}^{\left(1\right)}\\
\mathbf{W}_{21}^{\left(1\right)}\mathbf{x}+\mathbf{b}_{2}^{\left(1\right)}
\end{array}\right]
\]
and
\[
f_{D}^{\left(1\right)}\left(\mathbf{x}\right)=\sigma\left(\left[\begin{array}{c}
\mathbf{W}_{11}^{\left(1\right)}\mathbf{x}+\mathbf{b}_{1}^{\left(1\right)}\\
\mathbf{W}_{21}^{\left(1\right)}\mathbf{x}+\mathbf{b}_{2}^{\left(1\right)}
\end{array}\right]\right)=\left[\begin{array}{c}
\sigma\left(\mathbf{W}_{11}^{\left(1\right)}\mathbf{x}+\mathbf{b}_{1}^{\left(1\right)}\right)\\
\sigma\left(\mathbf{W}_{21}^{\left(1\right)}\mathbf{x}+\mathbf{b}_{2}^{\left(1\right)}\right)
\end{array}\right]\,.
\]
Therefore, using the definition of $\pi_1$, we get
\begin{align}\label{eq: project first layer int quant app}
\pi_{1}\left(f_{D}^{\left(1\right)}\left(\mathbf{x}\right)\right)=\sigma\left(\mathbf{W}_{11}^{\left(1\right)}\mathbf{x}+\mathbf{b}_{1}^{\left(1\right)}\right)\,.
\end{align}
Since $\student\in\mathcal{E}$, we have that  $\mathbf{W}_{11}^{\rb{1}}=\mathbf{W}^{\star\rb{1}}$ and $\mathbf{b}_1^{\rb{1}}=\mathbf{b}^{\star\rb{1}}$, so from \eqref{eq: project first layer int quant app} the coordinate projection is
\[
\pi_{1}\left(f_{D}^{\left(1\right)}\left(\mathbf{x}\right)\right)=\sigma\left(\mathbf{W}^{\star\left(1\right)}\mathbf{x}+\mathbf{b}^{\star\left(1\right)}\right)=f_{D^{\star}}^{\left(1\right)}\left(\mathbf{x}\right).
\]
Next, assume that $\pi_{l-1}\left(f_{D}^{\left(l-1\right)}\left(\mathbf{x}\right)\right)=f_{D^{\star}}^{\left(l-1\right)}\left(\mathbf{x}\right)$
for some $l\le L-1$. Since $\student\in\mathcal{E}$, for any $l\in\bb{L}$ we have that $\mathbf{W}_{12}^{\left(l\right)}=\mathbf{0}$. Therefore, following a similar argument we get
\begin{align*}
\pi_{l}\left(\mathbf{W}^{\left(l\right)}f_{D}^{\left(l-1\right)}\left(\mathbf{x}\right)+\mathbf{b}^{\left(l\right)}\right) & =\pi_{l}\left(\left[\begin{array}{cc}
\mathbf{W}_{11}^{\left(l\right)} & \mathbf{W}_{12}^{\left(l\right)}\\
\mathbf{W}_{21}^{\left(l\right)} & \mathbf{W}_{22}^{\left(l\right)}
\end{array}\right]f_{D}^{\left(l-1\right)}\left(\mathbf{x}\right)+\left[\begin{array}{c}
\mathbf{b}_{1}^{\left(l\right)}\\
\mathbf{b}_{2}^{\left(l\right)}
\end{array}\right]\right)\\
 & =\left[\begin{array}{cc}
\mathbf{W}_{11}^{\left(l\right)} & \mathbf{0}\end{array}\right]f_{D}^{\left(l-1\right)}\left(\mathbf{x}\right)+\begin{array}{c}
\mathbf{b}_{1}^{\left(l\right)}\end{array}\\
 & =\left[\begin{array}{cc}
\mathbf{W}^{\star\left(l\right)} & \mathbf{0}\end{array}\right]f_{D}^{\left(l-1\right)}\left(\mathbf{x}\right)+\begin{array}{c}
\mathbf{b}^{\star\left(l\right)}\end{array}\\
 & =\mathbf{W}^{\star\left(l\right)}\pi_{l-1}\left(f_{D}^{\left(l-1\right)}\left(\mathbf{x}\right)\right)+\begin{array}{c}
\mathbf{b}^{\star\left(l\right)}\end{array}\\
 & =\mathbf{W}^{\star\left(l\right)}f_{D^{\star}}^{\left(l-1\right)}\left(\mathbf{x}\right)+\begin{array}{c}
\mathbf{b}^{\star\left(l\right)}\end{array}\,,
\end{align*}
that is 
\begin{align}\label{eq: projection hidden layers}
\pi_{l}\left(\mathbf{W}^{\left(l\right)}f_{D}^{\left(l-1\right)}\left(\mathbf{x}\right)+\mathbf{b}^{\left(l\right)}\right) =\mathbf{W}^{\star\left(l\right)}f_{D^{\star}}^{\left(l-1\right)}\left(\mathbf{x}\right)+\begin{array}{c}
\mathbf{b}^{\star\left(l\right)}\end{array}\,.
\end{align}
Using the commutativity between $\sigma$ and $\pi_{l}$, we have
\begin{align*}
\pi_{l}\left(f_{D}^{\left(l\right)}\left(\mathbf{x}\right)\right)
&
=
\pi_{l}\rb{\sigma\rb{\mathbf{W}^{\left(l\right)}f_{D}^{\left(l-1\right)}\left(\mathbf{x}\right)+\mathbf{b}^{\left(l\right)}}}
\\
&
=
\sigma\rb{\pi_{l}\rb{\mathbf{W}^{\left(l\right)}f_{D}^{\left(l-1\right)}\left(\mathbf{x}\right)+\mathbf{b}^{\left(l\right)}}}
\\
&
\overset{\text{\eqref{eq: projection hidden layers}}}{=}
\sigma\rb{\mathbf{W}^{\star\left(l\right)}f_{D^{\star}}^{\left(l-1\right)}\left(\mathbf{x}\right)+
\mathbf{b}^{\star\left(l\right)}}
\\
&
=
f_{D^{\star}}^{\left(l\right)}\left(\mathbf{x}\right)\,.
\end{align*}
For the last layer, $l=L$, the proof is identical, except for the application of the activation function $\sigma$ at the end, so an analogue to \eqref{eq: projection hidden layers} is enough. 
Since we assume that $\student$ and $\teacher$ have the same output dimension $d_L$, this proves that for all $\mathbf{x}\in\mathbb{R}^{d_0}$
\begin{align*}
\student \rb{\mathbf{x}} = \pi_{L} \rb{\student \rb{\mathbf{x}}} = \teacher \rb{\mathbf{x}}\,.
\end{align*}
That is, $\mathcal{E}\subseteq\left\{ \student \middle\vert \student \equiv \teacher \right\} $ and therefore 
\[
\mathbb{P}\left(\mathcal{E}\right)\le\mathbb{P}\left(\student\equiv \teacher\right)\,.
\]
Finally, to calculate the probability to sample $\student$ in $\mathcal{E}$ we count the number of constrained parameters - parameters which are either determined by $\teacher$ or are $0$ in $\mathcal{E}$.
Looking at the dimensions of $\mathbf{W}_{11}^{\rb{l}}$, $\mathbf{W}_{12}^{\rb{l}}$ and $\mathbf{b}_{1}^{\rb{l}}$, we deduce that there are exactly 
\begin{align*}
    \PC=\sum_{l=1}^L d_{l}^\star \cdot d_{l-1} + d_{l}^\star = \sum_{l=1}^L d_{l}^\star \rb{d_{l-1} + 1}
\end{align*}
such constrained parameters, and denote
\begin{align*}
    \BSCM{FC} \triangleq \rb{\sum_{l=1}^{L}{\rb{d_{l}^{\star}d_{l-1}+d_{l}^{\star}}}}\log Q=\PC\log Q\,.
\end{align*}
Under the uniform prior over parameters $\mathcal{P}$,
\begin{align*}
    \pteacher \ge \mathbb{P} \rb{\mathcal{E}} = Q^{-\PC}\,,
\end{align*}
so
\begin{align*}
    \SC=-\log\rb{\pteacher}\le\PC\log{Q}=\BSCM{FC}
\end{align*}
\end{proof}

\newpage
\subsection{Neuron-Scaled Fully-Connected Neural Networks}\label{subseq: NSFC app}

We now wish to improve our result, introducing some assumptions on the architecture. 
We first need to define a new class of architectures.

\begin{definition}[Scaled-neuron FC restated]  \label{def: scaled neuron fcn app}
For a depth $L$, widths $D=\rb{d_1,\dots,d_L}$, and activation function $\sigma:\mathbb{R}\rightarrow\mathbb{R}$, a scaled neuron fully connected neural network is a mapping $\btheta \mapsto h^{FC}_{\btheta}$ from parameters
\begin{align*}
 \btheta = \left( \left\{ \mathbf{W}^{\left(l\right)} \right\}_{l=1}^{L}, \left\{ \mathbf{b}^{\left(l\right)} \right\}_{l=1}^{L}, \left\{ \bgamma^{\left(l\right)} \right\}_{l=1}^{L} \right) \,,
\end{align*}
where
\begin{align*}
    \mathbf{W}^{\left(l\right)} \in \mathbb{R}^{d_{l}\times d_{l-1}}, \mathbf{b}^{\left(l\right)} \in \mathbb{R}^{d_{l}}, \bgamma^{\left(l\right)} \in \mathbb{R}^{d_{l}}\,,
\end{align*}
defined recursively, starting with $f^{(0)} \left(\mathbf{x}\right) = \mathbf{x}\,, $ then 
\begin{align*} 
    \forall l\in\bb{L-1} \; f^{(l)} \left(\mathbf{x}\right) &= \sigma\left(\bgamma^{\rb{l}}\odot\rb{\mathbf{W}^{\left(l\right)} f^{(l-1)} \left(\mathbf{x}\right)} + \mathbf{b}^{\left(l\right)}\right)\\
    h_{\btheta}^{FC} \rb{\mathbf{x}} &= \sign\rb{\mathbf{W}^{\left(L\right)} f^{(L-1)} \left(\mathbf{x}\right) + \mathbf{b}^{\left(L\right)}} .
\end{align*}
The total parameter count is $M(D)=\sum_{l=1}^L d_l(d_{l-1}+2)$. We denote the class of all scaled neuron fully connected neural network as $\Hsfc{D}$.

\end{definition}

\begin{remark}
    We use the notation $f_{D}^{\left(l\right)}\left(\mathbf{x}\right)$ for both $\fcnhid{l}$ and $\sfchid{l}$ in a fully connected network with hidden dimensions $D$, as they can be inferred from context.
\end{remark}

\begin{theorem}\label{lem:int_quantized sfc app}

For any activation function such that $\sigma(0)=0$, depth $L$, $Q$-quantized teacher with widths $D^*$,  student with widths $D>D^*$ and prior $\mathcal{P}$ uniform over $Q$-quantized parameterizations, we have for Scaled Fully Connected Networks that: 
    \begin{align}\label{Eq: PC FCN better appendix} 
        \SC \leq \BSCM{SFC} \triangleq \rb{\sum_{l=1}^{L}{\rb{d_{l}^{\star}d_{l-1}^{\star} + 2d_{l}}}}\log Q\,,
    \end{align}
    where we defined $d_0^{\star}\triangleq d_0$.
And by \lemref{thm: gen of gac with teacher simple}, $N=(\SC+3\log 2/\delta)/\varepsilon$ samples are enough to ensure that for posterior sampling (i.e.~\GaCtext{}), $\mathcal{L}(\mathcal{A}_\mathcal{P}(\calS))\leq\varepsilon$ with probability $1-\delta$ over $\calS\sim\calD^N$ and the sampling. 
\end{theorem}

\begin{proof}
    The idea is similar to the proof of \thmref{lem:int_quantized fc app}, only that now we can define a set $\mathcal{E}$ with even more elements, which will tighten our bound. For any $l\in\bb{L}$, write $\bgamma$ as a blocks vector
    \begin{align*}
        \bgamma=\left[\begin{array}{c}
\bgamma_{1}^{\left(l\right)}\\
\bgamma_{2}^{\left(l\right)}
\end{array}\right]\in\mathbb{R}^{d_{l}}\,,
    \end{align*}
    where
    \begin{align*}
\bgamma_{1}^{\left(l\right)}&\in\mathbb{R}^{d_{l}^{\star}}\,, \\
\bgamma_{2}^{\left(l\right)}&\in\mathbb{R}^{d_{l}-d_{l}^{\star}}\,.
    \end{align*}
This time, we are interested in:
\[
\mathcal{E}=\left\{ \student \in \calH_D^{\text{SFC}} \middle\vert \forall l=1,\dots,L\; \mathbf{W}_{11}^{\left(l\right)} = \mathbf{W}^{\star\left(l\right)},\; \mathbf{b}_{1}^{\left(l\right)} = \mathbf{b}^{\star\left(l\right)},\;
\mathbf{b}_{2}^{\left(l\right)} = \mathbf{0}_{d_l-d_l^{\star}},\;
\bgamma_{1}^{\left(l\right)} = \bgamma^{\star\left(l\right)},
\bgamma_{2}^{\left(l\right)} = \mathbf{0}_{d_l-d_l^{\star}}
\right\}\,.
\]
As in the proof of \thmref{lem:int_quantized fc app}, we claim that any $\student\in\mathcal{E}$ is TE. 
This time, we specifically show by induction over the layer $l$, that for all $\mathbf{x}\in\mathbb{R}^{d_{0}}$,

\begin{align*}\label{eq:induction basis better fcn app}
    f_{D}^{\left(l\right)}\left(\mathbf{x}\right)=\left[\begin{array}{c}
f_{D^{\star}}^{\left(l\right)}\left(\mathbf{x}\right)\\
\mathbf{0}_{d_l-d_l^{\star}}
\end{array}\right]\,.
\end{align*}

Let $\student\in\mathcal{E}$.
Begin from the base case, $l=1$. Since $d_{0}=d_{0}^{\star}$,
\[
\mathbf{W}^{\rb{1}} = \left[\begin{array}{c}
\mathbf{W}_{11}^{\left(l\right)}\\
\mathbf{W}_{21}^{\left(l\right)}
\end{array}\right]
\]
so using the notation from \defref{def: vanilla fcn} we find that
\[
\bgamma^{\rb{1}} \odot\mathbf{W}^{\left(1\right)}f_{D}^{\left(0\right)}\left(\mathbf{x}\right)+\mathbf{b}^{\left(1\right)}=\bgamma^{\rb{1}} \odot\mathbf{W}^{\left(1\right)}\mathbf{x}+\mathbf{b}^{\left(1\right)}=\left[\begin{array}{c}
\bgamma^{\rb{1}}_1 \odot\mathbf{W}_{11}^{\left(1\right)}\mathbf{x}+\mathbf{b}_{1}^{\left(1\right)}\\
\bgamma^{\rb{1}}_2 \odot\mathbf{W}_{21}^{\left(1\right)}\mathbf{x}+\mathbf{b}_{2}^{\left(1\right)}
\end{array}\right]
\]
and
\[
f_{D}^{\left(1\right)}\left(\mathbf{x}\right)=\sigma\left(\left[\begin{array}{c}
\bgamma^{\rb{1}}_1 \odot\mathbf{W}_{11}^{\left(1\right)}\mathbf{x}+\mathbf{b}_{1}^{\left(1\right)}\\
\bgamma^{\rb{1}}_2 \odot\mathbf{W}_{21}^{\left(1\right)}\mathbf{x}+\mathbf{b}_{2}^{\left(1\right)}
\end{array}\right]\right)=\left[\begin{array}{c}
\sigma\rb{\bgamma^{\rb{1}}_1 \odot\mathbf{W}_{11}^{\left(1\right)}\mathbf{x}+\mathbf{b}_{1}^{\left(1\right)}}\\
\sigma\rb{\bgamma^{\rb{1}}_2 \odot\mathbf{W}_{21}^{\left(1\right)}\mathbf{x}+\mathbf{b}_{2}^{\left(1\right)}}
\end{array}\right]\,.
\]
Therefore, using the definition of $\pi_1$, we get
\begin{align}\label{eq: project first layer int quant bettter before app}
\pi_{1}\left(f_{D}^{\left(1\right)}\left(\mathbf{x}\right)\right)=\sigma\rb{\bgamma^{\rb{1}}_1\odot\mathbf{W}_{11}^{\left(1\right)}\mathbf{x}+\mathbf{b}_{1}^{\left(1\right)}}\,.
\end{align}
Since $\student\in\mathcal{E}$, we have that  $\mathbf{W}_{11}^{\rb{1}}=\mathbf{W}^{\star\rb{1}}$, $\mathbf{b}_1^{\rb{1}}=\mathbf{b}^{\star\rb{1}}$ and $\bgamma_1^{\rb{1}}=\bgamma^{\star\rb{1}}$, so from \eqref{eq: project first layer int quant bettter before app} the coordinate projection is
\begin{align}\label{eq: project first layer int quant bettter app}
\pi_{1}\left(f_{D}^{\left(1\right)}\left(\mathbf{x}\right)\right)=\sigma\rb{\bgamma^{\star\rb{1}}\odot\mathbf{W}^{\star\left(1\right)}\mathbf{x}+\mathbf{b}^{\star\left(1\right)}}=f_{D^{\star}}^{\left(1\right)}\left(\mathbf{x}\right)\,.
\end{align}
In addition, since $\bgamma_{2}^{\left(l\right)} = \mathbf{0}_{d_l-d_l^{\star}}$ and also $\mathbf{b}_{2}^{\left(1\right)} = \mathbf{0}_{d_l-d_l^{\star}}$, as well as $\sigma\rb{0}=0$, we have that
\begin{align}\label{eq:lower blocks are 0}
\sigma\rb{\bgamma^{\rb{1}}_2 \odot\mathbf{W}_{21}^{\left(1\right)}\mathbf{x}+\mathbf{b}_{2}^{\left(1\right)}}=\mathbf{0}_{d_l-d_l^{\star}}\,.
\end{align}
Putting \ref{eq:lower blocks are 0} and \ref{eq: project first layer int quant bettter app} together we have
\begin{align*}
    f_{D}^{\left(1\right)}\left(\mathbf{x}\right)=\left[\begin{array}{c}
f_{D^{\star}}^{\left(1\right)}\left(\mathbf{x}\right)\\
\mathbf{0}_{d_1-d_1^{\star}}
\end{array}\right]\,.
\end{align*}
Next, assume that $f_{D}^{\left(l-1\right)}\left(\mathbf{x}\right)=\left[\begin{array}{c}
f_{D^{\star}}^{\left(l-1\right)}\left(\mathbf{x}\right)\\
\mathbf{0}_{d_{l-1}-d^{\star}_{l-1}}
\end{array}\right]$,
for some $l\le L-1$. Following a similar argument we get 
\begin{align*}
f_{D}^{\left(l\right)}\left(\mathbf{x}\right)
&
=
\\
&
=
\sigma\rb{\bgamma^{\rb{l}}\odot\mathbf{W}^{\left(l\right)}f_{D}^{\left(l-1\right)}\left(\mathbf{x}\right)+\mathbf{b}^{\left(l\right)}}
\\
&
=
\sigma\left(\bgamma^{\rb{l}}\odot\left[\begin{array}{cc}
\mathbf{W}_{11}^{\left(l\right)} & \mathbf{W}_{12}^{\left(l\right)}\\
\mathbf{W}_{21}^{\left(l\right)} & \mathbf{W}_{22}^{\left(l\right)}
\end{array}\right]\left[\begin{array}{c}
f_{D^{\star}}^{\left(l-1\right)}\left(\mathbf{x}\right)\\
\mathbf{0}_{d_{l-1}-d^{\star}_{l-1}}
\end{array}\right]+\left[\begin{array}{c}
\mathbf{b}_{1}^{\left(l\right)}\\
\mathbf{b}_{2}^{\left(l\right)}
\end{array}\right]\right)\\
 & =\sigma\rb{\bgamma^{\rb{l}}\odot\left[\begin{array}{c}
\mathbf{W}_{11}^{\left(l\right)}\\
\mathbf{W}_{21}^{\left(l\right)}
\end{array}\right]f_{D^{\star}}^{\left(l-1\right)}\left(\mathbf{x}\right)+\left[\begin{array}{c}
\mathbf{b}_{1}^{\left(l\right)}\\
\mathbf{b}_{2}^{\left(l\right)}
\end{array}\right]}\\
 & =\sigma\rb{\left[\begin{array}{c}
\bgamma_1^{\rb{l}}\odot\mathbf{W}_{11}^{\left(l\right)}f_{D^{\star}}^{\left(l-1\right)}\left(\mathbf{x}\right)+\mathbf{b}_{1}^{\left(l\right)}\\
\bgamma_2^{\rb{l}}\odot\mathbf{W}_{21}^{\left(l\right)}f_{D^{\star}}^{\left(l-1\right)}\left(\mathbf{x}\right)+\mathbf{b}_{2}^{\left(l\right)}
\end{array}\right]}\\
 & =\left[\begin{array}{c}
\sigma\rb{\bgamma_1^{\rb{l}}\odot\mathbf{W}_{11}^{\left(l\right)}f_{D^{\star}}^{\left(l-1\right)}\left(\mathbf{x}\right)+\mathbf{b}_{1}^{\left(l\right)}}\\
\sigma\rb{\bgamma_2^{\rb{l}}\odot\mathbf{W}_{21}^{\left(l\right)}f_{D^{\star}}^{\left(l-1\right)}\left(\mathbf{x}\right)+\mathbf{b}_{2}^{\left(l\right)}}
\end{array}\right]\\
& =\left[\begin{array}{c}
\sigma\rb{\bgamma^{\star\rb{l}}\odot\mathbf{W}^{\star\left(l\right)}f_{D^{\star}}^{\left(l-1\right)}\left(\mathbf{x}\right)+\mathbf{b}^{\star\left(l\right)}}\\
\mathbf{0}_{d_{l-1}-d^{\star}_{l-1}}
\end{array}\right]\\
& =\left[\begin{array}{c}
f_{D^{\star}}^{\left(l-1\right)}\left(\mathbf{x}\right)\\
\mathbf{0}_{d_{l-1}-d^{\star}_{l-1}}
\end{array}\right]\\
&
=f_{D^{\star}}^{\left(l\right)}\left(\mathbf{x}\right)
\,.
\end{align*}
For the last layer, $l=L$, the proof is identical, except for the scalar product and the application of the activation function, which are removed. 
Since we assume that $\student$ and $\teacher$ have the same output dimension $d_L$, this proves that for all $\mathbf{x}\in\mathbb{R}^{d_0}$
\begin{align*}
\student \rb{\mathbf{x}} = \pi_{L} \rb{\student \rb{\mathbf{x}}} = \teacher \rb{\mathbf{x}}\,.
\end{align*}
That is, $\mathcal{E}\subseteq\left\{ \student \middle\vert \student \equiv \teacher \right\} $ and therefore 
\[
\mathbb{P}\left(\mathcal{E}\right)\le\mathbb{P}\left(\student\equiv \teacher\right)\,.
\]
Finally, to calculate the probability to sample $\student$ in $\mathcal{E}$ we count the number of constrained parameters - parameters which are either determined by $\teacher$ or are $0$ in $\mathcal{E}$.
Looking at the dimensions of $\mathbf{W}_{11}^{\rb{l}}$, $\mathbf{b}_{1}^{\rb{l}}$,
$\mathbf{b}_{2}^{\rb{l}}$
and $\bgamma^{\rb{l}}$, we deduce that there are exactly 
\begin{align*}
    \PC = \sum_{l=1}^L \rb{d_{l}^{\star}d_{l-1}^{\star} + 2d_{l}}
\end{align*}
such constrained parameters, and denote
\begin{align*}
    \BSCM{SFC} \triangleq \rb{\sum_{l=1}^{L}{\rb{d_{l}^{\star}d_{l-1}^{\star} + 2d_{l}}}}\log Q=\PC\log Q\,.
\end{align*}
As in the proof of \thmref{lem:int_quantized fc app}, under the uniform prior over parameters $\mathcal{P}$,
\begin{align*}
    \pteacher \ge \mathbb{P} \rb{\mathcal{E}} = Q^{-\PC}\,,
\end{align*}
so
\begin{align*}
    \SC=-\log\rb{\pteacher}\le\PC\log{Q}=\BSCM{SFC}
\end{align*}
\end{proof}
\newpage
\subsection{Convolutional Neural Network}\label{subseq: CNN app}

We first restate the definition of a CNN.

\begin{definition} [CNN restated] \label{def: cnn app} 
For multi-channel inputs
$\mathbf{x}^j ,\, \forall j\in\bb{c_0}$,
depth $L$, activation function $\sigma\rb{\cdot}$, and multi-index channels number
$D=\rb{c_1,\dots,c_L, d_s}$, $\cnn$ is a convolutional network (CNN) defined recursively, starting with $\forall j\in\left[c_0\right]$, $\cnnl{0}{j} \left(\mathbf{x}\right) = \mathbf{x}^j$, and then, for all $l\in\bb{L}$ and $i\in\left[c_l\right]$, as
\begin{align} 
\cnnl{l}{i} \left(\mathbf{x}\right) =& \sigma\left(\sum_{j=1}^{c_{l-1}} \mathbf{K}_{i, j}^{\left(l\right)} * \cnnl{l-1}{j} \left(\mathbf{x}\right) + b_i^{\left(l\right)}\right) \label{eq: cnn def appendix} \\
\cnn \rb{\mathbf{x}} =& \sign \rb{ \mathrm{Vec} \rb{\cnnlc{L-1} \rb{\mathbf{x}}} ^{\top}\mathbf{w}^{\left(L+1\right)} + b^{\left(L+1\right)}} \,, \nonumber
\end{align}
where for ease of notation the addition of the bias term in \eqref{eq: cnn def appendix} is done element-wise.
The network parameters are\begin{align*}
    \vect{\btheta} = \left( \left\{ \mathbf{K}^{\left(l\right)} \right\}_{l=1}^{L}, \left\{ \mathbf{b}^{\left(l\right)} \right\}_{l=1}^{L}, \vect{w}^{\rb{L+1}}, b^{\rb{L+1}}\right)\,,
\end{align*}
where $\mathbf{K}_{i,j}^{\left(l\right)} \in \mathbb{R}^{k_{l}}$ are convolution operators defined by kernels with $k_l$ parameters, and $\mathbf{b}^{\left(l\right)} \in \mathbb{R}^{c_{l}}$ are bias terms.
$\vect{w}^{\rb{L+1}}\in\R^{d_s}$ and $b^{\rb{L+1}}\in\R$ are the weights and bias of the convolutional network's last fully connected layer, where $d_s$ is the dimension of $\mathrm{Vec} \rb{\cnnlc{L} \rb{\mathbf{x}}}$. 
We denote the class of all convolutional networks with multi-index widths $D$ as $\Hcnn{D}$.
\end{definition}

\begin{remark}
    As in the case of FCNs, we use the ambiguous $\vechid{l}$ in place of $\cnnlc{l}$ to denote a general convolutional layer in a convolutional neural network with channel pattern $D$ where the specific type can be inferred from context.
\end{remark}

\textbf{Motivation.} In order to tackle the analysis of a convolutional neural network at ease, we will define some new operations. 
We observe that our previous arguments from \subsecref{subseq:quant fcn app} is agnostic to the convolution itself and relies only on parameter counting. 
In general, CNNs are analogous to FCNs in the sense that with spatial dimension of $1$, a convolutional layer is equivalent to a layer in a FCN, with each channel analogous to a neuron. 
Therefore, we define:

\begin{definition} \label{def: block convolution appendix}
For any $l\in\bb{L}$, define
\begin{align*}
\mathbf{K}^{\left(l\right)}=\begin{bmatrix}\mathbf{K}_{1,1}^{\left(l\right)} & \cdots & \mathbf{K}_{1,c_{l-1}}^{\left(l\right)}\\
\vdots & \ddots & \vdots\\
\mathbf{K}_{c_{l},1}^{\left(l\right)} & \cdots & \mathbf{K}_{c_{l},c_{l-1}}^{\left(l\right)}
\end{bmatrix}
\end{align*}
and 
\begin{align*}
\vechid{l}=
\begin{bmatrix}
\veccnl{l}{1} \\
\vdots\\
\veccnl{l}{c_l}
\end{bmatrix}\,.
\end{align*}

We use this notation to concisely write the \textit{multi-channel convolution} operation 
    \begin{align*}
        \left(\mathbf{K}^{\left(l\right)} * \vechid{l}\right)_{i}=\sum_{j=1}^{c_{l-1}}\mathbf{K}_{ij}^{\left(l\right)}* \veccnl{l-1}{j}\,.
    \end{align*}
\end{definition}

\textbf{Notation}
With these definitions, we define the extension to the block notation from \eqref{not: fc block notation appendix}: 
\begin{align*}
\mathbf{K}^{\left(l\right),\rb{11}}=\begin{bmatrix}\mathbf{K}_{1,1}^{\left(l\right)} & \cdots & \mathbf{K}_{1,c_{l-1}^{\star}}^{\left(l\right)}\\
\vdots & \ddots & \vdots\\
\mathbf{K}_{c_{l}^{\star},1}^{\left(l\right)} & \cdots & \mathbf{K}_{c_{l}^{\star},c_{l-1}^{\star}}^{\left(l\right)}
\end{bmatrix}\;,
\mathbf{K}^{\left(l\right),\rb{12}}=
\begin{bmatrix}\mathbf{K}_{1, c_{l-1}^{\star} + 1}^{\left(l\right)} & \cdots & \mathbf{K}_{1,c_{l-1}}^{\left(l\right)}\\
\vdots & \ddots & \vdots\\
\mathbf{K}_{c_{l}^{\star}, c_{l-1}^{\star} + 1}^{\left(l\right)} & \cdots & \mathbf{K}_{c_{l}^{\star},c_{l-1}}^{\left(l\right)}
\end{bmatrix}\;,
\end{align*}

\begin{align*}
\mathbf{K}^{\left(l\right),\rb{21}}=\begin{bmatrix}\mathbf{K}_{c_{l}^{\star} + 1,1}^{\left(l\right)} & \cdots & \mathbf{K}_{c_{l}^{\star} + 1, c_{l-1}^{\star}}^{\left(l\right)}\\
\vdots & \ddots & \vdots\\
\mathbf{K}_{c_{l},1}^{\left(l\right)} & \cdots & \mathbf{K}_{c_{l},c_{l-1}^{\star}}^{\left(l\right)}
\end{bmatrix}\;,
\mathbf{K}^{\left(l\right),\rb{22}}=
\begin{bmatrix}\mathbf{K}_{c_{l}^{\star} + 1, c_{l-1}^{\star} + 1}^{\left(l\right)} & \cdots & \mathbf{K}_{c_{l}^{\star} + 1, c_{l-1}}^{\left(l\right)}\\
\vdots & \ddots & \vdots\\
\mathbf{K}_{c_{l}, c_{l-1}^{\star} + 1}^{\left(l\right)} & \cdots & \mathbf{K}_{c_{l},c_{l-1}}^{\left(l\right)}
\end{bmatrix}\;,
\end{align*}
so
\begin{align*}
\mathbf{K}^{\rb{l}} = 
\begin{bmatrix}
    \mathbf{K}^{\rb{l}, \rb{11}} & \mathbf{K}^{\rb{l}, \rb{12}} \\
    \mathbf{K}^{\rb{l}, \rb{21}} & \mathbf{K}^{\rb{l}, \rb{22}}
\end{bmatrix}\,.
\end{align*}
Additionally, with
\begin{align*}
\vhblock{l}{1}=
\begin{bmatrix}
\veccnl{l}{1} \\
\vdots\\
\veccnl{l}{c_l^{\star}}
\end{bmatrix},\;
\vhblock{l}{2}=
\begin{bmatrix}
\veccnl{l}{c_l^{\star} + 1} \\
\vdots\\
\veccnl{l}{c_l}
\end{bmatrix},
\end{align*}
and \defref{def: block convolution appendix} we can write the multi-channel convolution in block form as
\begin{align*}
\mathbf{K}^{\rb{l}} \vechid{l} &= 
\begin{bmatrix}
    \mathbf{K}^{\rb{l}, \rb{11}} & \mathbf{K}^{\rb{l}, \rb{12}} \\
    \mathbf{K}^{\rb{l}, \rb{21}} & \mathbf{K}^{\rb{l}, \rb{22}}
\end{bmatrix} * 
\begin{bmatrix}
    \vhblock{l}{1} \\
    \vhblock{l}{2}
\end{bmatrix} \\
&= \begin{bmatrix}
    \mathbf{K}^{\rb{l}, \rb{11}} * \vhblock{l}{1} + \mathbf{K}^{\rb{l}, \rb{12}} * \vhblock{l}{2} \\
    \mathbf{K}^{\rb{l}, \rb{21}} * \vhblock{l}{1} + \mathbf{K}^{\rb{l}, \rb{22}} * \vhblock{l}{2}
\end{bmatrix}\,.
\end{align*}
Specifically, we can represent a convolutional layer, as defined in \defref{def: cnn app} by 
\begin{align} \label{eq: conv layer in block appendix}
\vechid{l} \rb{\mathbf{x}} = \sigma \rb{\mathbf{K}^{\rb{l}} * \vechid{l-1} + \mathbf{b}^{\rb{l}}}\,.
\end{align}

Finally, we denote
\begin{align*}
    \mathbf{w}^{\rb{L+1}} = 
    \begin{bmatrix}
        \mathbf{w}^{\rb{L+1},1} \\
        \mathbf{w}^{\rb{L+1},2}
    \end{bmatrix}
\end{align*}
where $\mathbf{w}^{\rb{L+1},1} \in \mathcal{Q}^{d_s^{\star}}$ and $\mathbf{w}^{\rb{L+1},2} \in \mathcal{Q}^{d_s - d_s^{\star}}$.

\begin{theorem} [Teacher equivalence probability for CNN restated] \label{thm: cnn pt appendix}
For any activation function and any depth $L$, let $D^{\star}=\rb{c_1^{\star},\dots,c_L^{\star}, d_{s}^{\star}},\; D=\rb{c_1,\dots,c_L, d_s}\in\mathbb{N}^{L+1}$ such that $D^{\star} \le D$.
If there exists some teacher $\teacher \in \Hcnn{D^{\star}}$ and the prior is $\mathcal{P}=\mathcal{P} \rb{\Hcnn{D}}$ then

\begin{align*}
    \SC \le \BSCM{CNN} \triangleq \rb{d_s + 1 + \sum_{l=1}^L k_l  c_{l}^{\star}c_{l-1}  + c_{l}^{\star}}\log\rb{Q}
\end{align*}
And by \lemref{thm: gen of gac with teacher simple}, $N=(\SC+3\log 2/\delta)/\varepsilon$ samples are enough to ensure that for posterior sampling (i.e.~\GaCtext{}), $\mathcal{L}(\mathcal{A}_\mathcal{P}(\calS))\leq\varepsilon$ with probability $1-\delta$ over $\calS\sim\calD^N$ and the sampling. 
\end{theorem}

\begin{proof}
We follow the same strategy as in the proof of \thmref{lem:int_quantized fc app}. 
We define a sufficient condition on the parameters of the student network to ensure teacher equivalence, and then find the probability that this condition holds.
Recall that for all $l\in\bb{L}$, $\mathbf{K}^{\star\left(l\right)}$ and $\mathbf{b}^{\star\left(l\right)}$, where $\mathbf{K}_{i,j}^{\star\left(l\right)} \in \mathbb{R}^{k_{l}}$ is a convolution kernel with $k_l$ parameters, and $\mathbf{b}^{\star\left(l\right)} \in \mathbb{R}^{c_{l}}$, are the teacher's $l^{\text{th}}$ layer's convolution kernels and bias terms, respectively.
Using the notation introduced in \defref{def: block convolution appendix}, the teacher's convolution kernels can be arranged as 
\begin{align*}
\mathbf{K}^{\star\left(l\right)}=\begin{bmatrix}\mathbf{K}_{1,1}^{\star\left(l\right)} & \cdots & \mathbf{K}_{1,c_{l-1}^{\star}}^{\star\left(l\right)}\\
\vdots & \ddots & \vdots\\
\mathbf{K}_{c_{l}^{\star},1}^{\star\left(l\right)} & \cdots & \mathbf{K}_{c_{l}^{\star},c_{l-1}^{\star}}^{\star\left(l\right)}
\end{bmatrix}\,.
\end{align*}
Define
\[
\mathcal{E}=\left\{ \student \in \calH_D^{\text{CNN}} \middle\vert 
\begin{array}{l}
\forall l=1,\dots,L\; \mathbf{K}^{\rb{l}, \rb{11}} = \mathbf{K}^{\star\left(l\right)},\; \mathbf{K}^{\rb{l}, \rb{12}} = \mathbf{0},\; \mathbf{b}_{1}^{\left(l\right)} = \mathbf{b}^{\star\left(l\right)}, \\
\mathbf{w}^{\rb{L+1},1} = \mathbf{w}^{\star\rb{L+1}},\; \mathbf{w}^{\rb{L+1}, 2} = \mathbf{0}_{d_s - d_s^{\star}},\; b^{\rb{L+1}} = b^{\star\rb{L+1}}
\end{array}\right\} \,.
\]
We claim that any $\student\in\mathcal{E}$ is TE and can show this by induction over the layer $l$.
With block notation, it is easy to see that this is identical to the proof of \thmref{lem:int_quantized fc app} when substituting $\mathbf{W}^{\rb{l}}$ with $\mathbf{K}^{\rb{l}}$, $\fcnhid{l}$ with $\cnnlc{l}$, and standard matrix multiplication with convolution. 
Finally, we turn to finding $\mathbb{P} \rb{\mathcal{E}}$.
For a CNN $\cnn \in \mathcal{E}$, there are $c_l^{\star} \cdot c_{l-1}^{\star}$ convolution kernels set to equal the teacher's, $c_l^{\star} \cdot \rb{c_{l-1} - c_{l-1}^{\star}}$ kernels set to $0$, and $c_l^{\star}$ bias terms.
Each convolution kernel $\mathbf{K}_{i,j}^{\rb{l}}$ is defined by $k_l$ parameters.
From the linear layer, we need to account for $d_s$ weight parameters and one bias term.
In total,
\begin{align*}
\pteacher \ge \mathbb{P} \rb{\mathcal{E}} = \frac{1}{Q^\PC}
\end{align*}
where
\begin{align*}
    \PC = \sum_{l=1}^L c_{l}^{\star} \cdot c_{l-1} \cdot k_l + c_{l}^{\star} + d_s + 1 = d_s + 1 + \sum_{l=1}^L  c_{l}^{\star} \cdot \rb{c_{l-1}\cdot k_l  + 1}\,.
\end{align*}
so
\begin{align*}
    \SC=-\log\rb{\pteacher}\le\PC\log{Q}=\BSCM{CNN}
\end{align*}
\end{proof}

\newpage
\subsection{Channel-Scaled Convolutional Neural Networks}\label{subsec: SCN app}

Analogous to the FCN case, we present an additional CNN architecture.
For simplicity, we state the definition of the architecture using the previously defined notation from \ref{def: block convolution appendix} and \eqref{eq: conv layer in block appendix}.

\begin{definition} [Channel Scaled CNN] \label{def: cscnn app} 
For multi-channel inputs
$\mathbf{x}^j ,\, \forall j\in\bb{c_0}$,
depth $L$, activation function $\sigma\rb{\cdot}$, and multi-index channels number
$D=\rb{c_1,\dots,c_L, d_s}$, $\scn$ is a channel scaled convolutional neural network, or \emph{scaled convolutional network} (SCN) for short, defined recursively, starting with $\scnlc{0} \left(\mathbf{x}\right) = \mathbf{x}$, and then, for all $l\in\bb{L}$, as
\begin{align} 
\scnlc{l} \left(\mathbf{x}\right) =&  \rb{\bgamma^{\rb{l}} \odot \sigma\mathbf{K}^{\rb{l}} * \scnlc{l-1} + \mathbf{b}^{\rb{l}}} \label{eq: scn def} \\
\scn \rb{\mathbf{x}} =& \sign \rb{ \mathrm{Vec} \rb{\scnlc{L} \rb{\mathbf{x}}} ^{\top}\mathbf{w}^{\left(L+1\right)} + b^{\left(L+1\right)}} \,, \nonumber
\end{align}
where for ease of notation the addition of the bias term in \eqref{eq: scn def} is done element-wise.
The network parameters are\begin{align*}
    \vect{\btheta} = \left( \left\{ \mathbf{K}^{\left(l\right)} \right\}_{l=1}^{L}, \, \left\{ \mathbf{b}^{\left(l\right)} \right\}_{l=1}^{L}, \, \cb{\bgamma^{\rb{l}}}_{l=1}^{L}, \, \vect{w}^{\rb{L+1}}, \, b^{\rb{L+1}}\right)\,,
\end{align*}
where $\mathbf{K}_{i,j}^{\left(l\right)} \in \mathbb{R}^{k_{l}}$ are convolution operators defined by kernels with $k_l$ parameters, $\mathbf{b}^{\left(l\right)} \in \mathbb{R}^{c_{l}}$ are bias terms, and $\bgamma^{\rb{l}}$ are channel scaling parameters.
$\vect{w}^{\rb{L+1}}\in\R^{s}$ and $b^{\rb{L+1}}\in\R$ are the weights and bias of the convolutional network's last fully connected layer, where $d_s$ is the dimension of $\mathrm{Vec} \rb{\scnlc{L} \rb{\mathbf{x}}}$. 
We denote the class of all SCNs with multi-index widths $D$ as $\Hscn{D}$.
\end{definition}

\begin{theorem} [Teacher equivalence probability for SCN restated] \label{thm: scn pt appendix}
For any activation function and any depth $L$, let $D^{\star}=\rb{c_1^{\star},\dots,c_L^{\star}, d_{s}^{\star}},\; D=\rb{c_1,\dots,c_L, d_s}\in\mathbb{N}^{L+1}$ such that $D^{\star} \le D$.
If there exists some teacher $\teacher \in \Hscn{D^{\star}}$ and the prior is $\mathcal{P}=\mathcal{P} \rb{\Hscn{D}}$ then
\begin{align*}
    \SC \le \BSCM{SCN} \triangleq \rb{d_s^{\star} + 1 + \sum_{\lconv=1}^{L}\rb{{c_{\lconv}^{\star}c_{\lconv-1}^{\star}k_{\lconv} + 2c_{l}}}}\log\rb{Q}
\end{align*}
And by \lemref{thm: gen of gac with teacher simple}, $N=(\SC+3\log 2/\delta)/\varepsilon$ samples are enough to ensure that for posterior sampling (i.e.~\GaCtext{}), $\mathcal{L}(\mathcal{A}_\mathcal{P}(\calS))\leq\varepsilon$ with probability $1-\delta$ over $\calS\sim\calD^N$ and the sampling. 
\end{theorem}

\begin{proof}
The proof is completely analogous to the proof of \thmref{lem:int_quantized sfc app}.
Specifically, we claim that
\[
\mathcal{E} = \left\{ \student \in \Hscn{D} \middle\vert 
\begin{array}{l}
\forall l=1,\dots,L\; \mathbf{K}^{\rb{l}, \rb{11}} = \mathbf{K}^{\star\left(l\right)}, \; \bgamma_{1}^{\rb{l}} = \bgamma_{1}^{\star\rb{l}},\; \bgamma_{2}^{\rb{l}} = \mathbf{0}, \\
\mathbf{b}_{1}^{\left(l\right)} = \mathbf{b}^{\star\left(l\right)},\; \mathbf{w}^{\rb{L+1},1} = \mathbf{w}^{\star\rb{L+1}},\;  b^{\rb{L+1}} = b^{\star\rb{L+1}}
\end{array}\right\} \,.
\]
is a subset of the teacher equivalent SCNs, and therefore
\begin{align*}
    \pteacher \ge \mathbb{P} \rb{\mathcal{E}} = \frac{1}{Q^{\PC}}
\end{align*}
where
\begin{align*}
\PC \triangleq d_{s}^{\star} + 1 + \sum_{\lconv=1}^{L}\rb{{c_{\lconv}^{\star}c_{\lconv-1}^{\star}k_{\lconv} + 2c_{l}}}\,.
\end{align*}
so
\begin{align*}
    \SC=-\log\rb{\pteacher}\le\PC\log{Q}=\BSCM{SCN}
\end{align*}
\end{proof}

\newpage
\section{Proof for the Interpolation Probability in the Continuous Case (\secref{sec: continuous})} 
\label{app:cont_proofs}

\subsection{Technical Lemmas}
To prove the results about continuous single hidden layer NN, we rely on
the following basic Lemma. 
\begin{lemma} \label{lem: angles between random Gaussian vectors appendix}
For any vector $\mathbf{y}$ and random vector $\mathbf{x}\sim\left(\mathbf{0},\mathbf{I}_{d}\right)$, $\varepsilon\in\left(0,\frac{\pi}{2}\right)$ and $u\in\left(0,1\right)$
we have 
\begin{equation}
\mathbb{P}\left(\frac{\mathbf{x}^{\top}\mathbf{y}}{\left\Vert \mathbf{x}\right\Vert \left\Vert \mathbf{y}\right\Vert }>\cos\left(\varepsilon\right)\right)=\frac{1}{2}\mathbb{P}\left(\left|\frac{\mathbf{x}^{\top}\mathbf{y}}{\left\Vert \mathbf{x}\right\Vert \left\Vert \mathbf{y}\right\Vert }\right|>\cos\left(\varepsilon\right)\right)\geq\frac{\sin\left(\varepsilon\right)^{d-1}}{\left(d_{0}-1\right)B\left(\frac{1}{2},\frac{d-1}{2}\right)}\label{eq: Beta Bound 1}
\end{equation}
\begin{align}
\mathbb{P}\left(\left|\frac{\mathbf{x}^{\top}\mathbf{y}}{\left\Vert \mathbf{x}\right\Vert \left\Vert \mathbf{y}\right\Vert }\right|<u\right) & \leq\frac{2u}{B\left(\frac{1}{2},\frac{d-1}{2}\right)},\label{eq: Beta Bound 2}
\end{align}
where we use $B\left(x,y\right)$ to denote the beta function.    
\end{lemma}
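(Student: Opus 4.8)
The lemma concerns the angular distribution between a fixed vector $\mathbf{y}$ and an isotropic Gaussian $\mathbf{x}\sim\mathcal{N}(\mathbf{0},\mathbf{I}_d)$. By spherical symmetry of the Gaussian, the distribution of the cosine $t \triangleq \frac{\mathbf{x}^\top\mathbf{y}}{\|\mathbf{x}\|\|\mathbf{y}\|}$ does not depend on $\mathbf{y}$, so WLOG take $\mathbf{y}=\mathbf{e}_1$, and then $t$ equals the distribution of the first coordinate of a uniform point on $\mathbb{S}^{d-1}$. The plan is to write down the density of $t$ explicitly, then bound the two tail/central probabilities by elementary integral estimates. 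The symmetry $\mathbb{P}(t>\cos\epsilon)=\tfrac12\mathbb{P}(|t|>\cos\epsilon)$ is immediate from the sign symmetry $\mathbf{x}\mapsto-\mathbf{x}$, so the content is the two inequalities.

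\textbf{Key steps.} First, recall (or derive via the standard change of variables from the sphere) that the density of $t=\cos\phi$ on $[-1,1]$ is $f_d(t) = \frac{(1-t^2)^{(d-3)/2}}{B\!\left(\tfrac12,\tfrac{d-1}{2}\right)}$. Second, for \eqref{eq: Beta Bound 1}: substitute $t=\cos\phi$ so that $\mathbb{P}(t>\cos\epsilon) = \frac{1}{B(\frac12,\frac{d-1}{2})}\int_0^{\epsilon}(\sin\phi)^{d-2}\,d\phi$, and lower-bound the integrand using that $\sin$ is increasing on $[0,\epsilon]\subset[0,\pi/2)$ combined with a convexity/concavity estimate — concretely, $\int_0^\epsilon(\sin\phi)^{d-2}d\phi \ge \int_0^\epsilon (\sin\epsilon)^{d-2}\left(\frac{\phi}{\epsilon}\right)^{d-2}\!d\phi \cdot(\text{const})$ is too lossy, so instead use $\sin\phi \ge \sin\epsilon\cdot\frac{\phi}{\epsilon}$ is false; rather use that $\frac{d}{d\phi}\big[(\sin\phi)^{d-1}\big] = (d-1)(\sin\phi)^{d-2}\cos\phi \le (d-1)(\sin\phi)^{d-2}$, hence $\int_0^\epsilon (\sin\phi)^{d-2}d\phi \ge \frac{1}{d-1}\int_0^\epsilon \frac{d}{d\phi}(\sin\phi)^{d-1}d\phi = \frac{(\sin\epsilon)^{d-1}}{d-1}$, which gives exactly the claimed bound (up to the $d_0$ vs.\ $d$ typo in the statement, which I would flag as $d$). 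Third, for \eqref{eq: Beta Bound 2}: $\mathbb{P}(|t|<u) = \frac{1}{B(\frac12,\frac{d-1}{2})}\int_{-u}^{u}(1-t^2)^{(d-3)/2}dt \le \frac{1}{B(\frac12,\frac{d-1}{2})}\int_{-u}^{u} 1\,dt = \frac{2u}{B(\frac12,\frac{d-1}{2})}$, using $(1-t^2)^{(d-3)/2}\le 1$ for $d\ge 3$ and $|t|\le u<1$.

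\textbf{Main obstacle.} The only real subtlety is the lower bound \eqref{eq: Beta Bound 1}: one must avoid a naive estimate that loses a dimension-dependent factor, and the clean trick is recognizing $(\sin\phi)^{d-2}\cos\phi$ as (up to $d-1$) the derivative of $(\sin\phi)^{d-1}$ and then just dropping the $\cos\phi\le 1$ factor the right way — i.e.\ $(\sin\phi)^{d-2}\ge (\sin\phi)^{d-2}\cos\phi = \frac{1}{d-1}\big[(\sin\phi)^{d-1}\big]'$, integrate, done. I would also double-check the edge case $d=2$ (where the exponent $(d-3)/2$ is negative) — there $(1-t^2)^{-1/2}$ is still integrable and both bounds hold by direct computation, but it is worth a sentence. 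Everything else is routine.
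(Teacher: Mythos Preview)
Your proposal is correct and essentially matches the paper's proof: the paper reduces by spherical symmetry to $\mathbf{y}=\mathbf{e}_1$, observes that $|t|^2 \sim \mathcal{B}\!\left(\tfrac12,\tfrac{d-1}{2}\right)$, and bounds the Beta tails via $x^{-1/2}\ge 1$ (for the lower tail) and $(1-x)^{\beta-1}\le 1$ (for the upper tail), which under the change of variables $x=\cos^2\phi$ are literally your inequalities $(\sin\phi)^{d-2}\ge (\sin\phi)^{d-2}\cos\phi$ and $(1-t^2)^{(d-3)/2}\le 1$. The $d=2$ edge case you flag is likewise unaddressed in the paper (their argument also needs $\beta\ge 1$, i.e.\ $d\ge 3$, for the second bound), and the $d_0$ in the denominator of \eqref{eq: Beta Bound 1} is indeed a typo for $d$.
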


\begin{proof}
Since $\mathcal{N}\left(0,\mathbf{I}_d\right)$ is spherically
symmetric, we have 
\[
\mathbb{P}\left(\frac{\mathbf{x}^{\top}\mathbf{y}}{\left\Vert \mathbf{x}\right\Vert \left\Vert \mathbf{y}\right\Vert }>\cos\left(\varepsilon\right)\right)=\frac{1}{2}\mathbb{P}\left(\left|\frac{\mathbf{x}^{\top}\mathbf{y}}{\left\Vert \mathbf{x}\right\Vert \left\Vert \mathbf{y}\right\Vert }\right|>\cos\left(\varepsilon\right)\right)\,,
\]

and we can set $\mathbf{y}=\left[1,0\dots,0\right]^{\top}$, without
loss of generality. 
Therefore, as in \cite{2263641} 
\[
\left|\frac{\mathbf{x}^{\top}\mathbf{y}}{\left\Vert \mathbf{x}\right\Vert \left\Vert \mathbf{y}\right\Vert }\right|^{2}=\frac{x_{1}^{2}}{x_{1}^{2}+\sum_{i=2}^{d}x_{i}^{2}}\sim\mathcal{B}\left(\frac{1}{2},\frac{d-1}{2}\right),
\]
where $\mathcal{B}$ denotes the Beta distribution, since $x_{1}^{2}\sim\chi^{2}\left(1\right)$
and $\sum_{i=2}^{d}x_{i}^{2}\sim\chi^{2}\left(d-1\right)$
are independent chi-square random variables.

Suppose $Z\sim\mathcal{B}\left(\alpha,\beta\right)$, $\alpha\in\left(0,1\right)$,
and $\beta>1$ .
\begin{align*}
\mathbb{P}\left(Z>u\right)= & \frac{\int_{u}^{1}x^{\alpha-1}\left(1-x\right)^{\beta-1}dx}{B\left(\alpha,\beta\right)}\geq\frac{\int_{u}^{1}1^{\alpha-1}\left(1-x\right)^{\beta-1}dx}{B\left(\alpha,\beta\right)}=\frac{\int_{0}^{1-u}x^{\beta-1}dx}{B\left(\alpha,\beta\right)}=\frac{\left(1-u\right)^{\beta}}{\beta B\left(\alpha,\beta\right)}\,.
\end{align*}
Therefore, for $\varepsilon>0$,
\begin{align*}
\mathbb{P}\left(\left|\frac{\mathbf{x}^{\top}\mathbf{y}}{\left\Vert \mathbf{x}\right\Vert \left\Vert \mathbf{y}\right\Vert }\right|^{2}>\cos^{2}\left(\varepsilon\right)\right) & \geq\frac{2\left(1-\cos^{2}\left(\varepsilon\right)\right)^{\frac{d-1}{2}}}{\left(d-1\right)B\left(\frac{1}{2},\frac{d-1}{2}\right)}=\frac{2\sin\left(\varepsilon\right)^{d-1}}{\left(d-1\right)B\left(\frac{1}{2},\frac{d-1}{2}\right)}\,,
\end{align*}
which proves \eqref{eq: Beta Bound 1}.

Similarly, for $\alpha\in\left(0,1\right)$ and $\beta>1$ 
\begin{align*}
\mathbb{P}\left(Z<u\right) & =\frac{\int_{0}^{u}x^{\alpha-1}\left(1-x\right)^{\beta-1}dx}{B\left(\alpha,\beta\right)}\leq\frac{\int_{0}^{u}x^{\alpha-1}1^{\beta-1}dx}{B\left(\alpha,\beta\right)}=\frac{u^{\alpha}}{\alpha B\left(\alpha,\beta\right)}\,.
\end{align*}
Therefore,
\[
\mathbb{P}\left(\left|\frac{\mathbf{x}^{\top}\mathbf{y}}{\left\Vert \mathbf{x}\right\Vert \left\Vert \mathbf{y}\right\Vert }\right|^{2}<u^{2}\right)\leq\frac{2u}{B\left(\frac{1}{2},\frac{d_{0}-1}{2}\right)}\,,
\]
which proves \eqref{eq: Beta Bound 2}. 
\end{proof}

\begin{lemma} \label{lem: asymptotic expansion of the beta function appendix} 
For large $x$
\begin{align}
    B\left(\frac{1}{2},x\right)=\sqrt{\pi/x}+O\left(x^{-3/2}\right)\,.
\end{align}
\end{lemma}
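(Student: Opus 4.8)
The plan is to reduce the claim to the classical Stirling expansion via the identity $B(a,b)=\Gamma(a)\Gamma(b)/\Gamma(a+b)$. Concretely, since $\Gamma(1/2)=\sqrt{\pi}$, we have
\begin{align*}
B\left(\tfrac12,x\right)=\sqrt{\pi}\,\frac{\Gamma(x)}{\Gamma(x+\tfrac12)}\,,
\end{align*}
so it suffices to show $\Gamma(x)/\Gamma(x+\tfrac12)=x^{-1/2}\bigl(1+O(1/x)\bigr)$ as $x\to\infty$.

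To do this, I would apply Stirling's formula in the form $\log\Gamma(z)=(z-\tfrac12)\log z - z + \tfrac12\log(2\pi)+O(1/z)$ to both $\Gamma(x)$ and $\Gamma(x+\tfrac12)$, subtract, and simplify. The constant terms $-z$ and $\tfrac12\log(2\pi)$ cancel up to the shift, leaving (after writing $\log(x+\tfrac12)=\log x+\log(1+\tfrac1{2x})$ and expanding $\log(1+t)=t-\tfrac{t^2}2+\cdots$) that
\begin{align*}
\log\Gamma(x)-\log\Gamma(x+\tfrac12)
= x\log\frac{x}{x+\tfrac12}-\tfrac12\log x+\tfrac12+O(1/x)
= -\tfrac12\log x+O(1/x)\,,
\end{align*}
using $x\log\frac{x}{x+1/2}=-\tfrac12+O(1/x)$. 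Exponentiating gives $\Gamma(x)/\Gamma(x+\tfrac12)=x^{-1/2}e^{O(1/x)}=x^{-1/2}\bigl(1+O(1/x)\bigr)$, and multiplying by $\sqrt{\pi}$ yields $B(\tfrac12,x)=\sqrt{\pi/x}+\sqrt{\pi}\cdot O(x^{-3/2})=\sqrt{\pi/x}+O(x^{-3/2})$, as claimed. (Alternatively, one could invoke the standard ratio-of-Gamma asymptotic $\Gamma(x+a)/\Gamma(x+b)=x^{a-b}(1+O(1/x))$ directly with $a=0,b=\tfrac12$, bypassing the Stirling computation entirely.)

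There is no genuine obstacle here: the only thing requiring care is the bookkeeping of the error orders — specifically checking that the $O(1/x)$ in the exponent, after multiplication by the leading $x^{-1/2}$ term and the constant $\sqrt{\pi}$, produces precisely an $O(x^{-3/2})$ correction and not something larger. This is immediate since $x^{-1/2}\cdot O(1/x)=O(x^{-3/2})$.
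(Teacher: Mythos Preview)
Your proposal is correct and takes essentially the same approach as the paper: both use the identity $B(\tfrac12,x)=\sqrt{\pi}\,\Gamma(x)/\Gamma(x+\tfrac12)$ and Stirling's approximation to extract the leading $\sqrt{\pi/x}$ term with an $O(x^{-3/2})$ remainder. The only cosmetic difference is that the paper applies Stirling in its multiplicative form $\Gamma(x)=\sqrt{2\pi/x}\,(x/e)^x(1+O(1/x))$ and simplifies the ratio directly, whereas you work with the logarithmic form and then exponentiate; the computations are equivalent.
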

\begin{proof}
Using $\Gamma\left(\frac{1}{2}\right)=\sqrt{\pi}$, Stirling's approximation for the Gamma function 
\begin{align*}
\Gamma \left(x\right) = \sqrt{\frac{2\pi}{x}}\left(\frac{x}{e}\right)^{x}\left(1+O\left(x^{-1}\right)\right)\,,
\end{align*}
and the definition of the Beta function,
\begin{align*}
B\left(\frac{1}{2},x\right)&=\frac{\Gamma\left(\frac{1}{2}\right)\Gamma\left(x\right)}{\Gamma\left(\frac{1}{2}+x\right)} \\
&= \frac{\sqrt{\pi}\sqrt{\frac{2\pi}{x}}\left(\frac{x}{e}\right)^{x}\left(1+O\left(x^{-1}\right)\right)}{\sqrt{\frac{2\pi}{x+\frac{1}{2}}}\left(\frac{x+\frac{1}{2}}{e}\right)^{x+\frac{1}{2}}\left(1+O\left(x^{-1}\right)\right)}\\
&= \sqrt{e\cdot\pi}\sqrt{\frac{1}{x}}\left(\frac{x}{x+\frac{1}{2}}\right)^{x}\left(1+O\left(x^{-1}\right)\right) \\
&= \sqrt{e\cdot\pi}\sqrt{\frac{1}{x}}\frac{1}{\left(1+\frac{1}{2x}\right)^x}\left(1+O\left(x^{-1}\right)\right) \\
&= \sqrt{\frac{\pi}{x}}\left(1+O\left(x^{-1}\right)\right)\,.
\end{align*}
\end{proof}

\begin{corollary}\label{cor: angles between high dimensional random Gaussian vectors appendix}
For any vector $\mathbf{y}$ and $\mathbf{x}\sim\left(\mathbf{0},\mathbf{I}_{d}\right)$, $\varepsilon\in\left(0,\frac{\pi}{2}\right)$ and $u\in\left(0,1\right)$
we have 
\begin{align*}
\mathbb{P}\left(\frac{\mathbf{x}^{\top}\mathbf{y}}{\left\Vert \mathbf{x}\right\Vert \left\Vert \mathbf{y}\right\Vert }>\cos\left(\varepsilon\right)\right)&=\frac{1}{2}\mathbb{P}\left(\left|\frac{\mathbf{x}^{\top}\mathbf{y}}{\left\Vert \mathbf{x}\right\Vert \left\Vert \mathbf{y}\right\Vert }\right|>\cos\left(\varepsilon\right)\right)\\
&\geq\exp\left(d\log\left(\sin\left(\varepsilon\right)\right)-\frac{1}{2}\log\left(d\right)-\frac{1}{2}\log\left(2\pi\right)\right)\left(1+O\left(d^{-1}\right)\right)\,.\label{eq: Beta Bound large dim 1 appendix}
\end{align*}
\end{corollary}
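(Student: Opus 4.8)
The plan is simply to chain the two preceding lemmas, so the work is entirely bookkeeping. First I would apply Lemma~\ref{lem: angles between random Gaussian vectors appendix} with ambient dimension $d$ and margin $\epsilon$: it already supplies the exact identity $\mathbb{P}(\cdot>\cos\epsilon)=\tfrac12\mathbb{P}(|\cdot|>\cos\epsilon)$ and, reading \eqref{eq: Beta Bound 1} (with the evident correction $d_0-1\to d-1$ that is visible in that lemma's proof), the lower bound
\begin{equation*}
\mathbb{P}\!\left(\frac{\mathbf{x}^{\top}\mathbf{y}}{\|\mathbf{x}\|\,\|\mathbf{y}\|}>\cos\epsilon\right)\;\ge\;\frac{\sin(\epsilon)^{d-1}}{(d-1)\,B\!\left(\tfrac12,\tfrac{d-1}{2}\right)}\,.
\end{equation*}
So it only remains to lower bound the right-hand side by $\exp\!\big(d\log\sin\epsilon-\tfrac12\log d-\tfrac12\log(2\pi)\big)(1+O(d^{-1}))$.

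For the numerator I would use $\sin\epsilon\in(0,1)$ to weaken $\sin(\epsilon)^{d-1}\ge\sin(\epsilon)^{d}=\exp(d\log\sin\epsilon)$, which is safe (it only decreases the bound) and removes the stray $d-1$ in the exponent. For the denominator I would invoke Lemma~\ref{lem: asymptotic expansion of the beta function appendix} with $x=(d-1)/2$, giving $B(\tfrac12,\tfrac{d-1}{2})=\sqrt{2\pi/(d-1)}\,(1+O(d^{-1}))$, hence
\begin{equation*}
(d-1)\,B\!\left(\tfrac12,\tfrac{d-1}{2}\right)=\sqrt{2\pi(d-1)}\,(1+O(d^{-1}))\le\sqrt{2\pi d}\,(1+O(d^{-1}))\,,
\end{equation*}
where the last step uses $\sqrt{1-1/d}=1+O(d^{-1})$. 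Combining the two estimates and using $1/(1+O(d^{-1}))=1+O(d^{-1})$,
\begin{equation*}
\frac{\sin(\epsilon)^{d-1}}{(d-1)\,B\!\left(\tfrac12,\tfrac{d-1}{2}\right)}\;\ge\;\frac{\exp(d\log\sin\epsilon)}{\sqrt{2\pi d}}\,(1+O(d^{-1}))\;=\;\exp\!\left(d\log\sin\epsilon-\tfrac12\log(2\pi d)\right)(1+O(d^{-1}))\,,
\end{equation*}
and splitting $\tfrac12\log(2\pi d)=\tfrac12\log d+\tfrac12\log(2\pi)$ gives exactly the claimed inequality.

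There is no genuinely hard step here. The one thing to watch is that every manipulation preserves a true lower bound and that the $O(d^{-1})$ error terms compose correctly; passing to $\sin(\epsilon)^{d}$ and to $\sqrt{2\pi d}$ \emph{before} invoking the Stirling-type expansion keeps all weakenings in the safe direction and avoids having to carry $\sin(\epsilon)^{d-1}/\sqrt{2\pi(d-1)}$ and only asymptotically tidy it at the very end. I would also remark, as in Lemma~\ref{lem: asymptotic expansion of the beta function appendix}, that the statement is understood in the large-$d$ regime — in particular $d$ large enough that $(d-1)/2>1$, so that Lemma~\ref{lem: angles between random Gaussian vectors appendix} applies.
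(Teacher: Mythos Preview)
Your proposal is correct and follows essentially the same approach as the paper: start from the Beta bound of Lemma~\ref{lem: angles between random Gaussian vectors appendix}, replace $B(\tfrac12,\tfrac{d-1}{2})$ by its Stirling-type asymptotic from Lemma~\ref{lem: asymptotic expansion of the beta function appendix}, and then weaken $d-1$ to $d$ in both the exponent and the $\log$ term. The only cosmetic difference is the order in which you perform the weakenings (you pass to $\sin(\epsilon)^d$ and $\sqrt{2\pi d}$ before writing everything as a single exponential, whereas the paper carries $d-1$ through and weakens at the very end); your version is arguably cleaner about the direction of each inequality.
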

\begin{proof}
\begin{align*}
\frac{\sin\left(\varepsilon\right)^{d-1}}{\left(d-1\right)B\left(\frac{1}{2},\frac{d-1}{2}\right)}
 & =\frac{\sin\left(\varepsilon\right)^{d-1}}{\left(d-1\right)\sqrt{\frac{2\pi}{d-1}}\left(1+O\left(\frac{1}{d-1}\right)\right)}\\
 & =\frac{\sin\left(\varepsilon\right)^{d-1}}{\sqrt{2\pi\left(d-1\right)}}\left(1+O\left(d^{-1}\right)\right)\\
 & =\frac{\sin\left(\varepsilon\right)^{d-1}}{\sqrt{2\pi\left(d-1\right)}}\left(1+O\left(d^{-1}\right)\right)\\
 & =\exp\left(\left(d-1\right)\log\left(\sin\left(\varepsilon\right)\right)-\frac{1}{2}\log\left(d-1\right)-\frac{1}{2}d_{1}^{\star}\log\left(2\pi\right)\right)\left(1+O\left(d^{-1}\right)\right)\\
 & =\exp\left(-\left(d-1\right)\left|\log\left(\sin\left(\varepsilon\right)\right)\right|-\frac{1}{2}\log\left(d-1\right)-\frac{1}{2}\log\left(2\pi\right)\right)\left(1+O\left(d^{-1}\right)\right)\\
 & >\exp\left(-d\left|\log\left(\sin\left(\varepsilon\right)\right)\right|-\frac{1}{2}\log\left(d\right)-\frac{1}{2}\log\left(2\pi\right)\right)\left(1+O\left(d^{-1}\right)\right)\,.
\end{align*}
\end{proof}

For completeness, we follow with the setting and notation introduced in \secref{sec: continuous} with slight modification.

\subsection{Setting and Notation}
Let $\student, \teacher$ be fully connected (\defref{def: vanilla fcn}) two layer neural network models with input dimension $d_{0}$, output dimension $d_{2}=1$ and hidden layer dimensions $d_{1}$ and $d_{1}^{\star}$, respectively.
To simplify notation, we omit the $\sign$ activation from the definition of $\student$ and $\teacher$ and denote
\begin{equation*}
\student\left(\mathbf{x}\right) = \mathbf{z}^{\top} \sigma\left(\mathbf{W}\mathbf{x}\right)
\end{equation*}
\begin{equation*}
\teacher\left(\mathbf{x}\right) = \mathbf{z}^{\star \top} \sigma\left(\mathbf{W}^{\star}\mathbf{x}\right)
\end{equation*}
where
\[
\mathbf{W}=\left[\mathbf{w}_{1},\dots,\mathbf{w}_{d_{1}}\right]^{\top}\in\mathbb{R}^{d_{1}\times d_{0}}\,,\,\mathbf{z}\in\mathbb{R}^{d_{1}}\,,
\]
\[
\mathbf{W}^{\star} = \left[ \mathbf{w}_{1}^{\star}, \dots, \mathbf{w}_{d_{1}^{\star}}^{\star}, \mathbf{0}, \dots, \mathbf{0} \right]^{\top} \in \mathbb{R}^{d_{1} \times d_{0}}\,,\, \mathbf{z}^{\star} \in \mathbb{R}^{d_2\times d_1}\,,
\]
and $\sigma\left(\cdot\right)$ is the common leaky rectifier linear unit (LReLU, \citet{Maas2013RectifierNI}) with parameter $\rho\notin \{0,1\}$.
\begin{equation}
\sigma\left(u\right)= ua\left(u\right)\,\,\mathrm{with}\,\,a\left(u\right)=\begin{cases}
1 & ,\,\mathrm{if}\,u>0\\
\rho & ,\,\mathrm{if}\,u<0
\end{cases}\,,\label{eq: LReLU appendix}
\end{equation}
The training set $\mathbf{X}=\left[\mathbf{x}^{\left(1\right)},\dots,\mathbf{x}^{\left(N\right)}\right]\in\mathbb{R}^{d_{0}\times N}$
consists of $N$ datapoints.
Thus, the output of the FCN on the entire dataset can be written
as 
\begin{equation}
\student \left( \mathbf{X} \right) = \sigma \left( \mathbf{W} \mathbf{X} \right)^{\top} \mathbf{z} \in \mathbb{R}^{N}.\label{eq: NN appendix}
\end{equation}
\begin{equation}
\teacher\left(\mathbf{X}\right)=\sigma\left(\mathbf{W}^{\star}\mathbf{X}\right)^{\top}\mathbf{z}^{\star}\in\mathbb{R}^{N}.\label{eq: teacher NN appendix}
\end{equation}
We denote the labels $y^{\left( n \right)} = \sign \left( \teacher \left(\mathbf{x}^{\left(n\right)}\right)\right)$, and 
\[ \mathbf{y}=\left[y^{\left(1\right)},\dots,y^{\left(N\right)}\right]^{\top}\in\left\{ \pm 1 \right\} {}^{N}
\]
We use the notation $\boldsymbol{a}^{\left(n\right)} = a \left( \mathbf{W}\mathbf{x}^{\left( n \right)} \right) \in \left\{\rho, 1 \right\}^{d_1}$ for the activation pattern of the hidden layer of $\student$ on the input $\mathbf{x}^{\left( n \right)}$.
In addition, we define the flattened weights' vectors of $\student$ and $\teacher$ as 
\begin{align*}
\mathbf{w} & =\mathrm{vec}\left(\mathbf{W}{}^{\top}\mathrm{diag}\left(\mathbf{z}\right)\right)\in\mathbb{R}^{d_{0}d_{1}}\\
\mathbf{w}^{\star} & =\mathrm{vec}\left(\mathbf{W^{\star}}^{\top}\mathrm{diag}\left(\mathbf{z}^{\star}\right)\right)\in\mathbb{R}^{d_{0}d_{1}}\,,
\end{align*}
respectively, where $\mathbf{W}^{\star}$ and $\mathbf{z}^{\star}$ are padded with $0$'s to match the dimensions.
Let $\boldsymbol{\phi}^{\left(n\right)} =\left(\boldsymbol{a}^{\left(n\right)}\otimes\mathbf{x}^{\left(n\right)}\right)y^{\left(n\right)}\in\mathbb{R}^{d_{0}d_{1}}$ where $\otimes$ is the Kronecker product. 
With this notation, it can be shown that 
\begin{align*}
y^{\left(n\right)}h\left(\mathbf{x}^{\left(n\right)}\right)&=\mathbf{w}^\top \mathbf{\phi}^{\left(n\right)}\,.
\end{align*}

\begin{assumption}[Prior over hypotheses, continuous setting, restated] \label{asm: continuous prior appendix}
    Suppose that the weights of $\student$ are random such that each row of the first layer, $\mathbf{w}_i$, is independently sampled from a uniform distribution on the unit sphere $\mathbb{S}^{d_0 -1}$, and the second layer $\mathbf{z}$ is sampled uniformly from $\mathbb{S}^{d_1 - 1}$.
    Both $\mathbf{w}_i$ and $\mathbf{z}$ are independent of the teacher and data.
\end{assumption}

\subsection{Proof of \thmref{thm: interpolation probability of continuous networks}}
\begin{definition}[First layer angular margin, restated] \label{def: First layer angular margin appendix}
    For any training set $\calS=\{\vect{x}_n \}_{n=1}^N$, we say that $\calS$ has \emph{first layer angular margin $\alpha$} w.r.t. the teacher if
    \begin{equation}
    \forall i\in[d_1^{\star}]\,,\,n\in[N]:\left|\frac{\mathbf{x}_n^{\top}\mathbf{w}_{i}^{\star}}{\left\Vert \mathbf{x}_n\right\Vert_2 \left\Vert \mathbf{w}_{i}^{\star}\right\Vert_2 }\right|>\sin\alpha \,.
    \end{equation}
\end{definition}
We denote the event that $\teacher$ and $\student$ agree on the activation pattern of the data by 
\begin{equation}
\tilde{\mathcal{G}}\left(\mathbf{X},\mathbf{W}^{\star}\right)\triangleq\left\{ \mathbf{W}\in\mathbb{R}^{d_{1}\times d_{0}}|\forall i\in d_{1}^{\star}:\,\,\mathrm{sign}\left(\mathbf{w}_{i}^{\top}\mathbf{X}\right)=\mathrm{sign}\left(\mathbf{w}_{i}^{\star\top}\mathbf{X}\right)\right\} \,.\label{eq: activation matching event appendix}
\end{equation}
To bound the probability of this event, we use the following Lemma, adapted from \citet{soudry2017exponentially}. For completeness, we write its proof here. 
\begin{lemma} [Activation matching probability, adapted from \citet{soudry2017exponentially}] \label{lem: activation matching probability appendix}
Let $\mathbf{X}$ be a dataset with first layer angular margin $\alpha$ w.r.t $\teacher$.
Then
\begin{align*}
\mathbb{P}_{\mathbf{W}}\left(\tilde{\mathcal{G}}\left(\mathbf{X},\mathbf{W}^{\star}\right)\right)\ge\left[\frac{\sin\left(\alpha\right)^{d_{0}-1}}{\left(d_{0}-1\right)B\left(\frac{1}{2},\frac{d_{0}-1}{2}\right)}\right]^{d_{1}^{\star}}\,,
\end{align*}
and when $d_0 \gg 1$ and $\frac{d_{1}^{\star}}{d_{0}} \ll 1$
\begin{align*}
\mathbb{P}_{\mathbf{W}}\left(\tilde{\mathcal{G}}\left(\mathbf{X},\mathbf{W}^{\star}\right)\right)\ge\exp\left(d_{1}^{\star}d_{0}\log\left(\sin\left(\alpha\right)\right)-\frac{1}{2}d_{1}^{\star}\log\left(d_{0}\right)-\frac{1}{2}d_{1}^{\star}\log\left(2\pi\right)\right)\left(1+O\left(d_{0}^{-1}d_{1}^{\star}\right)\right)\,.
\end{align*}
\end{lemma}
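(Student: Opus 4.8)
The plan is to exploit the independence of the student's first-layer rows under \asmref{asm: continuous prior appendix} together with a geometric observation that the angular margin turns the activation-matching event into the event that each $\mathbf{w}_i$ ($i\le d_1^\star$) lands in a small spherical cap around the corresponding teacher row $\mathbf{w}_i^\star$. Since $\mathbf{w}_1,\dots,\mathbf{w}_{d_1}$ are i.i.d.\ and the event in \eqref{eq: activation matching event appendix} only constrains $\mathbf{w}_1,\dots,\mathbf{w}_{d_1^\star}$, the probability factorizes:
\[
\bbP_{\mathbf{W}}\rb{\tilde{\mathcal{G}}\rb{\mathbf{X},\mathbf{W}^\star}}=\prod_{i=1}^{d_1^\star}\bbP_{\mathbf{w}_i}\rb{\forall n\in[N]:\ \sign\rb{\mathbf{w}_i^\top \mathbf{x}_n}=\sign\rb{\mathbf{w}_i^{\star\top}\mathbf{x}_n}}\,.
\]
It thus suffices to lower bound each factor by $\sin(\alpha)^{d_0-1}/\rb{(d_0-1)B(\tfrac12,\tfrac{d_0-1}{2})}$.

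Next I would establish the geometric claim: if the angle between $\mathbf{w}_i$ and $\mathbf{w}_i^\star$ is strictly less than $\alpha$, then $\mathbf{w}_i$ and $\mathbf{w}_i^\star$ induce the same sign pattern on the entire training set. Fix $n$ and let $\theta$ be the angle between $\mathbf{x}_n$ and $\mathbf{w}_i^\star$. \defref{def: First layer angular margin appendix} gives $\abs{\cos\theta}>\sin\alpha=\cos(\tfrac{\pi}{2}-\alpha)$, equivalently $\theta\in[0,\tfrac{\pi}{2}-\alpha)\cup(\tfrac{\pi}{2}+\alpha,\pi]$. Letting $\theta'$ be the angle between $\mathbf{x}_n$ and $\mathbf{w}_i$, the triangle inequality for the geodesic distance $\arccos\langle\cdot,\cdot\rangle$ on $\mathbb{S}^{d_0-1}$ yields $\abs{\theta'-\theta}\le\arccos\langle\mathbf{w}_i,\mathbf{w}_i^\star\rangle<\alpha$, so $\theta'$ stays on the same side of $\tfrac{\pi}{2}$ as $\theta$; hence $\sign(\cos\theta')=\sign(\cos\theta)$, i.e.\ $\sign(\mathbf{w}_i^\top\mathbf{x}_n)=\sign(\mathbf{w}_i^{\star\top}\mathbf{x}_n)$. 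Therefore
\[
\bbP_{\mathbf{w}_i}\rb{\forall n:\ \sign\rb{\mathbf{w}_i^\top \mathbf{x}_n}=\sign\rb{\mathbf{w}_i^{\star\top}\mathbf{x}_n}}\ge\bbP_{\mathbf{w}_i}\rb{\frac{\mathbf{w}_i^\top\mathbf{w}_i^\star}{\norm{\mathbf{w}_i}\norm{\mathbf{w}_i^\star}}>\cos\alpha}\,.
\]

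For the last step I would estimate this cap probability. A uniform draw from $\mathbb{S}^{d_0-1}$ has the same direction as a standard Gaussian in $\mathbb{R}^{d_0}$, so \lemref{lem: angles between random Gaussian vectors appendix} (inequality \eqref{eq: Beta Bound 1}) applies with $d=d_0$, $\mathbf{y}=\mathbf{w}_i^\star$, $\epsilon=\alpha$, giving the per-neuron bound $\sin(\alpha)^{d_0-1}/\rb{(d_0-1)B(\tfrac12,\tfrac{d_0-1}{2})}$; multiplying the $d_1^\star$ identical factors gives the first displayed inequality of the lemma. For the asymptotic form I would instead invoke \corref{cor: angles between high dimensional random Gaussian vectors appendix} (which is itself \lemref{lem: angles between random Gaussian vectors appendix} combined with the Beta-function expansion of \lemref{lem: asymptotic expansion of the beta function appendix}) with $d=d_0$, $\epsilon=\alpha$, writing each factor as $\exp\rb{d_0\log\sin\alpha-\tfrac12\log d_0-\tfrac12\log(2\pi)}\rb{1+O(d_0^{-1})}$. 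Raising to the power $d_1^\star$ produces the claimed leading exponent $d_1^\star d_0\log\sin\alpha-\tfrac12 d_1^\star\log d_0-\tfrac12 d_1^\star\log(2\pi)$ and a multiplicative factor $\rb{1+O(d_0^{-1})}^{d_1^\star}$; bounding the latter via $(1+x)^k\le e^{kx}$ and using the regime $d_1^\star/d_0\ll1$ turns it into $1+O(d_0^{-1}d_1^\star)$.

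The main obstacle is the geometric step: one must argue carefully that a small rotation of the hidden-neuron normal cannot flip any label, which is exactly where the angular margin is used (the absolute value in \defref{def: First layer angular margin appendix} rules out $\mathbf{x}_n$ being close to either orientation of the decision boundary), and where the spherical triangle inequality must be applied cleanly. Everything downstream is a direct application of the already-proved Beta-function bounds; the only additional care is keeping the accumulated error over the $d_1^\star$ neurons at order $O(d_0^{-1}d_1^\star)$, which requires the stated $d_1^\star\ll d_0$ assumption.
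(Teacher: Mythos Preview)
Your proposal is correct and follows essentially the same approach as the paper: reduce the activation-matching event to the intersection of the per-neuron ``cap'' events $\{\mathbf{w}_i^\top\mathbf{w}_i^\star/\|\mathbf{w}_i\|\|\mathbf{w}_i^\star\|>\cos\alpha\}$, factorize by independence of the rows, apply \lemref{lem: angles between random Gaussian vectors appendix} to each factor, and then use \corref{cor: angles between high dimensional random Gaussian vectors appendix} for the asymptotic form with the error $(1+O(d_0^{-1}))^{d_1^\star}=1+O(d_0^{-1}d_1^\star)$ under $d_1^\star/d_0\ll1$. Your justification of the geometric step via the spherical triangle inequality is in fact more explicit than the paper's one-line assertion that ``$\mathbf{w}_i$ must be rotated by an angle greater than the angular margin $\alpha$''.
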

\begin{proof}
To bound $\mathbb{P}_{\mathbf{W}}\left(\tilde{\mathcal{G}}\left(\mathbf{X},\mathbf{W}^{\star}\right)\right)$,
we define the event that all weight hyperplanes with normals $\mathbf{w}_{i}$,
have an angle of at most $\alpha$ from the corresponding target hyperplanes with normals $\mathbf{w}_{i}^{\star}$.
\[
\forall i \in \left[d_1^{\star}\right]\, \tilde{\mathcal{G}}_{i}^{\alpha}\left(\mathbf{W}^{\star}\right)\triangleq\left\{ \mathbf{W}\in\mathbb{R}^{d_{1}\times d_{0}}|\frac{\mathbf{w}_{i}^{\top}\mathbf{w}_{i}^{\star}}{\left\Vert \mathbf{w}_{i}\right\Vert \left\Vert \mathbf{w}_{i}^{\star}\right\Vert }>\cos\left(\alpha\right)\right\} \,.
\]
Since $\mathbf{X}$ has first layer angular margin $\alpha$, in order that $\mathrm{sign}\left(\mathbf{w}_{i}^{\top}\mathbf{x}^{\left(n\right)}\right)\neq\mathrm{sign}\left(\mathbf{w}_{i}^{\star\top}\mathbf{x}^{\left(n\right)}\right)$,
$\mathbf{w}_{i}$ must be rotated in respect to $\mathbf{w}_{i}^{\star}$
by an angle greater then the angular margin $\alpha$. 
Therefore, we have that
\begin{equation}
\bigcap_{i=1}^{d_{1}^{\star}}\tilde{\mathcal{G}}_{i}^{\alpha}\left(\mathbf{W}^{\star}\right)\subset\tilde{\mathcal{G}}\left(\mathbf{X},\mathbf{W}^{\star}\right)\,.\label{eq: G subsets}
\end{equation}
And so,
\begin{align}
\mathbb{P}_{\mathbf{W}}\left(\tilde{\mathcal{G}}\left(\mathbf{X},\mathbf{W}^{\star}\right)\right) & \overset{\left(1\right)}{\geq}\mathbb{P}_{\mathbf{W}}\left(\bigcap_{i=1}^{d_{1}^{\star}}\tilde{\mathcal{G}}_{i}^{\alpha}\left(\mathbf{W}^{\star}\right)\right)\,\label{eq: PG_W}\\
&\overset{\left(2\right)}{=}\prod_{i=1}^{d_{1}^{\star}}\mathbb{P}_{\mathbf{W}}\left(\mathbf{W}\in\tilde{\mathcal{G}}_{i}^{\alpha}\left(\mathbf{W}^{\star}\right)\right) \\
&=\prod_{i=1}^{d_{1}^{\star}}\mathbb{P}_{\mathbf{W}}\left(\frac{\mathbf{w}_{i}^{\top}\mathbf{w}_{i}^{\star}}{\left\Vert \mathbf{w}_{i}\right\Vert \left\Vert \mathbf{w}_{i}^{\star}\right\Vert }>\cos\left(\alpha\right)\right) \\
&\overset{\left(3\right)}{\geq}\left[\frac{\sin\left(\alpha\right)^{d_{0}-1}}{\left(d_{0}-1\right)B\left(\frac{1}{2},\frac{d_{0}-1}{2}\right)}\right]^{d_{1}^{\star}}\,,
\end{align}
where in (1) we used \eqref{eq: G subsets}, in (2) we used the independence of $\left\{ \mathbf{w}_{i}\right\} _{i=1}^{d_{1}^{\star}}$ and in (3) we used \eqref{lem: angles between random Gaussian vectors appendix}.
When $d_0 \gg 1$, we can use \corref{cor: angles between high dimensional random Gaussian vectors appendix} to get
\begin{align*}
\mathbb{P}_{\mathbf{W}}\left(\mathbf{W}\in\tilde{\mathcal{G}}\left(\mathbf{X},\mathbf{W}^{\star}\right)\right) \ge \exp\left(d_{1}^{\star}d_{0}\log\left(\sin\left(\alpha\right)\right)-\frac{1}{2}d_{1}^{\star}\log\left(d_{0}\right)-\frac{1}{2}d_{1}^{\star}\log\left(2\pi\right)\right)\left(1+O\left(d_{0}^{-1}\right)\right)^{d_{1}^{\star}}.
\end{align*}

We can simplify this equation when $d_0 \gg 1$ with the asymptotic expansion of the beta function from \lemref{lem: asymptotic expansion of the beta function appendix}.
If $d_{1}^{\star}d_{0}^{-1}\ll0$ then  the error $\left(1+O\left(d_{0}^{-1}\right)\right)^{d_{1}^{\star}}=1+O\left(d_{0}^{-1}d_{1}^{\star}\right)$.
Overall, this means that 
\begin{align*}
\mathbb{P}_{\mathbf{W}}\left(\mathbf{W}\in\tilde{\mathcal{G}}\left(\mathbf{X},\mathbf{W}^{\star}\right)\right) & \ge\left[\frac{\sin\left(\alpha\right)^{d_{0}-1}}{\sqrt{2\pi\left(d_{0}-1\right)}}\right]^{d_{1}^{\star}}\left(1+O\left(d_{1}^{\star}d_{0}^{-1}\right)\right)\\
 & \ge\exp\left(d_{1}^{\star}d_{0}\log\left(\sin\left(\alpha\right)\right)-\frac{1}{2}d_{1}^{\star}\log\left(d_{0}\right)-\frac{1}{2}d_{1}^{\star}\log\left(2\pi\right)\right)\left(1+O\left(d_{0}^{-1}d_{1}^{\star}\right)\right)
\end{align*}
\end{proof}

\begin{definition}[Second layer angular margin, restated] \label{def: Second layer angular margin appendix}
    For any training set $\calS=\{\rb{\vect{x}^{\rb{n}}} \}_{n=1}^N$, we say that $\calS$ has \emph{second layer angular margin $\beta$} w.r.t. the teacher if
    \begin{equation} 
    \forall n\in\left[N\right] \; \left|\frac{\teacher\rb{\mathbf{x}^{\rb{n}}})}{\left\Vert\mathbf{x}^{\rb{n}}\right\Vert_2 \left\Vert \mathbf{z}^{\star}\right\Vert_2 }\right|>\sqrt{d_1 \left(1+\rho^2\right)}\sin\beta\,.
    \end{equation}
\end{definition}

\begin{assumption}\label{asm: angular margin appendix}
    Let $\alpha<\beta\in\rb{0,\frac{\pi}{2}}$. 
    There exists $\lambda\in\rb{0,1}$ such that, with probability at least $1-\lambda$ over the training set $\calS=\{\rb{\vect{x}_n} \}_{n=1}^N\sim \mathcal{D}^N$, $\calS$ has first layer angular margin $\alpha$ (\defref{def: First layer angular margin appendix}) and second layer angular margin $\beta$ (\defref{def: Second layer angular margin appendix}).
\end{assumption}

\begin{figure}
    \centering
    \includegraphics[width=0.48\textwidth]{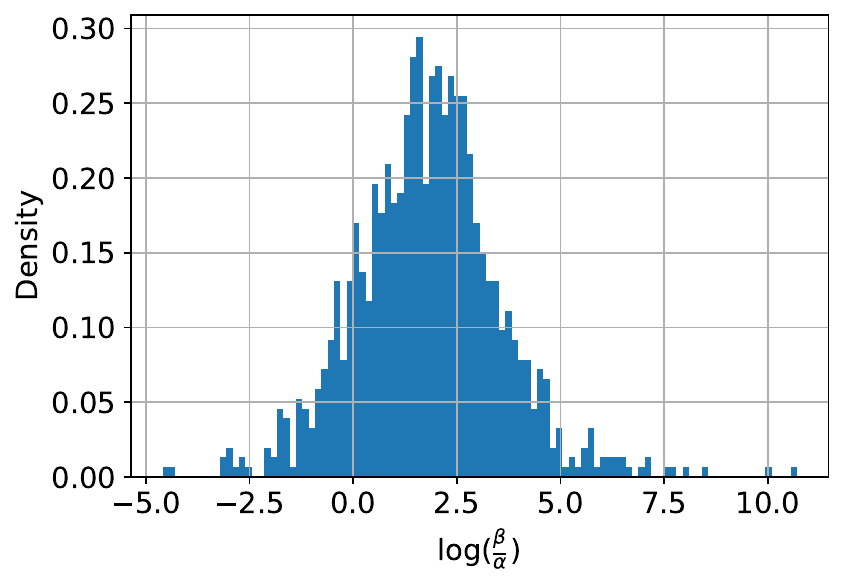}
    \caption{The density of the log of the ratio between $\beta$ and $\alpha$, for standard-Gaussian data, and a two-layer neural network with $\rho=0.01$ and $d_0=500$, $d_1=10,000$, $d_1^{\star}=1,000$. We sampled $50,000$ such datapoints and calculated $\alpha, \beta$ as the minimal angles as in \eqref{eq: angular margin} and \eqref{eq:output angular margin} for a randomly initialized model, for a total of $1,000$ times.}
    \label{fig:alpha-beta-sim}
\end{figure}

\begin{theorem} [Interpolation of Continuous Networks, restated] \label{thm: interpolation probability of continuous networks appendix}
Under \asmref{asm: angular margin appendix}, with probability at least $1 - \lambda$ over the dataset $\mathcal{S}$, the probability of interpolation is lower-bounded by
\begin{align*}
\hat{p}_\mathcal{S} &= \mathbb{P}_{\mathbf{W},\mathbf{z}}\left(\mathcal{L}_S \left(h\right)=0\right) \\
& \ge \exp{\left( d_{1}^{\star} d_{0} \log\left(\sin\left(\alpha\right)\right) + d_{1} \log\left(\sin\left(\gamma\right)\right) - \frac{1}{2} d_1^{\star} \log \rb{d_0} + O\left( d_1^{\star} + \log{\left(d_1\right)}\right)\right)}
\end{align*}
where $\gamma = \arccos{\frac{\cos{\beta}}{\cos{\alpha}}}$.
\end{theorem}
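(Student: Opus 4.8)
The plan is to construct an explicit, product‑structured event on the random weights $(\mathbf{W},\mathbf{z})$ that forces the student to fit $\calS$ whenever $\calS$ has both angular margins, and then lower‑bound its probability by independence of the two layers. Throughout I fix a dataset $\calS$ (equivalently $\mathbf{X}$, $\mathbf{y}$) having first‑layer angular margin $\alpha$ and second‑layer angular margin $\beta$; by \asmref{asm: angular margin appendix} this occurs with probability at least $1-\lambda$, and it suffices to establish the asserted bound on $p_{\calS}=\bbP_{\mathbf{W},\mathbf{z}}(\emError{h}=0)$ for every such $\calS$. Using positive homogeneity of $\sigma$ I may rescale so that each $\mathbf{w}_i^\star$ is a unit vector without altering $\teacher$ (hence the labels) or $\alpha,\beta$. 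Put $\gamma=\arccos(\cos\beta/\cos\alpha)$, which is well‑defined in $(0,\tfrac\pi2)$ since $0<\alpha<\beta<\tfrac\pi2$, and $\mathbf{z}^\diamond\triangleq(\mathbf{z}^\star,\mathbf{0})/\norm{\mathbf{z}^\star}\in\mathbb{S}^{d_1-1}$.

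\textbf{Step 1: a sufficient event.} I would take $\mathcal{E}_1=\{\,\angle(\mathbf{w}_i,\mathbf{w}_i^\star)<\alpha\ \forall i\in[d_1^\star]\,\}$ and $\mathcal{E}_2=\{\,\angle(\mathbf{z},\mathbf{z}^\diamond)<\gamma\,\}\cap\{\,z_iz_i^\star\ge0\ \forall i\in[d_1^\star]\,\}$, and argue that $\mathcal{E}_1\cap\mathcal{E}_2$ forces $y^{(n)}\student(\mathbf{x}^{(n)})>0$ for all $n$, i.e.\ $\emError{h}=0$. First, $\mathcal{E}_1$ together with the first‑layer margin makes the student's and teacher's activation coefficients for the first $d_1^\star$ neurons coincide on $\calS$ (the inclusion of activation‑matching events used in the proof of \lemref{lem: activation matching probability appendix}). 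Then I pass to the flattened coordinates: with $\mathbf{w}=\mathrm{vec}(\mathbf{W}^\top\mathrm{diag}(\mathbf{z}))$, $\mathbf{w}^\star=\mathrm{vec}(\mathbf{W}^{\star\top}\mathrm{diag}(\mathbf{z}^\star))$ and $\boldsymbol{\phi}^{(n)}=(\boldsymbol{a}^{(n)}\otimes\mathbf{x}^{(n)})y^{(n)}$ one has $\norm{\mathbf{w}}=\norm{\mathbf{z}}=1$, $\norm{\mathbf{w}^\star}=\norm{\mathbf{z}^\star}$, and $y^{(n)}\student(\mathbf{x}^{(n)})=\mathbf{w}^\top\boldsymbol{\phi}^{(n)}$. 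Since the activation coefficients agree on the first $d_1^\star$ blocks and $\mathbf{w}^\star$ is supported there, $\mathbf{w}^{\star\top}\boldsymbol{\phi}^{(n)}=y^{(n)}\teacher(\mathbf{x}^{(n)})=\abs{\teacher(\mathbf{x}^{(n)})}>0$; and since $\norm{\boldsymbol{\phi}^{(n)}}^2=\norm{\mathbf{x}^{(n)}}^2\sum_{i=1}^{d_1}(a_i^{(n)})^2\le d_1(1+\rho^2)\norm{\mathbf{x}^{(n)}}^2$, the second‑layer margin gives $\cos\angle(\mathbf{w}^\star,\boldsymbol{\phi}^{(n)})>\sin\beta$, i.e.\ $\angle(\mathbf{w}^\star,\boldsymbol{\phi}^{(n)})<\tfrac\pi2-\beta$. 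On the other hand, under $\mathcal{E}_1\cap\mathcal{E}_2$, $\mathbf{w}^\top\mathbf{w}^\star=\sum_{i\le d_1^\star}z_iz_i^\star\cos\angle(\mathbf{w}_i,\mathbf{w}_i^\star)\ge\cos\alpha\sum_{i\le d_1^\star}z_iz_i^\star=\cos\alpha\,\norm{\mathbf{z}^\star}\langle\mathbf{z},\mathbf{z}^\diamond\rangle\ge\cos\alpha\cos\gamma\,\norm{\mathbf{z}^\star}=\cos\beta\,\norm{\mathbf{z}^\star}$, where the sign condition makes each summand $\ge\cos\alpha\,z_iz_i^\star$; hence $\angle(\mathbf{w},\mathbf{w}^\star)<\beta$. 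The triangle inequality for angular (geodesic) distance on the sphere then yields $\angle(\mathbf{w},\boldsymbol{\phi}^{(n)})<\beta+(\tfrac\pi2-\beta)=\tfrac\pi2$, so $\mathbf{w}^\top\boldsymbol{\phi}^{(n)}>0$ for every $n$, which is the claim.

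\textbf{Step 2: probability of the sufficient event.} By \asmref{asm: continuous prior appendix} the rows of $\mathbf{W}$ and $\mathbf{z}$ are independent, so $\bbP(\mathcal{E}_1\cap\mathcal{E}_2)=\bbP(\mathcal{E}_1)\bbP(\mathcal{E}_2)$. For $\mathcal{E}_1$, independence across $i$ and \lemref{lem: angles between random Gaussian vectors appendix} give $\bbP(\mathcal{E}_1)\ge\big[\sin(\alpha)^{d_0-1}/((d_0-1)B(\tfrac12,\tfrac{d_0-1}{2}))\big]^{d_1^\star}$, which is precisely the estimate already carried out in \lemref{lem: activation matching probability appendix}; in the regime $d_0\gg1$, $d_1^\star/d_0\ll1$ it is at least $\exp(d_1^\star d_0\log\sin\alpha-\tfrac12 d_1^\star\log d_0+O(d_1^\star))$. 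For $\mathcal{E}_2$, I would condition on the vector of absolute values $(\abs{z_i})_{i=1}^{d_1}$: the signs $(\sign z_i)$ are then uniform and independent, while the event $\{\sum_{i\le d_1^\star}\abs{z_i}\abs{z_i^\star}>\norm{\mathbf{z}^\star}\cos\gamma\}$ — which together with sign‑matching on $[d_1^\star]$ equals $\mathcal{E}_2$ — depends only on the $\abs{z_i}$, so $\bbP(\mathcal{E}_2)=2^{-d_1^\star}\bbP(\sum_{i\le d_1^\star}\abs{z_i}\abs{z_i^\star}>\norm{\mathbf{z}^\star}\cos\gamma)\ge2^{-d_1^\star}\bbP(\angle(\mathbf{z},\mathbf{z}^\diamond)<\gamma)\ge2^{-d_1^\star}\sin(\gamma)^{d_1-1}/((d_1-1)B(\tfrac12,\tfrac{d_1-1}{2}))$ by \lemref{lem: angles between random Gaussian vectors appendix}, and by \corref{cor: angles between high dimensional random Gaussian vectors appendix} (equivalently $B(\tfrac12,x)=\Theta(x^{-1/2})$) this is $\exp(d_1\log\sin\gamma+O(d_1^\star+\log d_1))$. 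Multiplying, $p_{\calS}\ge\bbP(\mathcal{E}_1\cap\mathcal{E}_2)\ge\exp(d_1^\star d_0\log\sin\alpha+d_1\log\sin\gamma-\tfrac12 d_1^\star\log d_0+O(d_1^\star+\log d_1))$, valid for every $\calS$ with both margins and hence with probability at least $1-\lambda$, which is the statement.

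\textbf{The hard part.} The conceptual crux is Step 1: since one cannot match the teacher weight‑by‑weight under a spherical prior, one lifts to $\bbR^{d_0 d_1}$, notes that the embedded teacher weight vector already separates the student feature vectors $\boldsymbol{\phi}^{(n)}$ with angular slack $\beta$ (so the extra $d_1-d_1^\star$ neurons are harmless because the $\sqrt{d_1(1+\rho^2)}$ in the second‑layer margin already pays for them), and asks only that the normalized student vector land within geodesic distance $\beta$ of it; the total ``budget'' $\beta$ then splits into $\alpha$ for the first layer and $\gamma=\arccos(\cos\beta/\cos\alpha)$ for the second, exactly via the spherical right‑triangle identity $\cos\beta=\cos\alpha\cos\gamma$. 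The only real nuisance is the sign requirement $z_iz_i^\star\ge0$ needed for the inner‑product bound, which the conditioning on $\abs{\mathbf{z}}$ disposes of at the benign cost of $2^{-d_1^\star}$, absorbed into the $O(d_1^\star)$ term.
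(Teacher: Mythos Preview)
Your proposal is correct and follows essentially the same route as the paper: the same sufficient event (first‑layer cones $\tilde{\mathcal{G}}_i^\alpha$, sign‑matching $z_iz_i^\star\ge0$, and the second‑layer cap $\langle\mathbf{z},\mathbf{z}^\star\rangle/\|\mathbf{z}^\star\|>\cos\gamma$), the same flattening $y^{(n)}h(\mathbf{x}^{(n)})=\mathbf{w}^\top\boldsymbol{\phi}^{(n)}$, the same use of the second‑layer margin to get $\angle(\mathbf{w}^\star,\boldsymbol{\phi}^{(n)})<\tfrac\pi2-\beta$, the same factorization of the probability into the three independent pieces, and the same trick $\bbP(\sum|z_iz_i^\star|>\cdot)\ge\bbP(|\langle\mathbf{z},\mathbf{z}^\star\rangle|>\cdot)$ to drop the sign conditioning before invoking the spherical‑cap bound. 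The only cosmetic difference is that you make the spherical triangle inequality explicit where the paper phrases it as ``$\mathbf{w}$ must be rotated by angle at least $\beta$''.
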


\begin{proof}
By \asmref{asm: angular margin appendix} with probability at least $1 - \lambda$, $\mathbf{X}$ has first and second layers angular margins $\alpha$ and $\beta$, respectively, w.r.t $\teacher$.
We assume that these properties hold for the rest of the proof.
Suppose that we condition on the event $\bigcap_{i=1}^{d_{1}^{\star}}\tilde{\mathcal{G}}_{i}^{\alpha}\left(\mathbf{W}^{\star}\right)$ from \eqref{eq: G subsets}.
Then, as in the proof of \lemref{lem: activation matching probability appendix}, under the margin assumptions, $\student$ and $\teacher$ agree on the activation pattern $\mathbf{a}^{\left(n\right)}$ and therefore 
\begin{align*}
y^{\left(n\right)}h^{\star}\left(\mathbf{x}^{\left(n\right)}\right)&=\mathbf{w}^{\star\top} \mathbf{\phi}^{\left(n\right)}
\end{align*}
in addition to
\begin{align*}
y^{\left(n\right)}h\left(\mathbf{x}^{\left(n\right)}\right)&=\mathbf{w}^\top \mathbf{\phi}^{\left(n\right)}\,.
\end{align*}
Using basic properties of the Kronecker product,
\begin{align*}
    \left\Vert \mathbf{\boldsymbol{\phi}}^{\left(n\right)}\right\Vert ^{2} =\left\Vert \mathbf{a}^{\left(n\right)}\right\Vert^{2}\left\Vert \mathbf{x}^{\left(n\right)}\right\Vert^{2}\,.
\end{align*}
Furthermore, since $\mathbf{w}_i,\mathbf{w}_i^{\star}\in\mathbb{S}^{d_0-1}$,
\begin{align*}
    \left\Vert \mathbf{w}\right\Vert ^{2}=\sum_{i=1}^{d_{1}}z_{i}^{2}\left\Vert \mathbf{w}_{i}\right\Vert ^{2}=\left\Vert \mathbf{z}\right\Vert ^{2}
\end{align*}
and similarly $\left\Vert \mathbf{w}^{\star}\right\Vert ^{2}=\left\Vert \mathbf{z}^{\star}\right\Vert ^{2}$.
$\mathbf{a}^{\left(n\right)}\in\left\{1,\rho\right\}^{d_1}$ so
\begin{align*}
    \left\Vert \mathbf{a}^{\left(n\right)}\right\Vert^{2} \le d_1 \cdot \left(1 + \rho^2\right)\,.
\end{align*}
With these identities we deduce
\begin{align*}
    \left|\frac{\mathbf{w}^{\star\top} \mathbf{\phi}^{\left(n\right)}}{\left\Vert\mathbf{w}^{\star}\right\Vert \left\Vert \mathbf{\phi}^{\left(n\right)}\right\Vert }\right| &= \left|\frac{h^{\star}\left(\mathbf{x}^{\left(n\right)}\right)}{\left\Vert\mathbf{x}^{\left(n\right)}\right\Vert\left\Vert\mathbf{a}^{\left(n\right)}\right\Vert \left\Vert \mathbf{z}^{\star}\right\Vert }\right| \ge \frac{1}{\sqrt{d_1 \left(1+\rho^2\right)}}\left|\frac{h^{\star}\left(\mathbf{x}^{\left(n\right)}\right)}{\left\Vert\mathbf{x}^{\left(n\right)}\right\Vert  \left\Vert \mathbf{z}^{\star}\right\Vert }\right|\,.
\end{align*}
Using the second layer angular margin, 
\begin{align*}
\left|\frac{h^{\star}\left(\mathbf{x}^{\left(n\right)}\right)}{\left\Vert\mathbf{x}^{\left(n\right)}\right\Vert_2 \left\Vert \mathbf{z}^{\star}\right\Vert_2 }\right|>\sqrt{d_1 \left(1+\rho^2\right)}\sin\beta\ge\left\Vert\mathbf{a}^{\left(n\right)}\right\Vert\sin\beta
\end{align*}
so
\begin{align}\label{eq: two layer margin appendix}
\left|\frac{\mathbf{w}^{\star\top} \mathbf{\phi}^{\left(n\right)}}{\left\Vert\mathbf{w}^{\star}\right\Vert \left\Vert \mathbf{\phi}^{\left(n\right)}\right\Vert }\right|>\sin\beta\,.
\end{align}
Following the same logic as in the proof of \lemref{lem: activation matching probability appendix}, in order that $\sign\left(y^{\left(n\right)}\right)\neq\sign\left(h\left(\mathbf{x}^{\left(n\right)}\right)\right)$, $\mathbf{w}$ must be rotated by angle at least $\beta$ compared to $\mathbf{w}^{\star}$.
That is,
\begin{equation}
\frac{\mathbf{w}^{\top}\mathbf{w}^{\star}}{\left\Vert \mathbf{w}\right\Vert \left\Vert \mathbf{w}^{\star}\right\Vert }>\cos\beta\label{eq: output margin condition appendix}
\end{equation}
implies interpolation of the dataset.
From symmetry, the probability of \eqref{eq: output margin condition appendix} is exactly half that of 
\begin{align*}
\frac{\left(\mathbf{w}^{\top}\mathbf{w}^{\star}\right)^{2}}{\left\Vert \mathbf{w}\right\Vert ^{2}\left\Vert \mathbf{w}^{\star}\right\Vert ^{2}}>\cos^{2}\beta\,.
\end{align*}
Next,
\begin{align*}
\left(\mathbf{w}^{\top}\mathbf{w}^{\star}\right)^{2} & =\left(\sum_{i=1}^{d_{1}}z_{i}z_{i}^{\star}\mathbf{w}_{i}\cdot\mathbf{w}_{i}^{\star}\right)^{2}\,,
\end{align*}
so \eqref{eq: output margin condition appendix} is equivalent to
\[
\left(\sum_{i=1}^{d_{1}^{\star}}z_{i}z_{i}^{\star}\mathbf{w}_{i}\cdot\mathbf{w}_{i}^{\star}\right)^{2}>\left\Vert \mathbf{z}\right\Vert ^{2}\left\Vert \mathbf{z}^{\star}\right\Vert ^{2}\cos^{2}\beta\,.
\]
Conditioning on the events $\bigcap_{i=1}^{d_{1}^{\star}}\tilde{\mathcal{G}}_{i}^{\alpha}\left(\mathbf{W}^{\star}\right)$, and \begin{align*}\label{eq: positive zzstar appendix}
\forall i=1,\dots,d_{1}^{\star}\ z_{i}z_{i}^{\star}\ge0\,,
\end{align*}
\eqref{eq: output margin condition appendix} holds if
\begin{equation}
\left(\mathbf{z}\cdot\mathbf{z^{\star}}\right)^{2}\cos^{2}\alpha=\left(\sum_{i=1}^{d_{1}^{\star}}z_{i}z_{i}^{\star}\cos\alpha\right)^{2}>\left(\left\Vert \mathbf{z}\right\Vert ^{2}\left\Vert \mathbf{z}^{\star}\right\Vert ^{2}\right)\cos^{2}\beta\label{eq: output margin event}
\end{equation}
i.e. 
\[
\frac{\left(\mathbf{z}\cdot\mathbf{z^{\star}}\right)^{2}}{\left\Vert \mathbf{z}\right\Vert ^{2}\left\Vert \mathbf{z}^{\star}\right\Vert ^{2}}>\frac{\cos^{2}\beta}{\cos^{2}\alpha}\,.
\]
Denote 
\[
\gamma=\arccos\left(\frac{\cos\beta}{\cos\alpha}\right)
\]
then, putting this all together, 
\begin{align}
 & \mathbb{P}_{\mathbf{w}}\left(\forall n\in\left[N\right]\,y^{\left(n\right)}h\left(\mathbf{x}^{\left(n\right)}\right)>0\right)\nonumber\\
 & \ge\mathbb{P}\left(\forall n\in\left[N\right]\,y^{\left(n\right)}h\left(\mathbf{x}^{\left(n\right)}\right)>0,\mathbf{W}\in\bigcap_{i=1}^{d_{1}^{\star}}\tilde{\mathcal{G}}_{i}^{\alpha}\left(\mathbf{W}^{\star}\right),\forall i=1,\dots,d_{1}^{\star}\ z_{i}z_{i}^{\star}>0\right)\nonumber\\
 & =\mathbb{P}\left(\forall n\in\left[N\right]\,y^{\left(n\right)}h\left(\mathbf{x}^{\left(n\right)}\right)>0\middle\vert\bigcap_{i=1}^{d_{1}^{\star}}\tilde{\mathcal{G}}_{i}^{\alpha}\left(\mathbf{W}^{\star}\right),\forall i=1,\dots,d_{1}^{\star}\ z_{i}z_{i}^{\star}>0\right)\mathbb{P}\left(\bigcap_{i=1}^{d_{1}^{\star}}\tilde{\mathcal{G}}_{i}^{\alpha}\left(\mathbf{W}^{\star}\right),\forall i=1,\dots,d_{1}^{\star}\ z_{i}z_{i}^{\star}>0\right)\nonumber\\
 & \ge\mathbb{P}\left(\frac{\mathbf{w}^{\top}\mathbf{w}^{\star}}{\left\Vert \mathbf{w}\right\Vert \left\Vert \mathbf{w}^{\star}\right\Vert }>\cos\beta\middle\vert\bigcap_{i=1}^{d_{1}^{\star}}\tilde{\mathcal{G}}_{i}^{\alpha}\left(\mathbf{W}^{\star}\right),\forall i=1,\dots,d_{1}^{\star}\ z_{i}z_{i}^{\star}>0\right)\mathbb{P}\left(\bigcap_{i=1}^{d_{1}^{\star}}\tilde{\mathcal{G}}_{i}^{\alpha}\left(\mathbf{W}^{\star}\right),\forall i=1,\dots,d_{1}^{\star}\ z_{i}z_{i}^{\star}>0\right)\nonumber\\
 & \ge\frac{1}{2}\mathbb{P}\left(\frac{\left(\mathbf{z}\cdot\mathbf{z^{\star}}\right)^{2}}{\left\Vert \mathbf{z}\right\Vert ^{2}\left\Vert \mathbf{z}^{\star}\right\Vert ^{2}}>\frac{\cos^{2}\beta}{\cos^{2}\alpha}\middle\vert\bigcap_{i=1}^{d_{1}^{\star}}\tilde{\mathcal{G}}_{i}^{\alpha}\left(\mathbf{W}^{\star}\right),\forall i=1,\dots,d_{1}^{\star}\ z_{i}z_{i}^{\star}>0\right)\mathbb{P}\left(\bigcap_{i=1}^{d_{1}^{\star}}\tilde{\mathcal{G}}_{i}^{\alpha}\left(\mathbf{W}^{\star}\right),\forall i=1,\dots,d_{1}^{\star}\ z_{i}z_{i}^{\star}>0\right)\nonumber\\
 & =\frac{1}{2}\mathbb{P}\left(\frac{\left(\mathbf{z}\cdot\mathbf{z^{\star}}\right)^{2}}{\left\Vert \mathbf{z}\right\Vert ^{2}\left\Vert \mathbf{z}^{\star}\right\Vert ^{2}}>\cos^{2}\gamma\middle\vert\bigcap_{i=1}^{d_{1}^{\star}}\tilde{\mathcal{G}}_{i}^{\alpha}\left(\mathbf{W}^{\star}\right),\forall i=1,\dots,d_{1}^{\star}\ z_{i}z_{i}^{\star}>0\right)\mathbb{P}\left(\bigcap_{i=1}^{d_{1}^{\star}}\tilde{\mathcal{G}}_{i}^{\alpha}\left(\mathbf{W}^{\star}\right)\right)\mathbb{P}\left(\forall i=1,\dots,d_{1}^{\star}\ z_{i}z_{i}^{\star}>0\right)\nonumber\\
 & =\frac{1}{2}\mathbb{P}\left(\frac{\left(\mathbf{z}\cdot\mathbf{z^{\star}}\right)^{2}}{\left\Vert \mathbf{z}\right\Vert ^{2}\left\Vert \mathbf{z}^{\star}\right\Vert ^{2}}>\cos^{2}\gamma\middle\vert\forall i=1,\dots,d_{1}^{\star}\ z_{i}z_{i}^{\star}>0\right)\label{eq: cont interpolation first term appendix}\\ &\quad\cdot\mathbb{P}\left(\bigcap_{i=1}^{d_{1}^{\star}}\tilde{\mathcal{G}}_{i}^{\alpha}\left(\mathbf{W}^{\star}\right)\right)\label{eq: cont interpolation second term appendix}\\
 &\quad\cdot\mathbb{P}\left(\forall i=1,\dots,d_{1}^{\star}\ z_{i}z_{i}^{\star}>0\right)\,.\label{eq: cont interpolation third term appendix}
\end{align}
Starting from \eqref{eq: cont interpolation first term appendix},
\begin{align*}
\mathbb{P}\left(\frac{\left(\mathbf{z}\cdot\mathbf{z^{\star}}\right)^{2}}{\left\Vert \mathbf{z}\right\Vert ^{2}\left\Vert \mathbf{z}^{\star}\right\Vert ^{2}}>\cos^{2}\gamma\middle\vert\forall i=1,\dots,d_{1}^{\star}\ z_{i}z_{i}^{\star}>0\right)=\mathbb{P}\left(\frac{\left(\sum_{i=1}^{d_{1}^{\star}}\left|z_{i}z_{i}^{\star}\right|\right)^{2}}{\left\Vert \mathbf{z}\right\Vert ^{2}\left\Vert \mathbf{z}^{\star}\right\Vert ^{2}}>\cos^{2}\gamma\right)
\end{align*}
and since $\left(\mathbf{z}\cdot\mathbf{z^{\star}}\right)^{2}\le\left(\sum_{i=1}^{d_{1}^{\star}}\left|z_{i}z_{i}^{\star}\right|\right)^{2}$ almost surely,
\begin{align*}
\mathbb{P}\left(\frac{\left(\mathbf{z}\cdot\mathbf{z^{\star}}\right)^{2}}{\left\Vert \mathbf{z}\right\Vert ^{2}\left\Vert \mathbf{z}^{\star}\right\Vert ^{2}}>\cos^{2}\gamma\middle\vert\forall i=1,\dots,d_{1}^{\star}\ z_{i}z_{i}^{\star}>0\right)\ge\mathbb{P}\left(\frac{\left(\mathbf{z}\cdot\mathbf{z^{\star}}\right)^{2}}{\left\Vert \mathbf{z}\right\Vert ^{2}\left\Vert \mathbf{z}^{\star}\right\Vert ^{2}}>\cos^{2}\gamma\right)\,.
\end{align*}
From \lemref{lem: angles between random Gaussian vectors appendix}, we obtain
\[
\mathbb{P}\left(\frac{\left(\mathbf{z}\cdot\mathbf{z^{\star}}\right)^{2}}{\left\Vert \mathbf{z}\right\Vert ^{2}\left\Vert \mathbf{z}^{\star}\right\Vert ^{2}}>\cos^{2}\left(\gamma\right)\right)\ge\frac{2\sin\left(\gamma\right)^{d_{1}^{\star}-1}}{\left(d_{1}^{\star}-1\right)B\left(\frac{1}{2},\frac{d_{1}^{\star}-1}{2}\right)}\,.
\]
As for \eqref{eq: cont interpolation second term appendix}, we know from \lemref{lem: activation matching probability appendix} that 
\begin{align*}
\mathbb{P}\left(\bigcap_{i=1}^{d_{1}^{\star}}\tilde{\mathcal{G}}_{i}^{\alpha}\left(\mathbf{W}^{\star}\right)\right) \ge \left[\frac{\sin\left(\alpha\right)^{d_{0}-1}}{\left(d_{0}-1\right)B\left(\frac{1}{2},\frac{d_{0}-1}{2}\right)}\right]^{d_{1}^{\star}}\,.
\end{align*}
For \eqref{eq: cont interpolation third term appendix}, $\mathbb{P}_{z}\left(z\right)=\mathcal{N}\left(0,1\right)$ so 
\[
\mathbb{P}\left(\forall i=1,\dots,d_{1}^{\star}\ z_{i}z_{i}^{\star}>0\right)=2^{-d_{1}^{\star}}\,.
\]
Overall,
\begin{align*}
\mathbb{P}_{\mathbf{w}}\left(\forall n\in\left[N\right]\,y^{\left(n\right)}h\left(\mathbf{x}^{\left(n\right)}\right)>0\right)\ge2^{-d_{1}^{\star}}\frac{\sin\left(\gamma\right)^{d_{1}^{\star}-1}}{\left(d_{1}^{\star}-1\right)B\left(\frac{1}{2},\frac{d_{1}^{\star}-1}{2}\right)}\left[\frac{\sin\left(\alpha\right)^{d_{0}-1}}{\left(d_{0}-1\right)B\left(\frac{1}{2},\frac{d_{0}-1}{2}\right)}\right]^{d_{1}^{\star}}\,.
\end{align*}
When $d_0 \gg d_1^{\star} \gg 1$ we get from \corref{cor: angles between high dimensional random Gaussian vectors appendix}
\begin{align*}
\mathbb{P}_{\mathbf{w}} \left( \forall n \in \left[N\right] \, y^{\left(n\right)} h\left(\mathbf{x}^{\left(n\right)}\right) > 0 \right) & \ge 2^{-d_{1}^{\star}} \exp\left( d_{1}^{\star} d_{0} \log\left(\sin\left(\alpha\right)\right) - \frac{1}{2} d_{1}^{\star} \log\left(d_{0}\right) - \frac{1}{2} d_{1}^{\star} \log\left(2\pi\right)\right) \left(1+O\left(d_{0}^{-1}d_{1}^{\star}\right)\right) \\
& \quad \cdot \exp\left( d_{1} \log\left( \sin\left(\gamma\right)\right) - \frac{1}{2} \log\left(d_{1}\right) - \frac{1}{2} \log\left(2\pi\right)\right) \left(1 + O \left(d_{1}^{-1}\right)\right)\,.
\end{align*}
That is,
\begin{align*}
\hat{p}_{\mathcal{S}} &= 
\mathbb{P}_{\mathbf{w}} \left(\forall n\in\left[N\right]\,y^{\left(n\right)}h\left(\mathbf{x}^{\left(n\right)}\right)>0\right) \\
& \ge \exp{\left( d_{1}^{\star} d_{0} \log\left(\sin\left(\alpha\right)\right) + d_{1} \log\left(\sin\left(\gamma\right)\right) - \frac{1}{2} d_1^{\star} \log \rb{d_0} + O\left( d_1^{\star} + \log{\left(d_1\right)}\right)\right)}\,.
\end{align*}
\end{proof}

The following generalization bound follows directly from \propref{theorem:pscard simplified} and \thmref{thm: interpolation probability of continuous networks}.
\begin{corollary} [Generalization of continuous two layer networks, restated] \label{thm: cont_gen_bound_app}
Under the assumption that $\hat{p}_{\calS}<\frac{1}{2}$, for any $\varepsilon, \delta \in \rb{0,1}$, 
$$\bbP_{S\sim\calD^N, h \sim \mathcal{P}_{\mathcal{S}}} \rb{\exError{h} \leq \varepsilon} \ge 1-\delta\ - \lambda,,$$
for
\begin{align*}
N \ge \frac{ \BSCM{cont} + 4\log\left(\frac{8}{\delta}\right) +2\log \rb{ \BSCM{cont}}}{\varepsilon}
\end{align*}
with
\begin{align*}
    \BSCM{cont} = - d_{1}^{\star} d_{0} \log\left(\sin\left(\alpha\right)\right) - d_{1} \log\left(\sin\left(\gamma\right)\right) + \frac{1}{2} d_1^{\star} \log \rb{d_0} + O\left( d_1^{\star} + \log{\left(d_1\right)}\right)\,.
\end{align*}
\end{corollary}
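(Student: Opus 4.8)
The plan is to chain the data-dependent interpolation lower bound of Theorem~\ref{thm: interpolation probability of continuous networks appendix} with the interpolation-probability generalization guarantee of Proposition~\ref{theorem:pscard simplified}, and to pay for the margin assumption holding only with probability $1-\lambda$ through one extra additive term in a union bound. Let $E$ denote the event, over $\calS\sim\calD^N$, that $\calS$ has first layer angular margin $\alpha$ and second layer angular margin $\beta$ with respect to the teacher; by Assumption~\ref{asm: angular margin appendix}, $\bbP_{\calS}(E)\ge 1-\lambda$. On $E$ the hypotheses of Theorem~\ref{thm: interpolation probability of continuous networks appendix} are met (this is where $d_0\gg d_1^\star\gg 1$ is used, to convert the Beta-function ratios into the stated exponential via Corollary~\ref{cor: angles between high dimensional random Gaussian vectors appendix}), so $p_\calS=\bbP_{h\sim\mathcal{P}}(\emError{h}=0)\ge\exp(-\BSCM{cont})$; equivalently $-\log p_\calS\le\BSCM{cont}$, and hence also $\log\log(1/p_\calS)\le\log\BSCM{cont}$ by monotonicity of $\log$ (using $p_\calS<\tfrac12$).

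Next I would invoke Proposition~\ref{theorem:pscard simplified}, whose quantity $\hat p$ is exactly the data-dependent interpolation probability $p_\calS$ --- its proof only uses that, conditioned on $\calS$, the \GaCtext{} stopping time $T$ is geometric with parameter $p_\calS$ (Lemma~\ref{lem: T bound}). The proposition then gives, with probability at least $1-\delta$ over $\calS\sim\calD^N$ and $h\sim\mathcal{P}_\calS$, that $\exError{h}\le\epsilon(\calS)$ with the random accuracy $\epsilon(\calS)=\bigl(\log(1/p_\calS)+4\log(8/\delta)+2\log\log(1/p_\calS)\bigr)/N$. Combining with the previous paragraph, on $E$ we have $\epsilon(\calS)\le\bigl(\BSCM{cont}+4\log(8/\delta)+2\log\BSCM{cont}\bigr)/N$, which is at most the target $\epsilon$ under the hypothesis $N\ge(\BSCM{cont}+4\log(8/\delta)+2\log\BSCM{cont})/\epsilon$.

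Finally I would assemble the two failure events. Let $B=\{\exError{h}>\epsilon(\calS)\}$, so $\bbP(B)\le\delta$ by Proposition~\ref{theorem:pscard simplified}. On $E\cap B^{c}$ we have $\exError{h}\le\epsilon(\calS)\le\epsilon$, hence
\[
\bbP_{\calS,h}\bigl(\exError{h}\le\epsilon\bigr)\ \ge\ \bbP(E\cap B^{c})\ \ge\ \bbP(E)-\bbP(B)\ \ge\ (1-\lambda)-\delta\ =\ 1-\delta-\lambda,
\]
which is the claim. The only genuinely delicate point is the bookkeeping of the asymptotic remainders: one must verify that the $O(d_1^\star+\log d_1)$ term produced by Theorem~\ref{thm: interpolation probability of continuous networks appendix} is precisely the term appearing in the definition of $\BSCM{cont}$ (it is, since both descend from the same Stirling expansion of the Beta function in Corollary~\ref{cor: angles between high dimensional random Gaussian vectors appendix}), so that $-\log p_\calS\le\BSCM{cont}$ holds with the stated constant; everything else is a routine union bound.
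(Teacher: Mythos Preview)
Your proposal is correct and follows essentially the same approach as the paper: invoke Theorem~\ref{thm: interpolation probability of continuous networks appendix} on the margin event to bound $-\log p_\calS$ by $\BSCM{cont}$, feed this into Proposition~\ref{theorem:pscard simplified}, and combine via a union bound to absorb the $\lambda$ failure probability. The paper uses inclusion--exclusion on the two events, whereas you use the equivalent inequality $\bbP(E\cap B^{c})\ge\bbP(E)-\bbP(B)$; your formulation of $B$ as the event where the \emph{random} accuracy $\epsilon(\calS)$ fails (rather than the fixed $\epsilon$) is in fact slightly cleaner, since Proposition~\ref{theorem:pscard simplified} bounds that event unconditionally.
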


\begin{proof}
Under \asmref{asm: angular margin appendix}, \thmref{thm: interpolation probability of continuous networks} implies that w.p. at least $1 - \lambda$ over $\mathcal{S} \sim \mathcal{D}^N$, 
\begin{align*}
\hat{p}_{\mathcal{S}} \ge \exp{\left(d_{1}^{\star}d_{0}\log\left(\sin\left(\alpha\right)\right) + d_{1}\log\left(\sin\left(\gamma\right)\right) + O\left(d_1^{\star}\log{\left(d_0\right)} + \log{\left(d_1\right)}\right)\right)}\,.
\end{align*}
We denote this event by $\mathcal{E}_1$.
Recalling that $\BSCM{cont} \ge -\log\rb{\hat{p}_{\calS}}$ when conditioned on $\mathcal{E}_1$, from \propref{theorem:pscard simplified} we deduce that,
\begin{align*}
\bbP_{\calS \sim\calD^N, h \sim \mathcal{P}_{\mathcal{S}}} \rb{\exError{h} \leq \varepsilon} \ge 1-\delta\,.
\end{align*}
We denote this event by $\mathcal{E}_2$.
Using the inclusion exclusion principle,
\begin{align*}
\bbP_{\calS \sim\calD^N, h \sim \mathcal{P}_{\mathcal{S}}} \rb{\mathcal{E}_1 \cap \mathcal{E}_2} &= \bbP_{\calS \sim\calD^N, h \sim \mathcal{P}_{\mathcal{S}}} \rb{\mathcal{E}_1} + \bbP_{\calS \sim\calD^N, h \sim \mathcal{P}_{\mathcal{S}}} \rb{\mathcal{E}_2} - \bbP_{\calS \sim\calD^N, h \sim \mathcal{P}_{\mathcal{S}}} \rb{\mathcal{E}_1 \cup \mathcal{E}_2} \\
& \ge 1 - \lambda + 1 - \delta - 1 \\
& = 1 - \delta - \lambda \,.
\end{align*}
Therefore,
$$\bbP_{\calS \sim\calD^N, h \sim \mathcal{P}_{\mathcal{S}}} \rb{\exError{h} \leq \varepsilon} \ge 1-\delta\ - \lambda\,.$$
\end{proof}

\newpage
\section{Proofs for Sparsest Quantized Interpolator Learning Rule}
\label{app:sparse}

\subsection{Setting and Notation}
Given a directed graph $G=\rb{V,E}$ and $x \in V$, we use $d^{\mathrm{in}} \rb{x}$ to denote the in-degree of $x$, i.e.
\begin{align*}
    d^{\text{in}} \rb{x} \triangleq \sum_{y \in V} \mathbb{I} \bb{\rb{y, x} \in E}\,.
\end{align*}

Under the same quantization scheme as in \secref{sec:quant_nets}, consider the following learning rule:

\begin{definition}\label{def: sparsest app}
    $\mathcal{A}_{0}\rb{\calS}=h_{\btheta_0}$ returns the sparsest quantized interpolator,
    \begin{align*}
        \btheta_{0} = \argmin_{\btheta \in \mathcal{Q}^{M}} {\norm{\btheta}_{0}}\;\text{s.t}\;\forall n\in\bb{N}\;y^{\rb{n}}h_{\btheta}\rb{\mathbf{x}^{\rb{n}}}>0\,,
    \end{align*}
    where $\norm{\btheta}_0$ is the number of nonzero values in $\btheta$.
    With some abuse of notation, we use $\norm{\mathcal{A}_0 \rb{\mathcal{S}}}_0$ and $\norm{\btheta_0}_0$ interchangeably.
\end{definition}

Recall that we denote the total number of parameters in a teacher network $\teacher$ by 
\begin{align*}
    \ccfct = \sum_{l=1}^{L} \rb{d_l^{\star} d_{l-1}^{\star} + d_{l}^{\star}}\,.
\end{align*}
We additionally denote by
\begin{align*}
    \wcfct &= \sum_{l=1}^{L} d_l^{\star} d_{l-1}^{\star}\,, \\
    \bcfct &= \sum_{l=1}^{L} d_{l}^{\star}\,
\end{align*}
the maximal number of non-zero weights and biases in $\teacher$, respectively.
Denote the class of fully-connected neural networks with at most $\wcfct$ non-zero weights and $\bcfct$ biases as $\mathcal{H}_{\wcfct, \bcfct}$ \footnote{This is different from $\mathcal{H}_D^{FC}$ as no specific depth and hidden layer widths are assumed for $\mathcal{H}_{\wcfct, \bcfct}$}. 
Notice that the number of neural networks in $\mathcal{H}_{\wcfct, \bcfct}$ is bounded by the number of neural networks with $\ccfct$ edges and no bias terms.
We denote the set of such neural networks by $\mathcal{H}_{\ccfct}$, then
\begin{align} \label{eq: wb m ineq sparse appendix}
    \abs{\mathcal{H}_{\wcfct, \bcfct}} \le \abs{\mathcal{H}_{\ccfct}}\,.
\end{align}
We emphasize that $\mathcal{H}_{\wcfct, \bcfct}$ and $\mathcal{H}_{\ccfct}$ do not have fixed depth and hidden layer width, and contains models which do not conform to a specific $D$.

\subsection{Generalization Bound}

\begin{lemma}
The number of FCNs with $\norm{\mathcal{A}_0\rb{\calS}}_0$ non-zero parameters is upper bounded by $\abs{\mathcal{H}_{\ccfct}}$. 
\end{lemma}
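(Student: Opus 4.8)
The plan is to derive the claim from two facts: that $\mathcal{A}_0\rb{\calS}$ never uses more nonzero parameters than the teacher, and that any quantized fully-connected network with at most $\ccfct$ nonzero parameters computes a function already realized inside $\mathcal{H}_{\ccfct}$. For the first fact I would reuse the embedding from the construction of the set $\mathcal{E}$ in the proof of \thmref{lem:int_quantized fc app}: starting from the teacher $\teacher\in\Hfc{D^{\star}}$, place its weight matrices and bias vectors into the upper-left blocks $\mathbf{W}^{\rb{l}}_{11},\mathbf{b}^{\rb{l}}_1$ of a student parametrization $\btheta_{\star}\in\mathcal{Q}^{M\rb{D}}$ and set all remaining coordinates to $0$; since $\sigma\rb{0}=0$, this $\btheta_{\star}$ realizes $\teacher$, hence interpolates $\calS$ by realizability, and it has at most $\wcfct$ nonzero weights and $\bcfct$ nonzero biases, so $\norm{\btheta_{\star}}_0\le\wcfct+\bcfct=\ccfct$. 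As $\mathcal{A}_0\rb{\calS}$ minimizes $\norm{\cdot}_0$ over quantized interpolators (\defref{def: sparsest app}), this yields $\norm{\mathcal{A}_0\rb{\calS}}_0\le\ccfct$.

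Next I would show that every $h_{\btheta}$ with $\btheta\in\mathcal{Q}^{M\rb{D}}$ and $\norm{\btheta}_0\le\ccfct$ is functionally equivalent to some element of $\mathcal{H}_{\ccfct}$, via a three-step reduction that changes neither the computed function nor increases the count of nonzero parameters. \emph{Pruning}: repeatedly delete any neuron all of whose outgoing weights vanish, and any neuron whose incoming weights and bias all vanish (using $\sigma\rb{0}=0$ so such a neuron outputs $0$ and feeds nothing forward). \emph{De-biasing}: adjoin a single constant-$1$ input coordinate and replace each surviving nonzero bias $b^{\rb{l}}_i$ by an edge of weight $b^{\rb{l}}_i$ from that coordinate into neuron $i$ of layer $l$; the result is a bias-free network with at most $\norm{\btheta}_0\le\ccfct$ weighted edges. \emph{Forgetting the architecture}: regard the outcome simply as a bias-free weighted graph with at most $\ccfct$ edges over the $\le d_0+1$ input nodes and the $O\rb{\ccfct}$ surviving neurons, padding with zero-weight edges to obtain exactly $\ccfct$ of them; this object lies in $\mathcal{H}_{\ccfct}$ and computes $h_{\btheta}$. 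Consequently the set of functions computed by quantized FCNs with at most $\ccfct$ nonzero parameters is contained in the set of functions realized by $\mathcal{H}_{\ccfct}$.

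Combining the two parts: the FCNs with $\norm{\mathcal{A}_0\rb{\calS}}_0$ nonzero parameters form a subset of those with at most $\ccfct$ nonzero parameters, whose number of distinct functions is at most $\abs{\mathcal{H}_{\ccfct}}$; this proves the lemma (and, as used later, $\abs{\mathcal{H}_{\ccfct}}$ is bounded independently of the student width, by roughly $\rb{\ccfct+d_0}^{\ccfct}Q^{\ccfct}$, matching \eqref{eq:sparseC}). I expect the second part to be the only real obstacle: one must make the pruning/de-biasing map precise in every corner case --- in particular a constant neuron with zero fan-in but a nonzero bias whose output genuinely propagates forward --- and argue that $\abs{\mathcal{H}_{\ccfct}}$ is finite and free of the ambient student architecture. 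The first part is, by contrast, essentially the zero-padding argument already established for the quantized bounds.
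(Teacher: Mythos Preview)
Your proof is correct, and the first step---showing $\norm{\mathcal{A}_0(\calS)}_0\le\ccfct$ via the zero-padding embedding of the teacher---matches the paper's one-line argument exactly. The second step, however, is considerably more elaborate than what the paper does. The paper's entire argument there is a single sentence: ``The bound follows from the case in which all neurons have 0 bias, and the number of non-zero weights is equal to the number of non-zero parameters.'' In other words, the paper works purely at the level of \emph{parameter configurations}: treating each bias slot as just another edge slot gives an injection from configurations with $\le\ccfct$ nonzero parameters into bias-free configurations with $\le\ccfct$ nonzero edges, which is $\mathcal{H}_{\ccfct}$ by definition. No pruning, no functional equivalence, no corner cases.

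Your route---building an explicit function-preserving reduction via pruning, de-biasing with a constant-$1$ input, and forgetting the architecture---also works, and is arguably more honest about where the extra input node comes from (the paper silently elides the $d_0$ versus $d_0+1$ issue you flag). But the machinery you introduce is unnecessary here: because the downstream use is a finite-hypothesis PAC bound, one only needs a cardinality bound on the set of possible outputs of $\mathcal{A}_0$, and an injection on parameter configurations suffices without ever reasoning about which parameter settings compute the same function. The corner cases you worry about (constant neurons with nonzero bias and zero fan-in, etc.) simply do not arise in the counting argument.
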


\begin{proof}
    Since $\teacher$ is an interpolating solution, we have that $\norm{\mathcal{A}_0\rb{\calS}}_0 \le \ccfct$.
    The bound follows from the case in which all neurons have 0 bias, and the number of non-zero weights is equal to the number of non-zero parameters.
\end{proof}

Next, we note that any fully-connected neural network with no bias terms can be represented as a \textit{weighted directed acyclic graph} (WDAG) $G = \rb{V,E,w}$.
The vertices $V$ represent the neurons, and the network's non-zero weights are represented as weighted edges. 
Notice that the input neurons in an FCN are of 0 in-degree, and that all neurons are reachable from some input neuron.
This motivates us to define the following.

\begin{definition}\label{def: set of graphs}
Let $\tilde{\mathcal{G}}_{\ccfct, d_0}$ be the set of DAGs, $G=\rb{V,E}$, containing a subset $\Sigma\subseteq V$ with $0$ in-degree.
Such that
\begin{align*}
    \tilde{\mathcal{G}}_{\ccfct, d_0} \triangleq \cb{ G = \rb{V, E} \middle\vert \; G \; \text{is a DAG}, \; \abs{E} = \ccfct, \; \exists \Sigma \subseteq V\,:\, \abs{\Sigma}=d_0,\; \forall x \in \Sigma \; d^{\text{in}} \rb{x} = 0}\,.
\end{align*}
We say that a vertex $v \in V$ is reachable from $\Sigma$ if there exists some directed path from a vertex in $\Sigma$ to $v$.
With this notion, we further specify
\begin{align*}
    \mathcal{G}_{\ccfct, d_0} \triangleq \cb{ G = \rb{V, E} \in \tilde{\mathcal{G}}_{\ccfct, d_0} \, \middle\vert \, \forall v\in V \; \text{$v$ is reachable from $\Sigma$}}\,.
\end{align*}
That is, $\mathcal{G}_{\ccfct, d_0}$ is the subset of $\tilde{\mathcal{G}}_{\ccfct, d_0}$ in which any node is reachable from some node in $\Sigma$.
\end{definition}

Clearly, $\abs{\mathcal{G}_{\ccfct, d_0}}$ is an upper bound for $\abs{\mathcal{H}_{\ccfct}}$, so
\begin{align} \label{eq: sparse calH first bound appendix}
    \abs{\mathcal{H}_{\ccfct}} \le Q^{\ccfct} \abs{\mathcal{G}_{\ccfct, d_0}}\,.
\end{align}

\begin{lemma} \label{lem: sparse calG bound appendix}
\[
\\
\abs{\mathcal{G}_{\ccfct, d_0}} \le \frac{\left(\ccfct\left(\ccfct + d_{0}\right)\right)^{\ccfct}}{\ccfct!}\le\left(\ccfct+d_{0}\right)^{2\ccfct}\,.
\]
\end{lemma}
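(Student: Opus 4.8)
The plan is to encode a graph in $\mathcal{G}_{\ccfct,d_0}$ by its edge set after placing it on a canonical vertex set, and then to count the possible edge sets with a single binomial coefficient.

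First I would bound the number of vertices. For any $G=\rb{V,E}\in\mathcal{G}_{\ccfct,d_0}$, every vertex is reachable from $\Sigma$ along a directed path, so a vertex lying outside $\Sigma$ must have a predecessor and hence positive in-degree; therefore $\Sigma$ is exactly the set of in-degree-$0$ vertices, and each of the remaining $\abs{V}-d_0$ vertices has in-degree at least $1$. Since $\sum_{v\in V}d^{\mathrm{in}}(v)=\abs{E}=\ccfct$, this forces $\abs{V}-d_0\le \ccfct$, i.e.\ $\abs{V}\le d_0+\ccfct$. Counting graphs up to relabelling of vertices (which is what makes $\abs{\mathcal{G}_{\ccfct,d_0}}$ finite and is harmless for the downstream count of networks), I may then assume $\Sigma=\cb{1,\dots,d_0}$ and $V\setminus\Sigma\subseteq\cb{d_0+1,\dots,d_0+\ccfct}$.

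Under this normalisation every edge has its head among the $\ccfct$ non-source labels $\cb{d_0+1,\dots,d_0+\ccfct}$ (heads are non-sources) and its tail among the $d_0+\ccfct$ labels $\cb{1,\dots,d_0+\ccfct}$, so $E$ is a subset of size exactly $\ccfct$ of a fixed family of at most $\ccfct\,(\ccfct+d_0)$ candidate edges, and the normalised graph is determined by $E$. Hence
\begin{align*}
\abs{\mathcal{G}_{\ccfct,d_0}}\ \le\ \binom{\ccfct\rb{\ccfct+d_0}}{\ccfct}\ \le\ \frac{\rb{\ccfct\rb{\ccfct+d_0}}^{\ccfct}}{\ccfct!}\ \le\ \rb{\ccfct+d_0}^{2\ccfct}\,,
\end{align*}
where the middle step is the standard estimate $\binom{n}{k}\le n^{k}/k!$ and the last uses $\ccfct\le \ccfct+d_0$ together with $\ccfct!\ge 1$. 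It is precisely the restriction of edge \emph{heads} to the $\ccfct$ non-source vertices that yields the sharper intermediate factor $\ccfct(\ccfct+d_0)$ rather than $(\ccfct+d_0)^2$; using only $\abs{V}\le d_0+\ccfct$ still gives the final bound $(\ccfct+d_0)^{2\ccfct}$.

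The argument is short and almost entirely bookkeeping, so I do not expect a substantive obstacle. The only two points needing a little care are (i) that reachability forces $\Sigma$ to coincide exactly with the in-degree-$0$ vertices, so that the vertex count is genuinely at most $d_0+\ccfct$ and no stray isolated vertices inflate it, and (ii) that passing to the canonical labelling loses no graph because $\mathcal{G}_{\ccfct,d_0}$ is counted up to isomorphism; everything else is the binomial estimate above.
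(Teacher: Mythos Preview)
Your argument is correct and is essentially the paper's proof: both bound the number of non-source vertices by $\ccfct$ via the in-degree sum $\sum_v d^{\mathrm{in}}(v)=\abs{E}$, then count size-$\ccfct$ edge sets inside a fixed pool of $\ccfct(\ccfct+d_0)$ candidate edges. The only cosmetic difference is that the paper routes this through the standard bipartite encoding of a directed graph (tail-copies on one side, head-copies on the other, with the head side restricted to the $\le\ccfct$ non-source vertices), whereas you state the head/tail restriction directly; the resulting binomial $\binom{\ccfct(\ccfct+d_0)}{\ccfct}$ and the two inequalities that follow are identical.
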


\begin{proof}
We start from the basic property, that any directed graph
$G = \left(V,E\right)$ can be represented as a bipartite undirected graph $\tilde{G} = \left(V^{\prime}\cup V^{\prime\prime},\tilde{E}\right)$
where $V^{\prime},V^{\prime\prime}$ are copies of $V$ and 
\[
\tilde{E}=\left\{ \left\{ v_{1}^{\prime},v_{2}^{\prime\prime}\right\} \in V^{\prime}\times V^{\prime\prime}\middle\vert\left(v_{1},v_{2}\right)\in E\right\} \,.
\]
Let $G = \rb{V, E} \in \mathcal{G}_{\ccfct, d_0}$, $\Sigma \subseteq V$ the appropriate $0$-in-degree subset of nodes in $G$, and $\Tilde{G} = \rb{V^\prime \cup V^{\prime\prime}, \Tilde{E}}$ its corresponding bipartite representation.
By the definition of $\mathcal{G}_{\ccfct, d_0}$, every vertex $v \in V \setminus \Sigma$ is reachable from some $x \in \Sigma$ and therefore for all $v \in V \setminus \Sigma$, $d^{\mathrm{in}} \rb{v} \ge 1$ in G, and $\mathrm{deg} \rb{v^{\prime\prime}} \ge 1$ in $\Tilde{G}$.
Since for all $x \in \Sigma$, $d^{\mathrm{in}} \rb{x} = 0$, and $\abs{E} = \abs{\Tilde{E}} = \ccfct$, we can use the pigeonhole principle to deduce that $V = \Sigma \cup U$ where $U$ is a set of at most $\wcfct$ vertices, $\abs{U} \le \wcfct$.
Hence, any $G \in \mathcal{G}_{\ccfct, d_0}$ can be represented using an undirected bipartite graph $\hat{G} = \rb{\hat{V}^\prime \cup \hat{V}^{\prime\prime}, \hat{E}}$ such that 
\begin{align*}
    \abs{\hat{V}^{\prime}} \le \ccfct + d_0 \,, 
\end{align*}
\begin{align*}
    \abs{\hat{V}^{\prime\prime}} \le \ccfct \,, 
\end{align*}
\begin{align*}
    \abs{\hat{E}} = \ccfct \,,
\end{align*}
where $\hat{V}^{\prime}$ is a copy of $\Sigma \cup U$, and $\hat{V}^{\prime\prime}$ is a copy of $U$.
This means that we can bound $\abs{\mathcal{G}_{\ccfct, d_0}}$ with the number of such graphs.
The number of possible edges in $\hat{G}$ is 
\begin{align*}
    \abs{\hat{V}^{\prime}} \cdot \abs{\hat{V}^{\prime\prime}} = \left(\ccfct + d_{0}\right) \cdot \ccfct\,,
\end{align*}
so, overall, the number of such bipartite representations for graphs in $\mathcal{G}_{\ccfct, d_0}$ is
\[
\binom{\ccfct\left(\ccfct+d_{0}\right)}{\ccfct} \le \frac{\left(\ccfct\left(\ccfct+d_{0}\right)\right)^{\ccfct}}{\ccfct!} \le\frac{\left(\ccfct+d_{0}\right)^{2\ccfct}}{\ccfct!} \le\left(\ccfct + d_{0}\right)^{2\ccfct}
\]
and
\begin{align*}
    \abs{\mathcal{G}} \le \frac{\left(\ccfct\left(\ccfct+d_{0}\right)\right)^{\ccfct}}{\ccfct!}\,.
\end{align*}

\end{proof}

Collecting the bounds from \eqref{eq: wb m ineq sparse appendix}, \eqref{eq: sparse calH first bound appendix} and \lemref{lem: sparse calG bound appendix} we find the following corollaries.

\begin{corollary}
The number of $\ccfct$-sparse $Q$-quantized fully-connected neural networks is bounded by
\[
\left|\mathcal{H}_{\ccfct}\right| \leq \frac{\left(\ccfct\left(\ccfct+d_{0}\right)\right)^{\ccfct}}{\ccfct!} Q^{\ccfct} \le \left(\ccfct+d_{0}\right)^{2\ccfct} Q^{\ccfct}\,.
\]
\end{corollary}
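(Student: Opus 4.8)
The plan is to bound $\abs{\mathcal{H}_{\ccfct}}$ by decoupling the two sources of freedom in a sparse, bias-free quantized network: the \emph{combinatorial} choice of which weights are nonzero (equivalently, the topology of the underlying weighted directed acyclic graph on the neurons) and the \emph{arithmetic} choice of a quantized value in $\mathcal{Q}$ for each of those nonzero weights. All the substantive work has already been done upstream, so the corollary is essentially a matter of assembling \eqref{eq: sparse calH first bound appendix} and \lemref{lem: sparse calG bound appendix}.

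First I would recall that every $h\in\mathcal{H}_{\ccfct}$ is a bias-free fully-connected network with exactly $\ccfct$ nonzero weights; deleting the weight labels leaves an unweighted DAG whose source set is the $d_0$ input neurons and in which every neuron is reachable from a source, hence this DAG lies in $\mathcal{G}_{\ccfct,d_0}$ of Definition~\ref{def: set of graphs}. Since $h$ is recovered from its DAG by assigning one of at most $Q$ values to each of its $\ccfct$ edges, this gives exactly \eqref{eq: sparse calH first bound appendix}, namely $\abs{\mathcal{H}_{\ccfct}}\le Q^{\ccfct}\,\abs{\mathcal{G}_{\ccfct,d_0}}$. Then I would substitute the topology count from \lemref{lem: sparse calG bound appendix}, $\abs{\mathcal{G}_{\ccfct,d_0}}\le \frac{(\ccfct(\ccfct+d_0))^{\ccfct}}{\ccfct!}\le(\ccfct+d_0)^{2\ccfct}$ (the final step using $\ccfct\le \ccfct+d_0$ and $\ccfct!\ge1$), and multiply through by $Q^{\ccfct}$ to obtain the claimed chain of inequalities.

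The one genuinely nontrivial ingredient is \lemref{lem: sparse calG bound appendix} itself, which is already established in the excerpt; were one proving everything from scratch, that edge-counting argument would be the main obstacle. Its idea is to represent a directed graph as an undirected bipartite graph on two copies of the vertex set, then use the pigeonhole principle: because every non-source neuron has in-degree at least $1$, at most $\ccfct=\abs{E}$ neurons are non-source, so the two sides of the bipartite graph have at most $\ccfct+d_0$ and $\ccfct$ vertices respectively, giving at most $\ccfct(\ccfct+d_0)$ possible edges and hence at most $\binom{\ccfct(\ccfct+d_0)}{\ccfct}$ topologies. Correctly exploiting the ``reachable from $\Sigma$'' condition — so that the number of internal neurons is controlled by $\abs{E}$ rather than left unbounded — is the crux; given that, the corollary is pure bookkeeping.
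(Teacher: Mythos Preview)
Your proposal is correct and matches the paper's own argument: the corollary is obtained simply by combining \eqref{eq: sparse calH first bound appendix} with \lemref{lem: sparse calG bound appendix} and multiplying through by $Q^{\ccfct}$, exactly as you describe. Your additional recap of why \lemref{lem: sparse calG bound appendix} holds is accurate but not needed for the corollary itself, since that lemma is already proved upstream.
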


Using \thmref{Theorem: classic finite hypothesis class generalization result} we get,

\begin{corollary}
Let $\varepsilon > 0$ and $\delta \in \rb{0, 1}$.
With probability at least $1-\delta$ over $\mathcal{S} \sim \mathcal{D}^N$, 
\begin{align*}
    \exError{\mathcal{A}_0 \rb{\mathcal{S}}} \le \varepsilon
\end{align*}
when
\begin{align*}
    N \ge \frac{2 \ccfct \log\left( \ccfct +d_{0} \right) + \ccfct \log\left( Q \right) + \log\left(\frac{1}{\delta}\right)}{\varepsilon}\,.
\end{align*}
\end{corollary}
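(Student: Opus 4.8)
The plan is to reduce the statement to the classic finite-hypothesis-class bound (\thmref{Theorem: classic finite hypothesis class generalization result}), applied to the data-\emph{independent} class of all $Q$-quantized fully connected networks with at most $\ccfct$ nonzero parameters, whose cardinality has already been controlled by the graph-counting argument in \lemref{lem: sparse calG bound appendix} together with \eqref{eq: sparse calH first bound appendix} and \eqref{eq: wb m ineq sparse appendix}.

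The first step is to check that $\mathcal{A}_0(\calS)$ always falls inside this class, for every $\calS$. Since the teacher $\teacher$ has total parameter count $M(D^\star)=\ccfct$, embedding it into the student's parameter space $\mathcal{Q}^{M(D)}$ by zero-padding yields an interpolating parameter vector with at most $\ccfct$ nonzero entries; as $\mathcal{A}_0$ (\defref{def: sparsest app}) returns the interpolator of minimal $\norm{\cdot}_0$, we get $\norm{\mathcal{A}_0(\calS)}_0\le\ccfct$ unconditionally. Combined with the bias-to-edge identification behind \eqref{eq: wb m ineq sparse appendix}, this places $\mathcal{A}_0(\calS)$ in a fixed finite class $\mathcal{H}$ with $|\mathcal{H}|\le|\mathcal{H}_{\ccfct}|\le(\ccfct+d_0)^{2\ccfct}Q^{\ccfct}$, and this class is realizable because it contains the teacher.

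The second step is a direct application of \thmref{Theorem: classic finite hypothesis class generalization result}: with probability at least $1-\delta$ over $\calS\sim\mathcal{D}^N$, \emph{every} interpolating hypothesis in $\mathcal{H}$ has population error at most $\epsilon$ as soon as $N\ge\log(|\mathcal{H}|/\delta)/\epsilon$; since $\mathcal{A}_0(\calS)$ is an interpolating member of $\mathcal{H}$, it inherits this guarantee. Finally, bounding $\log(|\mathcal{H}|/\delta)\le 2\ccfct\log(\ccfct+d_0)+\ccfct\log Q+\log(1/\delta)$ turns the threshold into the one stated in the corollary.

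There is no genuine obstacle here beyond bookkeeping: the one point requiring care is the first step, i.e.\ exhibiting a single data-independent finite class that simultaneously (i) contains $\mathcal{A}_0(\calS)$ for every $\calS$, (ii) is realizable, and (iii) has the already-established cardinality bound — all of which is inherited from the sparsity of the teacher and the enumeration of sparse network topologies. The combinatorial heavy lifting (the bound on $|\mathcal{G}_{\ccfct,d_0}|$) lives entirely in \lemref{lem: sparse calG bound appendix} and is taken as given here.
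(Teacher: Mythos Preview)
Your proposal is correct and follows essentially the same approach as the paper: bound $\norm{\mathcal{A}_0(\calS)}_0\le\ccfct$ via the teacher, place $\mathcal{A}_0(\calS)$ in the fixed realizable class $\mathcal{H}_{\ccfct}$ with the cardinality bound $(\ccfct+d_0)^{2\ccfct}Q^{\ccfct}$, and invoke \thmref{Theorem: classic finite hypothesis class generalization result}. The paper records these same steps, just more tersely.
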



\end{document}